\documentclass[11pt]{article}

\usepackage[margin=1in]{geometry}
\usepackage{amsmath,amssymb}
\usepackage{mathtools}
\usepackage{booktabs,multirow,longtable}
\usepackage{graphicx}
\usepackage{xcolor}
\usepackage{url}
\usepackage[algo2e,ruled]{algorithm2e}
\usepackage{natbib}
\bibpunct{(}{)}{;}{a}{,}{,}
\usepackage{jmlrutils}
\usepackage[colorlinks=true,citecolor=blue,linkcolor=blue,urlcolor=blue]{hyperref}

\newcommand{\Ilow}{I_{\mathrm{low}}}
\newcommand{\Imid}{I_{\mathrm{mid}}}
\newcommand{\Ihigh}{I_{\mathrm{high}}}
\newcommand{\calR}{\mathcal{R}}
\newcommand{\calC}{\mathcal{C}}

\newcommand{\hatlam}{\hat{\lambda}}

\newcommand*{\propositionrefname}{Proposition}
\newcommand*{\propositionsrefname}{Propositions}
\newcommand*{\propositionref}[1]{%
  \objectref{#1}{\propositionrefname}{\propositionsrefname}{}{}}

\newenvironment{keywords}
{\bgroup\leftskip 20pt\rightskip 20pt \small\noindent{\bfseries
Keywords:} \ignorespaces}%
{\par\egroup\vskip 0.25ex}

\title{Scaling-Score Conformal Prediction for Multi-Target Regression}

\author{%
  Sylvain Rousseau \and Soundouss Messoudi \\[4pt]
  \normalsize Université de Technologie de Compiègne, CNRS, Heudiasyc, Compiègne, France \\
  \normalsize\texttt{sylvain.rousseau@hds.utc.fr}, \texttt{soundouss.messoudi@hds.utc.fr}
}
\date{}

\begin{document}

\maketitle

%% ==========================================================================
\begin{abstract}
Multi-target regression requires a model to simultaneously predict several
related outputs.  Conformal prediction provides distribution-free,
finite-sample marginal coverage guarantees, but extending these to joint
multi-dimensional regions in a model-agnostic, sample-efficient manner
remains challenging: max-aggregation ignores scale differences, copula-based
methods are only asymptotically valid, rectangular methods typically split
the calibration set, and quantile- or density-based methods require training
a specialised model beyond a plain point predictor.  We propose the
\emph{scaling-score} conformal method, which is model-agnostic (requires
only component-wise absolute residuals), uses a single calibration set, and
yields four nested output types: an \emph{outer rectangle} (SCO) with valid joint
coverage, the exact set $\calR_\alpha$, a \emph{staircase} ($\mathrm{SC}^2$) over-approximation
of $\calR_\alpha$, and an \emph{inner rectangle} (SCI).
A single hyperparameter $\gamma \in (0,1)$ controls the base-rectangle
quantile level independently of $\alpha$.  We prove downward-closedness and
a rectangular sandwich bound and derive a closed-form outer rectangle.
Experiments on 29 real-world datasets confirm valid joint coverage; $\mathrm{SC}^2$ with $\gamma=1-\alpha$ consistently achieves competitive volume
relative to baselines, with the advantage growing with output dimension $d$.
\end{abstract}

\begin{keywords}
conformal prediction, multi-target regression, joint prediction regions,
uncertainty quantification, nonconformity score
\end{keywords}

%% ==========================================================================
\section{Introduction}
\label{sec:intro}
%% ==========================================================================

Modern applications increasingly require the simultaneous prediction of several
correlated response variables.  Such settings arise whenever the response is
intrinsically multivariate: energy systems, environmental monitoring, finance,
and supply chains all produce outputs whose components must be predicted jointly.
In all these settings, downstream decisions depend on the \emph{joint} distribution of the
outputs, making a marginal error bar for each target individually insufficient.
The relevant object is a prediction \emph{region} $\mathcal{C}(x) \subseteq \mathbb{R}^d$
that simultaneously covers all $d$ components with a specified probability.

Conformal prediction \citep{vovk_algorithmic_2005} provides prediction regions
with a finite-sample, distribution-free coverage guarantee: under
exchangeability of the calibration and test data,
$\mathbb{P}[Y_{n+1} \in \mathcal{C}(X_{n+1})] \geq 1-\alpha$ for any miscoverage level
$\alpha \in (0,1)$, irrespective of the data-generating distribution.  The method
is model-agnostic: it wraps any point predictor, and the guarantee requires no
distributional assumption beyond exchangeability.  Extending these guarantees to
$d$-dimensional outputs in a \emph{simultaneously valid, model-agnostic, and
sample-efficient} way is non-trivial, and current methods each suffer from at
least one key weakness.

The simplest multi-target approach, max aggregation, takes the maximum of
per-target absolute residuals; it yields valid coverage but produces
axis-aligned hypercubes that are over-conservative when targets have
heterogeneous scales. Copula-based methods \citep{messoudi_copulabased_2021}
model the joint dependence structure through a copula and directly construct a
prediction region; coverage guarantees are at best asymptotic. CHR
\citep{sampson_conformal_2024} achieves valid rectangular regions with
per-target width adaptation, but requires splitting the calibration set in two,
reducing the effective sample size for the conformal quantile. Transductively
Standardized Conformal Prediction \citep{fan_interpretable_2025} is valid and
model-agnostic with a single calibration set, but standardizes by the
empirical mean and standard deviation of calibration residuals, which
implicitly assumes these moments are well-defined and well-estimated;
normalizing by calibration quantiles instead avoids this assumption, since a
quantile depends only on the relative order of the observations rather than
on every calibration point. Methods based on conformalized quantile
regression \citep{romano_conformalized_2019}, conditional density estimation
\citep{izbicki_cdsplit_2020}, or normalizing flows require a trained model
beyond a plain point predictor.

We introduce a \emph{scaling score} that, for a given residual vector, measures
the minimum factor by which a per-target \emph{base vector} must be scaled to
contain it.  The base vector is derived from per-target calibration quantiles at
a tunable level, encoding the relative scale of each output coordinate.  The
base vector is defined transductively: it depends on the test residual through a
clamping formula, so that all calibration and test scores share the same base
and exchangeability is preserved.  From a single calibration set, our method simultaneously produces four
\emph{nested} output types (\figureref{fig:regions}): an \emph{outer rectangle} (SCO), a
single axis-aligned hyperrectangle that is valid but conservative; the
\emph{exact prediction set} $\calR_\alpha$, which is the tightest valid
prediction set but defined implicitly via a cell decomposition; a
\emph{staircase region} ($\mathrm{SC}^2$), a union of at most $2^d$ axis-aligned hyperrectangles
that is a valid over-approximation of $\calR_\alpha$, computable at calibration
time; and an \emph{inner rectangle} (SCI), the most compact
of the four, tighter even than $\calR_\alpha$, but without a formal coverage
guarantee.

Our contributions are threefold.  First, we propose a model-agnostic,
single-calibration-set conformal score for multi-target regression that
simultaneously produces four nested output types, three of which carry valid joint coverage guarantees.  Second, we derive an
exact cell-based analytical characterisation of the prediction region,
rectangular sandwich bounds with closed-form expressions, and a proof of
downward-closedness.  Third, experiments on 29 real-world datasets confirm
valid joint coverage and competitive efficiency across a range of target
dimensions.

\sectionref{sec:background} reviews conformal prediction and existing multi-target
methods.  \sectionref{sec:setup} establishes notation and problem setup.
\sectionref{sec:method} develops the scaling-score method.  \sectionref{sec:experiments}
reports experiments on 29 real-world datasets.  \sectionref{sec:conclusion}
concludes.  Proofs are collected in \appendixref{app:proofs}; dataset details appear
in \appendixref{app:datasets}; full experimental figures appear in \appendixref{app:full-results}.

%% ==========================================================================
\section{Background and Related Work}
\label{sec:background}
%% ==========================================================================

\subsection{Conformal prediction for regression}
\label{sec:background-cp}

Let $(X_1,Y_1),\ldots,(X_n,Y_n),(X_{n+1},Y_{n+1})$ be exchangeable random
variables taking values in $\mathcal{X} \times \mathbb{R}^d$.  Given a
pre-trained predictor $\hat{f}: \mathcal{X} \to \mathbb{R}^d$, a
\emph{nonconformity score} $s(x,y)$ measures how poorly $\hat{f}(x)$
predicts~$y$. In split (inductive) conformal prediction
\citep{papadopoulos_inductive_2002,vovk_algorithmic_2005}, the predictor is fitted on
training data and the score evaluated on a held-out calibration set
$\mathcal{D}_{\mathrm{cal}} = \{(X_i,Y_i)\}_{i=1}^n$.  Setting $s_i = s(X_i,Y_i)$,
the conformal quantile is
\[
  \hat{q}_{1-\alpha} = \text{$\lceil(1-\alpha)(n+1)\rceil$-th smallest value in }
    \{s_1,\ldots,s_n,+\infty\},
\]
and the prediction region $\mathcal{C}(x) = \{y : s(x,y) \leq \hat{q}_{1-\alpha}\}$
satisfies $\mathbb{P}[Y_{n+1} \in \mathcal{C}(X_{n+1})] \geq 1-\alpha$ by a simple
rank argument \citep{lei_distributionfree_2018}.

Standard nonconformity scores for univariate regression include the absolute
residual $|y-\hat{f}(x)|$ and the conformalized quantile regression score
\citep{romano_conformalized_2019}.  Normalized scores
\citep{papadopoulos_inductive_2002,lei_distributionfree_2018} divide the residual by a
local scale estimate to improve conditional coverage.

\subsection{Conformal prediction for multi-target regression}
\label{sec:background-joint}

With $d$-dimensional outputs, the joint coverage event $\{Y_{n+1} \in
\mathcal{C}(X_{n+1})\}$ requires the entire response vector to lie within the
prediction region $\mathcal{C}(X_{n+1}) \subseteq \mathbb{R}^d$.  Achieving at
least $1-\alpha$ joint coverage while keeping the region small is harder than
marginal coverage of each component individually.

The simplest approach uses the Max ($\ell^\infty$) score
$s(x,y) = \max_{k \in [d]} |y^{(k)} - \hat{f}^{(k)}(x)|$,
which is valid by the standard conformal argument and produces a hypercube
of radius $\hat{q}_{1-\alpha}$ centered at $\hat{f}(x)$.  However, the
hypercube is driven by the coordinate with the largest residuals; when targets
have heterogeneous scales, the resulting region is over-conservative for the
other targets.  The Bonferroni correction applied coordinate-wise achieves
$1 - d\alpha'$ joint coverage at level $\alpha' = \alpha/d$ per dimension, but
this is even more conservative.

Copula-based methods \citep{messoudi_copulabased_2021} model the joint
dependence structure of the residuals directly.  Parametric copulas assume a
specific family for the joint distribution of errors; the empirical copula
instead uses the rank transform, mapping each error coordinate $e_i^{(k)}$
through its marginal ECDF $\hat{F}_k$ to place all targets on a common $[0,1]$
scale.  Both approaches produce axis-aligned rectangular regions.  However,
neither defines a nonconformity score in the split-conformal sense, so
finite-sample coverage is not guaranteed.  The empirical copula achieves only
asymptotic validity as $n \to \infty$; parametric copulas additionally require
that the residuals follow the assumed copula distribution.

CHR \citep{sampson_conformal_2024} (Conformal Multi-Target Hyperrectangles) is a valid split-conformal method.
It uses a two-stage calibration: a first portion of the calibration set
estimates per-target prediction intervals (capturing scale differences across
targets), and a second portion computes a scaled nonconformity score on top of
the estimated intervals.  The method achieves finite-sample joint coverage and
asymptotic balance (equal marginal coverage across targets).  The main
limitation is that the required calibration split reduces the effective sample
size available for the conformal quantile, which can inflate the prediction
region when $n$ is moderate or $d$ is large.

TSCP and TSCP-R \citep{fan_interpretable_2025} (Transductively Standardized Conformal Prediction) standardize the residual
vector by the empirical mean and standard deviation of calibration residuals in a
transductive fashion, preserving exchangeability.  Both are valid, model-agnostic,
and use a single calibration set.  The TSCP-R variant restricts to a
rectangular output region.  These methods are the closest antecedent to our work; the key
difference is that TSCP standardizes by location and scale (mean $\pm$ std),
while our method normalizes purely by calibration quantiles,
giving more direct control over the coverage level of the base rectangle.

Density-based conformal methods \citep{izbicki_cdsplit_2020} can produce non-rectangular
regions that capture correlation geometry, and may therefore achieve smaller
volume than rectangular methods. However, they require learning a conditional
density estimator in addition to a point predictor, which adds modeling
complexity. Non-rectangular regions are also harder to interpret and communicate
than simple axis-aligned boxes. \citet{dheur_multioutput_2025} provide a
unified comparative study of several such methods.

%% ==========================================================================
\section{Problem Setup and Notation}
\label{sec:setup}
%% ==========================================================================

Let $\mathcal{X}$ be a feature space and $Y \in \mathbb{R}^d$ the response,
and let $\hat{f}: \mathcal{X} \to \mathbb{R}^d$ be a pre-trained point predictor.
We observe a calibration set $\mathcal{D}_{\mathrm{cal}} = \{(X_i, Y_i)\}_{i=1}^n$
and assume that $\{(X_i, Y_i)\}_{i=1}^{n+1}$ are exchangeable,
the sole assumption underlying all validity guarantees.

Define the component-wise absolute calibration errors
\[
  e_i = |Y_i - \hat{f}(X_i)| \in \mathbb{R}_+^d, \quad i = 1,\ldots,n.
\]
For a candidate test response $y \in \mathbb{R}^d$ define
$e_{n+1} = e_{n+1}(y) = |y - \hat{f}(X_{n+1})|$.
The goal is to construct a prediction region
$\mathcal{C}(X_{n+1}) \subseteq \mathbb{R}^d$ such that
$\mathbb{P}[Y_{n+1} \in \mathcal{C}(X_{n+1})] \geq 1-\alpha$.

Write $[d] = \{1,\ldots,d\}$.  For a vector $a \in \mathbb{R}^d$, we write
$a^{(k)}$ for its $k$-th coordinate; the same convention applies to $b$,
$e$, $Y$, and all other vectors throughout.  For $a, b \in \mathbb{R}^d$, write
$a \preceq b$ for the component-wise partial order: $a^{(k)} \leq b^{(k)}$
for all $k \in [d]$.  Let $e^{(k)}_{(j)}$ denote the $j$-th order statistic
of $\{e_1^{(k)},\ldots,e_n^{(k)}\}$.

%% ==========================================================================
\section{The Scaling-Score Method}
\label{sec:method}
%% ==========================================================================

Our method constructs a rectangular prediction region from a per-target
\emph{base rectangle} whose side lengths encode the relative scale of each
output.  The base rectangle is then inflated by the smallest scaling factor
(\figureref{fig:scoring}) that covers a $1-\alpha$ fraction of the calibration
points.  The main subtlety is that the base rectangle depends on the test error
through a clamping formula, and handling this dependency correctly while
preserving exchangeability is the central challenge of the construction.
We describe the construction in four steps: the scaling score
(\sectionref{sec:method-score}), the lower and upper base rectangles
(\sectionref{sec:method-base}), the cell decomposition
(\sectionref{sec:method-cells}), and the prediction region characterisation
and its properties (\sectionref{sec:method-region}).

\subsection{Scaling score}
\label{sec:method-score}

Given a base vector $b \in \mathbb{R}_{++}^d$, the scaling score measures the
minimum factor by which $b$ must be scaled to contain a residual $e$
(\figureref{fig:scoring}).

\begin{definition}[Scaling score]\label{def:scaling-score}
For a base vector $b \in \mathbb{R}_{++}^d$ and any $e \in \mathbb{R}_+^d$,
the \emph{scaling score} is the minimum dilation factor needed to fit $e$
inside the base rectangle $\prod_{k=1}^d [0,b^{(k)}]$:
\begin{equation}\label{eq:sigma}
  \sigma(e;\,b) = \max_{k \in [d]} \frac{e^{(k)}}{b^{(k)}}.
\end{equation}
\end{definition}

Geometrically, bounding $\sigma(e;\,b)$ by a threshold $\lambda$ is equivalent
to requiring that $e$ lies in the axis-aligned rectangle obtained by scaling
$b$ coordinate-wise by $\lambda$:
\begin{equation}\label{eq:containment}
  \sigma(e;\,b) \leq \lambda \;\Longleftrightarrow\; e \preceq \lambda b
  \;\Longleftrightarrow\; e \in \prod_{k=1}^d \bigl[0,\,\lambda\,b^{(k)}\bigr].
\end{equation}

\subsection{Lower and upper base rectangles}
\label{sec:method-base}

Fix a level $\gamma \in (0,1)$ and set $m' = \lceil \gamma(n+1)\rceil$.
Define the lower and upper base rectangles via their per-coordinate bounds
\begin{equation}\label{eq:LU}
  L_k = e_{(m'-1)}^{(k)}, \qquad U_k = e_{(m')}^{(k)}, \quad k \in [d].
\end{equation}
We require $2 \le m' \le n$ so that $L_k = e^{(k)}_{(m'-1)}$ is a
well-defined order statistic ($m' \ge 2$) and $U_k = e^{(k)}_{(m')}$ is
finite ($m' \le n$); in practice we clip $m' =
\operatorname{clip}(\lceil\gamma(n+1)\rceil, 2, n)$.
By construction $0 \leq L_k \leq U_k$ for all $k$.

The base rectangle is constructed transductively: its side lengths (each in $[L_k, U_k]$) depend on the
test residual $e_{n+1}$ through the rank it would occupy in the augmented
calibration set, keeping all scores exchangeable.

\begin{definition}[Transductive base vector]\label{def:base-vector}
The \emph{base vector} $b_{n+1} \in \mathbb{R}_{++}^d$ is the $m'$-th order
statistic of the pooled set in each coordinate:
\[
  b^{(k)}_{n+1} = \text{$m'$-th order statistic of }
    \{e_i^{(k)}\}_{i=1}^{n+1}, \quad k \in [d].
\]
The associated \emph{base rectangle} is $B_{n+1} = \prod_{k=1}^d [0, b^{(k)}_{n+1}]$.
\end{definition}

The base vector depends on $e_{n+1}$ only through the rank of $e_{n+1}^{(k)}$
relative to the calibration values.  Specifically,
\begin{equation}\label{eq:bk-formula}
  b^{(k)}_{n+1} =
  \begin{cases}
    L_k             & \text{if } e_{n+1}^{(k)} \leq L_k, \\
    e_{n+1}^{(k)}   & \text{if } L_k < e_{n+1}^{(k)} \leq U_k, \\
    U_k             & \text{if } e_{n+1}^{(k)} > U_k,
  \end{cases}
\end{equation}
so $L_k \leq b^{(k)}_{n+1} \leq U_k$ for all $e_{n+1}$.
\figureref{fig:cells2d} illustrates the thresholds $L_k$ and $U_k$, the
resulting cell structure for $d = 2$, and the clamping relationship
between $e_{n+1}$ and $b_{n+1}$.

\begin{figure}[t]
\floatconts
  {fig:scoring-and-cells}%
  {\caption{Scaling score and cell decomposition ($d = 2$).}}%
  {%
    \subfigure[Scaling score for three residuals ($d=2$, base vector
      $(b_1,b_2)$).  The score $\sigma$ is the smallest factor by which the
      base rectangle must be scaled to contain the point.  All
      scaled-rectangle corners lie on the ray through $(b_1,b_2)$ (gray
      line).]{\label{fig:scoring}%
      \includegraphics[width=0.48\linewidth]{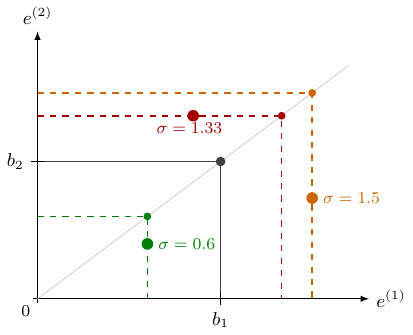}}%
    \hfill
    \subfigure[The $3^2 = 9$ cells of the decomposition for $d = 2$.
      The thresholds $L_1, U_1$ and $L_2, U_2$ divide $\mathbb{R}_+^2$
      into a $3 \times 3$ grid.  Corner cells (blue) have
      $\Imid = \varnothing$; mixed cells (red) have $|\Imid| \geq 1$.
      The clamping relationship between $e_{n+1}$ and $b_{n+1}$ is also
      visible: each coordinate of $b_{n+1}$ equals the clamp of the
      corresponding coordinate of $e_{n+1}$ to the interval $[L_k, U_k{]}$.]{\label{fig:cells2d}%
      \includegraphics[width=0.50\linewidth]{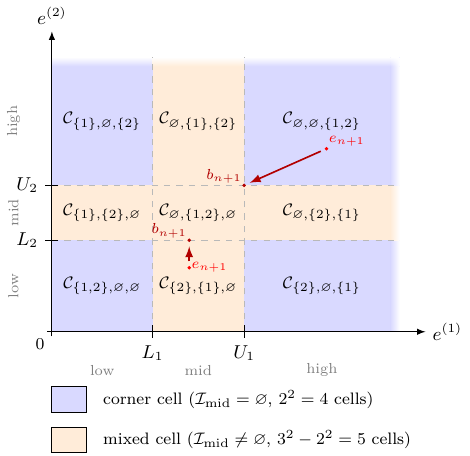}}%
  }
\end{figure}

In the transductive setting, all $n+1$ scores share the same base $b_{n+1}$:
\[
  \sigma_i(e_{n+1}) = \sigma(e_i;\,b_{n+1}),\quad i=1,\ldots,n+1.
\]

The following theorem is the main coverage guarantee of the method.  It states
that the prediction region defined via a conformal quantile achieves the
desired joint marginal coverage, under the sole assumption of exchangeability.

\begin{theorem}[Validity]\label{prop:validity}
Let $q = \lceil(1-\alpha)(n+1)\rceil$ and let $\hatlam(e_{n+1})$ denote the
$q$-th smallest value in $\{\sigma_1(e_{n+1}),\ldots,\sigma_n(e_{n+1}),+\infty\}$.
Define the prediction region in error space
\[
  \calR_\alpha
  = \bigl\{e_{n+1} \in \mathbb{R}_+^d :
      \sigma_{n+1}(e_{n+1}) \leq \hatlam(e_{n+1})\bigr\}.
\]
If $(e_1,\ldots,e_n,e_{n+1})$ are exchangeable, then
$\mathbb{P}(e_{n+1} \in \calR_\alpha) \geq 1-\alpha$.
\end{theorem}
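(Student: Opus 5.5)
The plan is to reduce the statement to the standard rank argument for split conformal prediction. The key is that the base vector $b_{n+1}$ is a \emph{symmetric} function of the pooled errors $\{e_1,\ldots,e_{n+1}\}$: by \definitionref{def:base-vector}, each coordinate $b^{(k)}_{n+1}$ is the $m'$-th order statistic of the multiset $\{e_i^{(k)}\}_{i=1}^{n+1}$, so it is invariant under any permutation of the indices $1,\ldots,n+1$. I will write $b = \beta(\{e_1,\ldots,e_{n+1}\})$ with $\beta$ permutation-invariant. I will also note that $b \in \mathbb{R}_{++}^d$ requires $L_k > 0$; I will either assume this, as the definition implicitly does, or adopt the convention that $x/0 = +\infty$ for $x>0$ and $0/0 = 0$. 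The argument below goes through under either choice.

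Step 1 (exchangeability of scores). Define $S_i = \sigma(e_i;\,\beta(\{e_1,\ldots,e_{n+1}\}))$ for $i = 1,\ldots,n+1$. Since $(S_1,\ldots,S_{n+1}) = G(e_1,\ldots,e_{n+1})$ for a map $G$ that commutes with permutations of its arguments, the exchangeability of $(e_1,\ldots,e_{n+1})$ transfers to the score vector. Concretely, for any permutation $\pi$,
\[
(S_{\pi(1)},\ldots,S_{\pi(n+1)}) = G(e_{\pi(1)},\ldots,e_{\pi(n+1)}) \overset{d}{=} G(e_1,\ldots,e_{n+1}).
\]
Evaluated at the true test error $e_{n+1} = |Y_{n+1}-\hat f(X_{n+1})|$, these scores coincide with $\sigma_i(e_{n+1})$ for $i \le n+1$, and $\hatlam(e_{n+1})$ is the $q$-th smallest of $\{S_1,\ldots,S_n,+\infty\}$.

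Step 2 (rank argument). The event $e_{n+1} \in \calR_\alpha$ is exactly the event $S_{n+1} \le \hatlam$. If $q > n$, then $\hatlam = +\infty$ and coverage is trivial. Otherwise I will use the standard deterministic fact: $S_{n+1} \le \hatlam$ holds if and only if $S_{n+1}$ is at most the $q$-th smallest value of the full multiset $\{S_1,\ldots,S_{n+1}\}$. To see this, note that if $S_{n+1}$ exceeds the $q$-th smallest of the first $n$ scores, then the $q$-th smallest of the pooled set equals that value and is smaller than $S_{n+1}$; the converse direction is handled similarly. By exchangeability, $\mathbb{P}(S_{n+1} \le S_{(q)}^{\mathrm{pool}}) = \frac{1}{n+1}\sum_j \mathbb{P}(S_j \le S_{(q)}^{\mathrm{pool}})$. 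Deterministically, at least $q$ indices $j$ satisfy $S_j \le S_{(q)}^{\mathrm{pool}}$, and ties only increase this count. Hence the probability is at least $q/(n+1) \ge 1-\alpha$.

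The main obstacle is the subtle point in Step 1. The scores $\sigma_i(e_{n+1})$ for $i\le n$ depend on $e_{n+1}$ through $b_{n+1}$, so the calibration scores are not fixed numbers as in ordinary split conformal prediction. The clamping formula \eqref{eq:bk-formula} obscures the symmetry, so the proof should work directly from the order-statistic definition rather than from the clamped form. This symmetric definition is what makes the joint score vector equivariant under permutations, and that equivariance is exactly what the rank argument needs. Ties are harmless because the argument only uses the inequality ``at least $q$ indices''.
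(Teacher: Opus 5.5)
Your proof is correct and follows the same route as the paper's: $b_{n+1}$ is a symmetric function of the pooled multiset $\{e_1,\ldots,e_{n+1}\}$, so the score vector is exchangeable, and the standard conformal rank argument finishes the proof. Your tie handling (at least $q$ indices satisfy $S_j \le S_{(q)}^{\mathrm{pool}}$) is slightly more rigorous than the paper's claim that the rank of $\sigma_{n+1}$ is uniformly distributed, since that claim needs almost-surely distinct scores, which residual vectors with atoms need not have.
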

\begin{proof}
See the \hyperref[proof:validity]{proof} in \appendixref{app:proofs}.
\end{proof}

The validity guarantee in \propositionref{prop:validity} establishes that $\calR_\alpha$
achieves the desired coverage, but the definition
$\calR_\alpha = \{e_{n+1} : \sigma_{n+1}(e_{n+1}) \leq \hatlam(e_{n+1})\}$
is an \emph{implicit} equation: the construction is \emph{transductive}, meaning
the base vector $b_{n+1}$ depends on $e_{n+1}$ through the clamping
formula~\eqref{eq:bk-formula}, so every candidate test error induces a
different calibration ranking and, potentially, a different quantile threshold.
Computing $\calR_\alpha$ explicitly therefore requires a dedicated analysis.  To handle this, we decompose $\mathbb{R}_+^d$ into a finite collection of
\emph{cells} and treat each cell separately: conditioned on the test error
$e_{n+1}$ belonging to a given cell, the base vector $b_{n+1}$ is constant
throughout that cell, so the calibration scores and the conformal quantile can
be computed analytically.

\subsection{Cell decomposition and scores}
\label{sec:method-cells}

The thresholds $L_k$ and $U_k$ partition the $k$-th coordinate axis into three
intervals: below $L_k$ (lower region), between $L_k$ and $U_k$ (middle
region), and above $U_k$ (upper region).  Taking the product over all $d$
coordinates yields a partition of $\mathbb{R}_+^d$ into $3^d$ cells.  Each
cell is characterised by specifying, for every coordinate $k \in [d]$, whether
that coordinate falls in the lower, middle, or upper interval.  We encode this
by a triple $(\Ilow, \Imid, \Ihigh)$, a partition of $[d]$, where $\Ilow$
collects the coordinates in the lower region, $\Imid$ those in the middle
region, and $\Ihigh$ those in the upper region.  \figureref{fig:cells2d} illustrates
the $3^2 = 9$ cells for $d = 2$.  This is captured formally by the following
definition.

\begin{definition}[Cells]\label{def:cell}
For a partition $(\Ilow, \Imid, \Ihigh)$ of $[d]$, the corresponding
\emph{cell} is
% Old single-line version (kept for reference):
% \calC_{\Ilow,\Imid,\Ihigh} = \bigl\{e \in \mathbb{R}_+^d :
%   e^{(k)} \leq L_k\;\forall k \in \Ilow,\quad
%   L_k < e^{(k)} \leq U_k\;\forall k \in \Imid,\quad
%   e^{(k)} > U_k\;\forall k \in \Ihigh\bigr\}.
\[
  \calC_{\Ilow,\Imid,\Ihigh} = \left\{e \in \mathbb{R}_+^d :
  \begin{array}{ll}
    e^{(k)} \leq L_k, & \forall k \in \Ilow,\\[2pt]
    L_k < e^{(k)} \leq U_k, & \forall k \in \Imid,\\[2pt]
    e^{(k)} > U_k, & \forall k \in \Ihigh
  \end{array}
  \right\}.
\]
\end{definition}

Assuming the test error $e_{n+1}$ belongs to cell $\calC_{\Ilow,\Imid,\Ihigh}$,
the base vector~\eqref{eq:bk-formula} is fully determined by the partition:
$b^{(k)}_{n+1} = L_k$ for $k \in \Ilow$,
$b^{(k)}_{n+1} = e_{n+1}^{(k)}$ for $k \in \Imid$, and
$b^{(k)}_{n+1} = U_k$ for $k \in \Ihigh$.
\figureref{fig:cells2d} illustrates how $b_{n+1}$ depends on $e_{n+1}$ in the
three-region partition for each coordinate.

Cells with $\Imid = \varnothing$ are called \emph{corner cells}: the base
vector does not depend on the test error throughout the cell, and we denote
its value $b_{\Ilow,\Ihigh}$.  Cells with $\Imid \neq \varnothing$ are
\emph{mixed cells}.  There are $2^d$ corner cells and $3^d - 2^d$ mixed cells.
\figureref{fig:cells2d} illustrates both types for $d = 2$.

Given the scaling score $\sigma(e;\,b) = \max_k e^{(k)}/b^{(k)}$ and the
cell formula for $b_{n+1}$, the scores $\sigma_{n+1}$ and $\sigma_i$ take a
simple explicit form on each cell, as summarised in the following lemma.

\begin{lemma}[Scores in any cell]\label{lem:scores}
Let $e_{n+1} \in \calC_{\Ilow,\Imid,\Ihigh}$.
With the convention $\max_\varnothing:=0$:
\begin{align}
  \sigma_{n+1}(e_{n+1}) &= \max\Bigl(
    \max_{k \in \Ilow}\frac{e_{n+1}^{(k)}}{L_k},\;
    \max_{k \in \Imid} 1,\;
    \max_{k \in \Ihigh}\frac{e_{n+1}^{(k)}}{U_k}
  \Bigr), \label{eq:test-score}\\[4pt]
  \sigma_i(e_{n+1}) &= \max\Bigl(
    \max_{k \in \Ilow}\frac{e_i^{(k)}}{L_k},\;
    \max_{k \in \Imid}\frac{e_i^{(k)}}{e_{n+1}^{(k)}},\;
    \max_{k \in \Ihigh}\frac{e_i^{(k)}}{U_k}
  \Bigr), \quad i \leq n. \label{eq:cal-score}
\end{align}
When $e_{n+1}$ belongs to a corner cell ($\Imid = \varnothing$),
$\sigma_i(e_{n+1})$ is constant on the cell.
When $e_{n+1}$ belongs to a mixed cell ($\Imid \neq \varnothing$),
$\sigma_{n+1}(e_{n+1}) \geq 1$, and the mid terms in
$\sigma_i(e_{n+1})$ are non-increasing in each $e_{n+1}^{(k)}$, $k \in \Imid$.
\end{lemma}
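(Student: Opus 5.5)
The argument is a direct substitution of the cell formula for $b_{n+1}$ into Definition~\ref{def:scaling-score}, followed by reading off each claimed property. Fix $e_{n+1}\in\calC_{\Ilow,\Imid,\Ihigh}$. By the clamping formula~\eqref{eq:bk-formula}, $b^{(k)}_{n+1}$ equals $L_k$ for $k\in\Ilow$, equals $e^{(k)}_{n+1}$ for $k\in\Imid$, and equals $U_k$ for $k\in\Ihigh$. Since $\sigma(e;b)=\max_{k\in[d]} e^{(k)}/b^{(k)}$ and $\Ilow,\Imid,\Ihigh$ partition $[d]$, the maximum over $[d]$ splits into three maxima, one over each index set. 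The convention $\max_\varnothing:=0$ is harmless because every term is nonnegative, as $e\in\mathbb{R}_+^d$. Substituting the three cases gives~\eqref{eq:cal-score} for $i\le n$. For $i=n+1$, each term with $k\in\Imid$ is $e^{(k)}_{n+1}/e^{(k)}_{n+1}=1$; this is legitimate because $e^{(k)}_{n+1}>L_k\ge 0$ on a middle coordinate. Substituting gives~\eqref{eq:test-score}.

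Before this, I would check that every denominator is positive, since Definition~\ref{def:scaling-score} requires $b\in\mathbb{R}_{++}^d$. Middle and upper coordinates are no problem, because $U_k\ge e^{(k)}_{n+1}>L_k\ge 0$ there. The only delicate case is $k\in\Ilow$ with $L_k=0$. Here I would either invoke the standing assumption $b_{n+1}\in\mathbb{R}_{++}^d$ of Definition~\ref{def:base-vector}, which forces $L_k>0$ whenever $\Ilow$ is used, or adopt the convention $0/0=0$ and $c/0=+\infty$ for $c>0$. I expect this to be the only real obstacle, and it is a matter of conventions rather than mathematics. I would state it explicitly in a remark.

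The remaining claims then follow by inspecting the formulas. For a corner cell, $\Imid=\varnothing$, so~\eqref{eq:cal-score} involves only $e_i$, $L_k$ and $U_k$. None of these depends on $e_{n+1}$, so $\sigma_i$ is constant on the cell. For a mixed cell, $\Imid\neq\varnothing$, so the middle block of~\eqref{eq:test-score} contributes the value $1$ to the outer maximum, giving $\sigma_{n+1}\ge 1$. Finally, for $k\in\Imid$ each middle term $e_i^{(k)}/e_{n+1}^{(k)}$ has a fixed nonnegative numerator and a positive denominator. It is therefore non-increasing in $e^{(k)}_{n+1}$ and does not depend on the other coordinates of $e_{n+1}$. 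Hence the mid block $\max_{k\in\Imid} e_i^{(k)}/e_{n+1}^{(k)}$ is non-increasing in each $e^{(k)}_{n+1}$, $k\in\Imid$, because a maximum of functions that are each non-increasing in a given variable is itself non-increasing in that variable.
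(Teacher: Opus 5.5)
Your proof is correct and is the same direct substitution the paper uses: plug the clamped base vector into $\sigma(e;b)=\max_k e^{(k)}/b^{(k)}$, split the maximum over $\Ilow$, $\Imid$, $\Ihigh$, and read off each property. Your denominator check goes beyond the paper, but it contains one slip: on $\Ihigh$ one has $e^{(k)}_{n+1} > U_k$, not $U_k \ge e^{(k)}_{n+1}$, so the denominator $U_k$ can vanish. The paper's own Births-2 example has $U_k=0$, so high coordinates are as delicate as low ones. Both of your remedies still cover this case, namely the standing assumption $b_{n+1}\in\mathbb{R}_{++}^d$ or extended-real conventions.
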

\begin{proof}
See the \hyperref[proof:scores]{proof} in \appendixref{app:proofs}.
\end{proof}

\subsection{Prediction region characterisation}
\label{sec:method-region}

Having partitioned $\mathbb{R}_+^d$ into cells, we now characterise
$\calR_\alpha$ by working one cell at a time.  The key observation is the
following: \emph{assuming the test error $e_{n+1}$ belongs to a given cell},
the base vector $b_{n+1}$ is fully determined and remains constant for all
test errors confined to that cell (in corner cells) or depends on $e_{n+1}$
only through an explicit formula (in mixed cells).  As a consequence, all
calibration scores $\{\sigma_i(e_{n+1})\}_{i=1}^n$ are determined, the
conformal quantile $\hatlam(b_{n+1})$ can be computed analytically.  For
corner cells, the base vector is fixed and the region boundary has an explicit
closed form.  For mixed cells, the base vector still depends on $e_{n+1}$
through its mid coordinates, so the boundary retains an implicit form; we
characterise it analytically and provide sandwich bounds.  We treat corner
cells and mixed cells in turn.

\subsubsection{Corner cells}

When the test error $e_{n+1}$ belongs to a corner cell
$\calC_{\Ilow,\varnothing,\Ihigh}$, the base vector $b_{\Ilow,\Ihigh}$ does not depend on the test error, so all calibration scores
\[
  \sigma_i^{\Ilow,\Ihigh}
  = \max\Bigl(
      \max_{k \in \Ilow}\frac{e_i^{(k)}}{L_k},\;
      \max_{k \in \Ihigh}\frac{e_i^{(k)}}{U_k}
    \Bigr)
\]
are independent of $e_{n+1}$, and the conformal quantile
$\hatlam_{\Ilow,\Ihigh}$ is a fixed number.

\begin{proposition}[Exact region on corner cells]\label{prop:corner-exact}
Let $\hatlam = \hatlam_{\Ilow,\Ihigh}$.
The exact prediction region restricted to the corner cell is
\begin{equation}\label{eq:corner-region}
  \calR_\alpha \cap \calC_{\Ilow,\varnothing,\Ihigh}
  =
  \prod_{k \in \Ilow} \bigl[0,\,\min(\hatlam,1)\,L_k\bigr]
  \;\times\;
  \prod_{k \in \Ihigh} \bigl(U_k,\,\hatlam\,U_k\bigr],
\end{equation}
with the convention that $(U_k,\,\hatlam\,U_k] = \varnothing$
when $\hatlam \leq 1$.
\end{proposition}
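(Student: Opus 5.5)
Fix a test error $e_{n+1}$ in the corner cell $\calC_{\Ilow,\varnothing,\Ihigh}$ and unfold the definition of $\calR_\alpha$ on that cell. By \lemmaref{lem:scores}, the base vector on the cell is the constant $b_{\Ilow,\Ihigh}$, with coordinates $L_k$ for $k\in\Ilow$ and $U_k$ for $k\in\Ihigh$. Hence every calibration score equals $\sigma_i^{\Ilow,\Ihigh}$, and the quantile $\hatlam(e_{n+1})$ equals the fixed number $\hatlam=\hatlam_{\Ilow,\Ihigh}$ for every point of the cell. Membership $e_{n+1}\in\calR_\alpha$ then reduces to $\sigma(e_{n+1};b_{\Ilow,\Ihigh})\le\hatlam$. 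By the containment equivalence~\eqref{eq:containment}, this is the coordinatewise condition $e_{n+1}^{(k)}\le\hatlam L_k$ for $k\in\Ilow$ and $e_{n+1}^{(k)}\le\hatlam U_k$ for $k\in\Ihigh$.

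Next, intersect these constraints with the cell constraints from Definition~\ref{def:cell}, one coordinate at a time.
\begin{itemize}
\item For $k\in\Ilow$: the conditions are $0\le e^{(k)}\le L_k$ and $e^{(k)}\le\hatlam L_k$. Together they give $e^{(k)}\in[0,\min(\hatlam,1)L_k]$.
\item For $k\in\Ihigh$: the conditions are $e^{(k)}>U_k$ and $e^{(k)}\le\hatlam U_k$. Together they give $e^{(k)}\in(U_k,\hatlam U_k]$. This interval is empty exactly when $\hatlam U_k\le U_k$, i.e.\ when $\hatlam\le 1$, provided $U_k>0$. That matches the stated convention.
\end{itemize}
Since all constraints are separable across coordinates, the intersection is the stated Cartesian product. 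If $\Ihigh\ne\varnothing$ and $\hatlam\le1$, the whole product is empty. This is consistent, because then no point of the cell can satisfy the test inequality.

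The main obstacle is not the algebra but degenerate boundary cases.
\begin{itemize}
\item If some $L_k=0$, the ratio $e^{(k)}/L_k$ is undefined. On the cell, $e^{(k)}\le L_k$ forces $e^{(k)}=0$, and the target interval $[0,0]$ is consistent with this. I would either adopt the convention $0/0=0$ (and $c/0=+\infty$ for $c>0$), or assume $L_k>0$, as implied by $b\in\mathbb{R}_{++}^d$ in Definition~\ref{def:base-vector}.
\item If $\hatlam=+\infty$, the test constraints vanish. With $\min(\infty,1)=1$ and $(U_k,\infty]$ read as $(U_k,\infty)$, the formula still holds.
\item I would also verify that the constant-quantile claim uses the same $\hatlam$ for all points of the cell. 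This holds because \lemmaref{lem:scores} gives the calibration scores exactly, with no dependence on $e_{n+1}$, when $\Imid=\varnothing$.
\end{itemize}
With these conventions checked, the two-case coordinatewise intersection finishes the proof.
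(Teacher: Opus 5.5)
Your proposal is correct and follows the paper's own proof. Both arguments use the fact that the quantile is constant on the corner cell, turn $\sigma_{n+1}\le\hatlam$ into coordinatewise bounds, and intersect these one coordinate at a time with the cell constraints, giving $[0,\min(\hatlam,1)L_k]$ for low coordinates and $(U_k,\hatlam U_k]$ for high coordinates. Your treatment of the degenerate cases ($L_k=0$, $\hatlam=+\infty$, and $U_k>0$ in the emptiness criterion) is a useful addition that the paper leaves implicit.
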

\begin{proof}
See the \hyperref[proof:corner-exact]{proof} in \appendixref{app:proofs}.
\end{proof}

Two boundary cases of \propositionref{prop:corner-exact} are worth noting.
When $\Ihigh \neq \varnothing$, the intersection~\eqref{eq:corner-region} is
non-empty if and only if $\hatlam > 1$: the conformal region extends beyond
$U_k$ only when the quantile exceeds $1$.
When $\Ihigh = \varnothing$ (all-low corner cell), the region always contains
the origin and equals the full cell $\prod_{k=1}^d [0, L_k]$ if and only if
$\hatlam \geq 1$.

\subsubsection{Mixed cells}

When the test error $e_{n+1}$ belongs to a mixed cell
$\calC_{\Ilow,\Imid,\Ihigh}$ (with $\Imid \neq \varnothing$), the base vector
$b_{n+1}$ depends on $e_{n+1}$ through the mid coordinates
(since $b_{n+1}^{(k)} = e_{n+1}^{(k)}$ for $k \in \Imid$).  As a consequence,
the calibration scores $\{\sigma_i(e_{n+1})\}$ and the conformal quantile also
depend on $e_{n+1}$, but only through its $\Imid$ coordinates; we write
$\hatlam(e_{n+1}^{(\Imid)})$ to make this dependence explicit.

\begin{proposition}[Exact region on mixed cells]\label{prop:mixed-exact}
Assuming the test error $e_{n+1}$ belongs to a mixed cell
$\calC_{\Ilow,\Imid,\Ihigh}$:
\begin{enumerate}
\item If $\Ihigh = \varnothing$: $\sigma_{n+1}(e_{n+1}) = 1$, and
\[
  \calR_\alpha \cap \calC_{\Ilow,\Imid,\varnothing}
  = \bigl\{e_{n+1} \in \calC_{\Ilow,\Imid,\varnothing} :
    \hatlam(e_{n+1}^{(\Imid)}) \geq 1\bigr\}.
\]
\item If $\Ihigh \neq \varnothing$: $\sigma_{n+1}(e_{n+1}) =
  \max_{k \in \Ihigh} e_{n+1}^{(k)}/U_k$, and
\[
  \calR_\alpha \cap \calC_{\Ilow,\Imid,\Ihigh}
  = \bigl\{e_{n+1} \in \calC_{\Ilow,\Imid,\Ihigh} :
    e_{n+1}^{(k)} \leq \hatlam(e_{n+1}^{(\Imid)})\,U_k\;\forall k \in \Ihigh\bigr\}.
\]
\end{enumerate}
In both cases, the low coordinates are unconstrained within the cell: for
$k \in \Ilow$, cell membership already forces $e_{n+1}^{(k)} \leq L_k =
b_{n+1}^{(k)}$, so the low terms $e_{n+1}^{(k)}/b_{n+1}^{(k)} \leq 1$ never
drive $\sigma_{n+1}$, and $\sigma_{n+1} \leq \hatlam$ imposes no further
constraint on them beyond $e_{n+1}^{(k)} \leq L_k$.
\end{proposition}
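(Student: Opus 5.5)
The proof is a direct specialisation of the cell formulas in \lemmaref{lem:scores} to the definition of $\calR_\alpha$ in \propositionref{prop:validity}. I would fix a mixed cell $\calC_{\Ilow,\Imid,\Ihigh}$ with $\Imid \neq \varnothing$ and take an arbitrary $e_{n+1}$ in it. By \eqref{eq:bk-formula}, $b_{n+1}^{(k)} = L_k$ on $\Ilow$, $e_{n+1}^{(k)}$ on $\Imid$, and $U_k$ on $\Ihigh$. Hence the calibration scores \eqref{eq:cal-score}, and so the quantile $\hatlam(e_{n+1})$, depend on $e_{n+1}$ only through $e_{n+1}^{(\Imid)}$. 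This justifies writing $\hatlam(e_{n+1}^{(\Imid)})$.

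Next I would evaluate the test score \eqref{eq:test-score} term by term.
\begin{itemize}
\item For $k \in \Ilow$, cell membership gives $e_{n+1}^{(k)} \le L_k$, so the ratio is at most $1$. If $L_k = 0$, then $e_{n+1}^{(k)} = 0$; I would treat this with the convention $0/0 := 0$, or equivalently assume $L_k > 0$ as implied by $b_{n+1} \in \mathbb{R}_{++}^d$.
\item For $k \in \Imid$, the ratio is exactly $1$, and $\Imid \neq \varnothing$ makes this term present.
\item For $k \in \Ihigh$, the ratio $e_{n+1}^{(k)}/U_k$ exceeds $1$.
\end{itemize}
It follows that $\sigma_{n+1} = 1$ when $\Ihigh = \varnothing$, and $\sigma_{n+1} = \max_{k\in\Ihigh} e_{n+1}^{(k)}/U_k > 1$ otherwise. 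The low terms never attain the maximum strictly, so at most they tie at $1$ with the mid terms, which does not change the value.

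Finally I would substitute into the membership condition $\sigma_{n+1}(e_{n+1}) \le \hatlam(e_{n+1}^{(\Imid)})$.
\begin{itemize}
\item In case 1 the condition reads $1 \le \hatlam(e_{n+1}^{(\Imid)})$.
\item In case 2, the condition $\max_{k\in\Ihigh} e_{n+1}^{(k)}/U_k \le \hatlam$ is equivalent to $e_{n+1}^{(k)} \le \hatlam\, U_k$ for every $k\in\Ihigh$, since $U_k > 0$. Here I would note that $U_k \ge L_k > 0$.
\end{itemize}
The closing remark about the low coordinates follows because they enter neither $\sigma_{n+1}$ nor $\hatlam$. The only constraint on them is cell membership, $e_{n+1}^{(k)} \le L_k$.

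I expect no step to be a serious obstacle. The only delicate point is bookkeeping: checking that the quantile in the definition of $\calR_\alpha$ is evaluated at the same $e_{n+1}$ for which cell membership is assumed, so the substitution is legitimate pointwise. A secondary point is the degenerate case $L_k = 0$ (or $U_k = 0$), which I would dispatch by the positivity assumption on base vectors.
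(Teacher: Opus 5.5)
Your proposal is correct and follows essentially the same route as the paper: substitute the cell formula for $b_{n+1}$, note that the low ratios are at most $1$ and the mid ratios equal $1$, so that $\sigma_{n+1} = \max(1, \max_{k \in \Ihigh} e_{n+1}^{(k)}/U_k)$, and then read off both items from the membership condition $\sigma_{n+1} \le \hatlam$. Your explicit treatment of the degenerate cases $L_k = 0$ or $U_k = 0$, using positivity of the base vector, is a small bookkeeping step that the paper leaves implicit.
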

\begin{proof}
See the \hyperref[proof:mixed-exact]{proof} in \appendixref{app:proofs}.
\end{proof}

Item~1 of \propositionref{prop:mixed-exact} ($\Ihigh = \varnothing$) reduces to the
single condition $\hatlam(e_{n+1}^{(\Imid)}) \geq 1$: since $\sigma_{n+1} = 1$
throughout this cell, inclusion in $\calR_\alpha$ depends only on whether the
conformal quantile reaches~$1$.  In item~2 ($\Ihigh \neq \varnothing$), the
condition additionally constrains the high coordinates via
$\hatlam(e_{n+1}^{(\Imid)})$.  Both cases are \emph{implicit}: $\hatlam(e_{n+1}^{(\Imid)})$
depends on $e_{n+1}$ through the mid coordinates, and there is in general no
closed-form description of $\calR_\alpha$ on mixed cells.
Nevertheless, $\hatlam(e_{n+1}^{(\Imid)})$ can be bracketed analytically,
as the following proposition shows.

\begin{proposition}[Monotonicity and interpolation]%
\label{prop:mixed-monotone}
On a mixed cell $\calC_{\Ilow,\Imid,\Ihigh}$:
\begin{enumerate}
\item $\hatlam(e_{n+1}^{(\Imid)})$ is non-increasing in each mid coordinate.
\item At the boundary $e_{n+1}^{(k)} \to L_k^+$ for all $k \in \Imid$:
  $\hatlam \to \hatlam_{\Ilow \cup \Imid,\,\Ihigh}$.
\item At $e_{n+1}^{(k)} = U_k$ for all $k \in \Imid$:
  $\hatlam = \hatlam_{\Ilow,\,\Imid \cup \Ihigh}$.
\end{enumerate}
Hence $\hatlam_{\Ilow,\,\Imid \cup \Ihigh}
\leq \hatlam(e_{n+1}^{(\Imid)})
\leq \hatlam_{\Ilow \cup \Imid,\,\Ihigh}$ throughout the cell.
\end{proposition}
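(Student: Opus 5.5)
The argument rests on a simple fact about the conformal quantile. If every calibration score $\sigma_i$ is non-increasing in a parameter, then so is $\hatlam$. The reason is that $\hatlam$ is the $q$-th smallest element of $\{\sigma_1,\ldots,\sigma_n,+\infty\}$, and an order statistic is a coordinatewise non-decreasing function of its arguments. Concretely, if $\sigma_i \le \sigma_i'$ for all $i$, then the $q$-th smallest value of the first family is at most that of the second. The $+\infty$ entry is fixed and plays no role.

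For item~1, I use \lemmaref{lem:scores} on the mixed cell. It gives each $\sigma_i(e_{n+1})$ as a maximum of three kinds of terms:
\begin{itemize}
\item the low terms $e_i^{(k)}/L_k$ and the high terms $e_i^{(k)}/U_k$, which are constant on the cell;
\item the mid terms $e_i^{(k)}/e_{n+1}^{(k)}$ for $k\in\Imid$, which are non-increasing in $e_{n+1}^{(k)}$.
\end{itemize}
A maximum of non-increasing functions is non-increasing. So each $\sigma_i$ is non-increasing in each mid coordinate, and by the fact above so is $\hatlam$.

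For items~2 and~3, I compare the mixed-cell scores with the corner-cell scores $\sigma_i^{\Ilow\cup\Imid,\Ihigh}$ and $\sigma_i^{\Ilow,\Imid\cup\Ihigh}$. These are obtained by replacing the denominator $e_{n+1}^{(k)}$ with $L_k$, respectively $U_k$, for $k\in\Imid$.

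Item~3 is direct substitution. Setting $e_{n+1}^{(k)}=U_k$ for all $k\in\Imid$, which is allowed since the cell includes the right endpoint, gives $\sigma_i(e_{n+1})=\sigma_i^{\Ilow,\Imid\cup\Ihigh}$ for every $i$. Hence $\hatlam=\hatlam_{\Ilow,\Imid\cup\Ihigh}$.

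Item~2 requires a limit. Each map $e^{(k)}\mapsto e_i^{(k)}/e^{(k)}$ is continuous at $L_k$ provided $L_k>0$, so $\sigma_i(e_{n+1})\to\sigma_i^{\Ilow\cup\Imid,\Ihigh}$ as $e_{n+1}^{(k)}\to L_k^+$ for $k\in\Imid$. The order statistic map $(\sigma_1,\ldots,\sigma_n)\mapsto$ $q$-th smallest is continuous, being piecewise a coordinate projection. The limit of $\hatlam$ therefore equals $\hatlam_{\Ilow\cup\Imid,\Ihigh}$.

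\textbf{Main obstacle: the degenerate case $L_k=0$.} If $L_k=0$, the corner scores contain $e_i^{(k)}/0$, which I would interpret in $[0,+\infty]$ as $+\infty$ when $e_i^{(k)}>0$ and $0$ when $e_i^{(k)}=0$. The plan is to check that the mid term $e_i^{(k)}/e_{n+1}^{(k)}$ tends to exactly this value as $e_{n+1}^{(k)}\to 0^+$. Continuity of the order statistic then holds in the extended reals, since the $q$-th smallest of extended reals is still continuous under monotone limits.

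For the final sandwich, on the cell each mid coordinate lies in $(L_k,U_k]$. Monotonicity from item~1 gives the comparison in each mid coordinate separately. Moving all mid coordinates up to $U_k$ gives $\hatlam(e_{n+1}^{(\Imid)})\ge\hatlam_{\Ilow,\Imid\cup\Ihigh}$, using item~3. Moving them down toward $L_k$ gives $\hatlam(e_{n+1}^{(\Imid)})\le\hatlam_{\Ilow\cup\Imid,\Ihigh}$: a non-increasing function is bounded above by its limit from the right, using item~2. Alternatively, the sandwich follows directly from $e_i^{(k)}/U_k\le e_i^{(k)}/e_{n+1}^{(k)}\le e_i^{(k)}/L_k$ and monotonicity of the order statistic, which avoids limits altogether.
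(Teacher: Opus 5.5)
Your proposal is correct and follows essentially the same route as the paper: split each $\sigma_i$ into cell-constant low/high terms and non-increasing mid terms, use coordinatewise monotonicity of the $q$-th order statistic, and identify the face values with the corner-cell scores. You are more careful than the paper in three places: you treat the $L_k$ face as a one-sided limit (the paper just substitutes $e_{n+1}^{(k)}=L_k$, a point outside the cell), you flag the case $L_k=0$, and you spell out how the sandwich follows, including the limit-free comparison $e_i^{(k)}/U_k \le e_i^{(k)}/e_{n+1}^{(k)} \le e_i^{(k)}/L_k$.
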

\begin{proof}
See the \hyperref[proof:mixed-monotone]{proof} in \appendixref{app:proofs}.
\end{proof}

\propositionref{prop:mixed-exact,prop:mixed-monotone} together imply that the boundary
of $\calR_\alpha$ is continuous across cell boundaries: items~2 and~3 of
\propositionref{prop:mixed-monotone} show that $\hatlam$ matches the corner-cell values
at each face, so no jump can occur when crossing from one cell to an adjacent one.

Together, \propositionref{prop:corner-exact,prop:mixed-exact,prop:mixed-monotone} give a
complete analytical picture of $\calR_\alpha$: on corner cells it reduces to an
explicit rectangle, and on mixed cells it is characterised by the two-item
structure of \propositionref{prop:mixed-exact}.  While $\calR_\alpha$ is not itself a
hyperrectangle, it can be sandwiched between two axis-aligned hyperrectangles
(the inner and outer rectangles of \sectionref{sec:method-three-modes}), making it
both principled and tractable.

\subsubsection{Structural properties}

Assembling the cell-by-cell characterisations above gives the full prediction
region $\calR_\alpha = \bigcup_{(\Ilow,\Imid,\Ihigh)} \bigl(\calR_\alpha \cap
\calC_{\Ilow,\Imid,\Ihigh}\bigr)$.  A key global property of this union is that
it is downward-closed with respect to the component-wise order.  Intuitively,
if a residual $e_{n+1}$ is already within the conformal region, then any
smaller residual $e' \preceq e_{n+1}$ must also lie within it.  This follows
from the monotonicity of the score ratio $x \mapsto x/\operatorname{clamp}(x,L,U)$,
which is non-decreasing on $\mathbb{R}_+$ for any fixed $0 < L \leq U$ (the
proof is immediate by case analysis on the three branches of the clamp).

\begin{proposition}[Downward-closedness]\label{prop:downward-closed}
$\calR_\alpha$ is downward-closed with respect to $\preceq$:
if $e_{n+1} \in \calR_\alpha$ and $e' \preceq e_{n+1}$, then
$e' \in \calR_\alpha$.
\end{proposition}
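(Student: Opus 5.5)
The plan is to avoid the cell decomposition entirely and argue directly from the definition of $\calR_\alpha$, using a rank reformulation of the membership condition. Since $\hatlam(e_{n+1})$ is the $q$-th smallest value of $\{\sigma_1(e_{n+1}),\ldots,\sigma_n(e_{n+1}),+\infty\}$, the condition $\sigma_{n+1}(e_{n+1}) \le \hatlam(e_{n+1})$ is equivalent to
\[
  N(e_{n+1}) := \#\bigl\{i \le n : \sigma_i(e_{n+1}) < \sigma_{n+1}(e_{n+1})\bigr\} \le q-1 .
\]
Indeed, the $q$-th smallest element of a multiset is $\ge t$ exactly when fewer than $q$ elements are $< t$, and the appended $+\infty$ is never $< t$ for finite $t$. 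It therefore suffices to show $N(e') \le N(e_{n+1})$ whenever $e' \preceq e_{n+1}$. For this I will prove the pointwise implication: for each $i \le n$,
\[
  \sigma_{n+1}(e_{n+1}) \le \sigma_i(e_{n+1}) \;\Longrightarrow\; \sigma_{n+1}(e') \le \sigma_i(e').
\]
Its contrapositive shows that every index counted in $N(e')$ is also counted in $N(e_{n+1})$.

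Two monotonicity facts give the implication. Write $b = b_{n+1}(e_{n+1})$ and $b' = b_{n+1}(e')$.

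\emph{First fact.} By \eqref{eq:bk-formula}, each coordinate of the base vector is $\operatorname{clamp}(e^{(k)}, L_k, U_k)$, which is non-decreasing in $e^{(k)}$. Hence $e' \preceq e_{n+1}$ gives $b' \preceq b$. Because the $e_i$ are fixed, $e_i^{(k)}/b'^{(k)} \ge e_i^{(k)}/b^{(k)}$ coordinatewise, and taking the maximum gives $\sigma_i(e') \ge \sigma_i(e_{n+1})$ for every calibration index $i$.

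\emph{Second fact.} The test score is $\sigma_{n+1}(e) = \max_k g_k(e^{(k)})$ with $g_k(x) = x/\operatorname{clamp}(x,L_k,U_k)$. As noted before the proposition, each $g_k$ is non-decreasing: on the three branches it equals $x/L_k$, then $1$, then $x/U_k$, and the branches agree at the breakpoints. Therefore $\sigma_{n+1}(e') \le \sigma_{n+1}(e_{n+1})$.

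Chaining the two facts gives
\[
  \sigma_{n+1}(e') \le \sigma_{n+1}(e_{n+1}) \le \sigma_i(e_{n+1}) \le \sigma_i(e'),
\]
which is the desired implication, so $N(e') \le N(e_{n+1}) \le q-1$ and $e' \in \calR_\alpha$.

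The main point needing care is the rank reformulation, in particular its handling of ties and of the $+\infty$ element; I will verify it carefully, since the comparison with strict inequality is what makes the tie cases go through. A minor technical point is that $b_{n+1} \in \mathbb{R}_{++}^d$ requires $L_k > 0$. I will state this as a standing assumption (it holds almost surely for continuous residuals), or interpret $x/0$ in $\overline{\mathbb{R}}_+$ so that both monotonicity facts continue to hold.
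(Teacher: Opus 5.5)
Your proof is correct and uses the same two monotonicity facts as the paper. First, the clamped base vector is coordinatewise non-decreasing, so each calibration score $\sigma_i$ can only grow as the test error shrinks; second, $x/\operatorname{clamp}(x,L_k,U_k)$ is non-decreasing, so $\sigma_{n+1}$ can only shrink. The only difference is the last step: the paper skips your counting reformulation and uses directly that the $q$-th order statistic is monotone in its arguments, giving $\sigma_{n+1}(e') \le \sigma_{n+1}(e_{n+1}) \le \hatlam(e_{n+1}) \le \hatlam(e')$, and it relies implicitly on the same assumption $L_k>0$ that you make explicit.
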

\begin{proof}
See the \hyperref[proof:downward-closed]{proof} in \appendixref{app:proofs}.
\end{proof}

\subsection{Nested prediction regions}
\label{sec:method-three-modes}

From a single calibration run, our method produces four related prediction
regions with different trade-offs between validity, tightness, and computational
cost (\propositionref{prop:sandwich}); see \figureref{fig:regions} for an illustration.

\begin{figure}[t]
\floatconts
  {fig:cells-and-regions}%
  {\caption{Effect of $\gamma$ on the prediction region shape, and the four
    nested region types ($d = 2$).}}%
  {%
    \subfigure[Effect of $\gamma$ on $\calR_\alpha$ for fixed $1-\alpha$ ($d=2$).
      The dashed box is the rectangle $\prod_k[0,U_k{]}$, which grows with $\gamma$.
      \emph{Left} ($\gamma < \gamma_{\mathrm{rect}}$): base rectangle too small,
      $\hat\lambda > 1$, $\calR_\alpha$ extends beyond it.
      \emph{Centre} ($\gamma = \gamma_{\mathrm{rect}}$): $\hat\lambda = 1$,
      $\calR_\alpha$ coincides with the base rectangle.
      \emph{Right} ($\gamma > \gamma_{\mathrm{rect}}$): $\hat\lambda < 1$,
      $\calR_\alpha$ strictly inside the base rectangle.]{\label{fig:gamma-effect}%
      \includegraphics[width=0.58\linewidth]{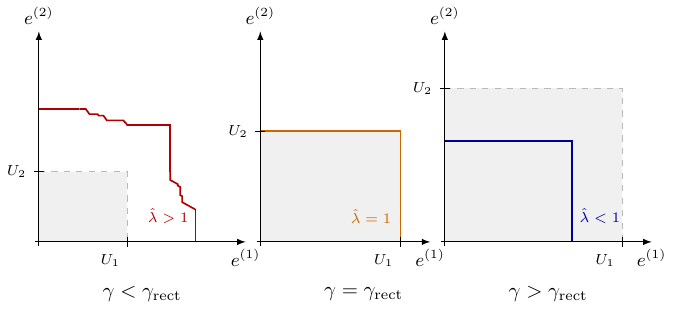}}%
    \hfill
    \subfigure[Prediction boundaries for $d = 2$.
      The exact set $\calR_\alpha$ (red, zigzag) and its staircase
      approximation (green) are both valid.  The outer rectangle (blue)
      is conservative.  The inner rectangle (violet, dashed) lacks a
      coverage guarantee.  Boundaries are slightly offset for visual
      clarity.]{\label{fig:regions}%
      \includegraphics[width=0.40\linewidth]{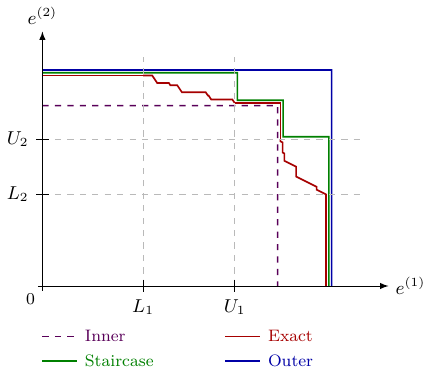}}%
  }
\end{figure}
The \emph{inner rectangle} is the most compact prediction region our method
produces.  It is contained inside $\calR_\alpha$ but carries no formal coverage
guarantee.  The \emph{exact region} $\calR_\alpha$ is the tightest valid
prediction set, but it is defined implicitly and costly to characterise on
mixed cells.  The \emph{staircase} is a valid over-approximation of $\calR_\alpha$
given as an explicit union of at most $2^d$ hyperrectangles, each computable at
calibration time using the corner-cell quantiles.  The \emph{outer rectangle}
is the most conservative: a single valid hyperrectangle, but the largest of
the four.

\paragraph{Outer rectangle (SCO) --- valid.}
The outer rectangle is obtained by being \emph{doubly conservative}: scores
are computed relative to the lower base rectangle ($L_k$ sides), which yields
the \emph{largest} possible calibration scores and therefore the \emph{largest}
possible quantile
$\hatlam_{\mathrm{ub}}$; it then builds the confidence region using the
\emph{largest} possible base rectangle ($U_k$ sides).  The resulting
hyperrectangle is guaranteed to contain $\calR_\alpha$ and hence achieves
valid joint coverage, at the cost of being conservative.
Assume $L_k > 0$ for all $k$.  Formally, define upper-bound scores
\[
  \sigma_i^{\mathrm{ub}} = \max_{k \in [d]}\frac{e_i^{(k)}}{L_k},
  \qquad
  \hatlam_{\mathrm{ub}} = \text{$q$-th order statistic of }
    \{\sigma_i^{\mathrm{ub}}\}_{i=1}^n.
\]

\begin{proposition}[Rectangular outer bound]\label{prop:outer-bound}
Assume $U_k > 0$ for all $k \in [d]$.
The exact prediction region is contained in the following closed-form
hyperrectangle:
\begin{equation}\label{eq:outer-bound}
  \calR_\alpha \;\subseteq\; \prod_{k=1}^d \bigl[0,\,\hatlam_{\mathrm{ub}}\,U_k\bigr].
\end{equation}
The region $\prod_k [0, \hatlam_{\mathrm{ub}} U_k]$ achieves joint coverage
$\geq 1-\alpha$.
\end{proposition}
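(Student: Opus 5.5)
The plan is a pointwise argument: take any $e_{n+1} \in \calR_\alpha$ and show $e_{n+1}^{(k)} \leq \hatlam_{\mathrm{ub}}\,U_k$ for every $k$. Coverage then follows at once. By \propositionref{prop:validity}, $\mathbb{P}(e_{n+1} \in \calR_\alpha) \geq 1-\alpha$, and the inclusion \eqref{eq:outer-bound} gives $\mathbb{P}(e_{n+1} \in \prod_k [0,\hatlam_{\mathrm{ub}} U_k]) \geq \mathbb{P}(e_{n+1}\in\calR_\alpha) \geq 1-\alpha$. Since $e_{n+1}=|Y_{n+1}-\hat f(X_{n+1})|$, this box in error space is the rectangle $\prod_k[\hat f^{(k)}-\hatlam_{\mathrm{ub}}U_k,\hat f^{(k)}+\hatlam_{\mathrm{ub}}U_k]$ in response space, so joint coverage holds there.

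The first step is to bound the quantile. For any $e_{n+1}$, the clamping formula \eqref{eq:bk-formula} gives $b_{n+1}^{(k)} \geq L_k > 0$ for each $k$. Therefore $\sigma_i(e_{n+1}) = \max_k e_i^{(k)}/b_{n+1}^{(k)} \leq \max_k e_i^{(k)}/L_k = \sigma_i^{\mathrm{ub}}$ for every $i \leq n$. The $q$-th smallest value is monotone under coordinatewise domination of the multiset, with the extra $+\infty$ appended to both lists. Hence $\hatlam(e_{n+1}) \leq$ the $q$-th smallest value of $\{\sigma_1^{\mathrm{ub}},\dots,\sigma_n^{\mathrm{ub}},+\infty\}$, which is $\hatlam_{\mathrm{ub}}$ whenever $q\le n$. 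If $q = n+1$, both quantities are $+\infty$ and the statement is trivial. I will state that convention explicitly, since the paper defines $\hatlam_{\mathrm{ub}}$ as an order statistic of only $n$ values.

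The second step is to bound the test score from below in terms of $U_k$. Since $b_{n+1}^{(k)} \leq U_k$, we get $\sigma_{n+1}(e_{n+1}) \geq \max_k e_{n+1}^{(k)}/U_k$. Membership in $\calR_\alpha$ then gives $\max_k e_{n+1}^{(k)}/U_k \leq \sigma_{n+1}(e_{n+1}) \leq \hatlam(e_{n+1}) \leq \hatlam_{\mathrm{ub}}$. By \eqref{eq:containment} this means $e_{n+1}\preceq \hatlam_{\mathrm{ub}}U$, which is the claimed inclusion. The assumption $U_k>0$ is what makes the division legitimate. We need $L_k>0$ too, and the paper assumes it when defining $\sigma_i^{\mathrm{ub}}$.

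I expect no real obstacle. The only delicate points are the order-statistic monotonicity lemma, which I will justify by counting elements $\leq t$, and the $+\infty$ or $q=n+1$ edge case. The cell decomposition is not needed: the two inequalities $L_k \leq b_{n+1}^{(k)} \leq U_k$ hold uniformly in $e_{n+1}$, so the argument goes through globally without checking cells.
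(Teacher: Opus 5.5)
Your proposal is correct and follows the paper's proof: $b_{n+1} \succeq L$ gives $\sigma_i(e_{n+1}) \leq \sigma_i^{\mathrm{ub}}$ and hence $\hatlam(e_{n+1}) \leq \hatlam_{\mathrm{ub}}$, and $b_{n+1} \preceq U$ then turns $\sigma_{n+1}(e_{n+1}) \leq \hatlam_{\mathrm{ub}}$ into $e_{n+1} \preceq \hatlam_{\mathrm{ub}} U$. The only differences are that you apply $b_{n+1} \preceq U$ before the containment equivalence rather than after, and that you spell out the $q = n+1$ ($+\infty$) convention and the coverage claim via the validity theorem, both of which the paper leaves implicit.
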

\begin{proof}
See the \hyperref[proof:outer-bound]{proof} in \appendixref{app:proofs}.
\end{proof}
\algorithmref{alg:outer} gives the calibration and prediction procedures for the outer
rectangle.

\begin{algorithm2e}[t]
\caption{SCO: Scaling-Score Outer-Bound Predictor}
\label{alg:outer}
\KwIn{Calibration errors $\{e_i\}_{i=1}^n \subset \mathbb{R}_+^d$,
      miscoverage level $\alpha \in (0,1)$,
      base-rectangle level $\gamma \in (0,1)$}
\KwOut{Half-widths $h_1,\ldots,h_d \geq 0$}
\textit{--- Calibration phase (run once) ---}\\
$m' \leftarrow \operatorname{clip}(\lceil\gamma(n+1)\rceil,\;2,\;n)$\;
\For{$k = 1, \ldots, d$}{
  $L_k \leftarrow e_{(m'-1)}^{(k)}$,\quad $U_k \leftarrow e_{(m')}^{(k)}$\;
}
\For{$i = 1, \ldots, n$}{
  $\sigma_i^{\mathrm{ub}} \leftarrow \max_{k \in [d]} e_i^{(k)} / L_k$\;
}
$q \leftarrow \lceil(1-\alpha)(n+1)\rceil$\;
$\hatlam_{\mathrm{ub}} \leftarrow \text{$q$-th order statistic of } \{\sigma_i^{\mathrm{ub}}\}_{i=1}^n$\;
\For{$k = 1, \ldots, d$}{
  $h_k \leftarrow \hatlam_{\mathrm{ub}} \cdot U_k$\;
}
\textit{--- Prediction phase (per test point $X_{n+1}$) ---}\\
Compute $\hat{y} \leftarrow \hat{f}(X_{n+1})$\;
\Return{$\prod_{k=1}^d [\hat{y}^{(k)} - h_k,\;\hat{y}^{(k)} + h_k]$}
\end{algorithm2e}

\paragraph{Inner rectangle (SCI) --- no formal validity guarantee.}
The inner rectangle is doubly optimistic: scores are computed relative to the
upper base rectangle ($U_k$ sides), which yields the \emph{smallest} possible
calibration scores, giving the lowest
possible quantile $\hatlam_{\mathrm{lb}}$; it then uses the \emph{smallest}
possible base rectangle ($L_k$ sides) to build the confidence region. The
result is tight but not guaranteed to be valid.  Formally, define lower-bound scores
\[
  \sigma_i^{\mathrm{lb}} = \max_{k \in [d]}\frac{e_i^{(k)}}{U_k},
  \qquad
  \hatlam_{\mathrm{lb}} = \text{$q$-th order statistic of }
    \{\sigma_i^{\mathrm{lb}}\}_{i=1}^n.
\]
\algorithmref{alg:inner} gives the calibration and prediction procedures for the inner
rectangle.

\begin{algorithm2e}[t]
\caption{SCI: Scaling-Score Inner-Bound Predictor}
\label{alg:inner}
\KwIn{Calibration errors $\{e_i\}_{i=1}^n \subset \mathbb{R}_+^d$,
      miscoverage level $\alpha \in (0,1)$,
      base-rectangle level $\gamma \in (0,1)$}
\KwOut{Half-widths $h_1^{\mathrm{lb}},\ldots,h_d^{\mathrm{lb}} \geq 0$
       (\emph{not} a valid prediction region)}
\textit{--- Calibration phase ---}\\
Compute $m', L_k, U_k$ as in \algorithmref{alg:outer}\;
\For{$i = 1, \ldots, n$}{
  $\sigma_i^{\mathrm{lb}} \leftarrow \max_{k \in [d]} e_i^{(k)} / U_k$\;
}
$q \leftarrow \lceil(1-\alpha)(n+1)\rceil$\;
$\hatlam_{\mathrm{lb}} \leftarrow \text{$q$-th order statistic of } \{\sigma_i^{\mathrm{lb}}\}_{i=1}^n$\;
\For{$k = 1, \ldots, d$}{
  $h_k^{\mathrm{lb}} \leftarrow \hatlam_{\mathrm{lb}} \cdot L_k$\;
}
\end{algorithm2e}

\paragraph{Exact region $\calR_\alpha$ (valid, tightest but implicit).}
The exact prediction region $\calR_\alpha$ is characterised cell by cell via
\propositionref{prop:corner-exact,prop:mixed-exact}: on corner cells and all-low-mid mixed
cells it has an explicit form, but on mixed cells with $\Ihigh \neq \varnothing$
the boundary is only implicitly defined through $\hatlam(e_{n+1}^{(\Imid)})$,
which depends on the test error.
Although $\calR_\alpha$ is the tightest possible valid region, its boundary
forms broken decision lines that are difficult to characterise explicitly and
whose computation requires evaluating all calibration scores for every query.

\begin{algorithm2e}[t]
\caption{$\mathrm{SC}^2$: Staircase Construction (Calibration Phase)}
\label{alg:staircase}
\KwIn{Calibration errors $\{e_i\}_{i=1}^n \subset \mathbb{R}_+^d$,
      miscoverage level $\alpha \in (0,1)$,
      base-rectangle level $\gamma \in (0,1)$}
\KwOut{Pareto-optimal staircase rectangles $\{[0, e^*_S]\}_{S \in \mathcal{P}}$}
Compute $m', U_k$ as in \algorithmref{alg:outer}\;
$q \leftarrow \lceil(1-\alpha)(n+1)\rceil$\;
\For{each $S \subseteq [d]$}{
  Compute scores $\sigma_i^{\bar{S},S} \leftarrow \max_{k \in S} e_i^{(k)} / U_k$
  for $i = 1,\ldots,n$\;
  $\hatlam_{\bar{S},S} \leftarrow \text{$q$-th order statistic of }
  \{\sigma_i^{\bar{S},S}\}_{i=1}^n$\;
  Set corner point $e^{*(k)}_S \leftarrow
  \begin{cases} \hatlam_{\bar{S},S}\,U_k & k \in S, \\ U_k & k \notin S. \end{cases}$\;
}
Retain only Pareto-optimal rectangles:
$\mathcal{P} \leftarrow \{S \subseteq [d] : \nexists\, S' \text{ s.t. }
e^{*(k)}_{S'} \leq e^{*(k)}_S\ \forall k,\ \text{strictly for some } k\}$\;
\Return{$\{[0, e^*_S]\}_{S \in \mathcal{P}}$}
\end{algorithm2e}

\paragraph{Staircase ($\mathrm{SC}^2$) --- valid, union of at most $2^d$ rectangles.}
The staircase is a valid over-approximation of $\calR_\alpha$ that replaces the
implicit mixed-cell boundary by the upper-bounding corner-cell quantile from
\propositionref{prop:mixed-monotone}.  This turns every cell boundary into a fixed
rectangle that is independent of the test error.  The result is an explicit
union of $2^d$ hyperrectangles, one per subset $S \subseteq [d]$, all
computable at calibration time (\algorithmref{alg:staircase}).  The staircase is valid since it contains
$\calR_\alpha$, and it is tight in the sense that the $2^d$ rectangles each
touch the boundary of $\calR_\alpha$ at their corner points, as the following proposition shows.

\begin{proposition}[Staircase: union of hyperrectangles]\label{prop:extremal}
Assume $U_k > 0$ for all $k \in [d]$.
For each $S \subseteq [d]$, define the corner point
\[
  e^{*(k)}_S =
  \begin{cases}
    \hatlam_{\bar{S},S}\,U_k & k \in S, \\
    U_k                       & k \notin S,
  \end{cases}
\]
where $\bar{S} = [d]\setminus S$ and $\hatlam_{\bar{S},S}$ is the conformal
quantile of the corner cell $\calC_{\bar{S},\varnothing,S}$.
Then
\[
  \calR_\alpha \;\subseteq\; \bigcup_{S \subseteq [d]}
  \prod_{k=1}^d [0,\,e^{*(k)}_S].
\]
\end{proposition}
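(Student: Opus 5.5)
The plan is a cell-by-cell covering argument. Since the cells $\calC_{\Ilow,\Imid,\Ihigh}$ partition $\mathbb{R}_+^d$, it is enough to fix a cell and an $e = e_{n+1} \in \calR_\alpha \cap \calC_{\Ilow,\Imid,\Ihigh}$. We then exhibit one subset $S$ with $e \preceq e^*_S$. The natural choice is $S = \Ihigh$, so that $\bar S = \Ilow \cup \Imid$ and the relevant quantile is the corner-cell quantile $\hatlam_{\Ilow\cup\Imid,\,\Ihigh}$.

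First, the coordinates outside $S$ are easy. For $k \in \Ilow$, cell membership gives $e^{(k)} \le L_k \le U_k = e^{*(k)}_S$. For $k \in \Imid$, it gives $e^{(k)} \le U_k = e^{*(k)}_S$. In particular, if $\Ihigh = \varnothing$ we are already done, because $e^*_\varnothing = (U_1,\dots,U_d)$ and every coordinate of $e$ is at most $U_k$. This covers the all-low corner cell and item~1 of \propositionref{prop:mixed-exact}. Note that the quantile condition $\hatlam \ge 1$ is not even needed for containment here.

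Second, we handle the high coordinates $k \in \Ihigh$, where we need $e^{(k)} \le \hatlam_{\Ilow\cup\Imid,\Ihigh}\,U_k$. There are two cases.
\begin{itemize}
\item If the cell is a corner cell ($\Imid = \varnothing$), \propositionref{prop:corner-exact} gives $e^{(k)} \le \hatlam_{\Ilow,\Ihigh} U_k$ directly, and $\Ilow \cup \Imid = \Ilow$.
\item If the cell is mixed, item~2 of \propositionref{prop:mixed-exact} gives $e^{(k)} \le \hatlam(e^{(\Imid)})\,U_k$. The sandwich bound in \propositionref{prop:mixed-monotone} then gives $\hatlam(e^{(\Imid)}) \le \hatlam_{\Ilow\cup\Imid,\Ihigh}$, and since $U_k > 0$ multiplying preserves the inequality.
\end{itemize}
Together with the first step, this yields $e \in \prod_k [0, e^{*(k)}_{\Ihigh}]$. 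Taking the union over cells proves the inclusion.

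The only delicate point is the upper bound in \propositionref{prop:mixed-monotone}. It is stated via a limit $e^{(k)} \to L_k^+$, and that limit is not attained inside the cell because the mid intervals are open at $L_k$. I would justify the bound directly from item~1: $\hatlam$ is non-increasing in each mid coordinate, so on the cell it is bounded above by its supremum, which is the one-sided limit at the lower faces. I would double-check this limit by noting that as $e^{(k)} \downarrow L_k$ each mid term $e_i^{(k)}/e^{(k)}$ increases to $e_i^{(k)}/L_k$, which is exactly the low-term score of the corner cell $\calC_{\Ilow\cup\Imid,\varnothing,\Ihigh}$. Order statistics are monotone and continuous in their arguments, so the quantile converges to $\hatlam_{\Ilow\cup\Imid,\Ihigh}$ from below.

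I would also confirm that $\hatlam_{\bar S,S}$ in the statement is the full corner-cell quantile, including the low terms $e_i^{(k)}/L_k$, which is what this argument requires.
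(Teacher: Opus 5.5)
Your proposal is correct and matches the paper's proof: take $S$ to be the set of coordinates where $e^{(k)} > U_k$, bound the coordinates outside $S$ by $U_k$, and bound those in $S$ either by \propositionref{prop:corner-exact} (corner cells) or by item~2 of \propositionref{prop:mixed-exact} together with the upper bound of \propositionref{prop:mixed-monotone} (mixed cells). Your limit argument at the open face $L_k$ is sound, but you can skip it: on a mixed cell, $e^{(k)} > L_k$ for $k \in \Imid$ already gives $\sigma_i(e) \le \sigma_i^{\bar S,S}$ term by term, so $\hatlam(e^{(\Imid)}) \le \hatlam_{\bar S,S}$ follows because the $q$-th order statistic is monotone.
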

\begin{proof}
See the \hyperref[proof:extremal]{proof} in \appendixref{app:proofs}.
\end{proof}

The staircase $\bigcup_S \prod_k [0, e^{*(k)}_S]$ is valid (it contains
$\calR_\alpha$), and all $2^d$ corner rectangles are computable at calibration
time: each $\hatlam_{\bar{S},S}$ requires one pass over calibration scores.
In practice, a simple post-processing step retains only the Pareto-optimal
rectangles (those not dominated component-wise by any other), which often
reduces the number of active rectangles well below $2^d$ (empirically
quantified in \sectionref{sec:exp-results}, \tableref{tab:real_world_frac_retained}).

The four prediction regions are related by the following chain of containments.

\begin{proposition}[Rectangular sandwich]\label{prop:sandwich}
Assume $U_k > 0$ for all $k \in [d]$.
Combining \propositionref{prop:outer-bound} with the cell-based analysis, whenever
$\hatlam_{\mathrm{ub}} \geq 1$ (the regime that arises in practice), the
four prediction regions form the nested chain
\begin{equation}\label{eq:sandwich-full}
  \underbrace{\prod_{k=1}^d \bigl[0,\,\hatlam_{\mathrm{lb}}\,L_k\bigr]}_{\text{inner rectangle}}
  \;\subseteq\;
  \underbrace{\calR_\alpha}_{\text{exact region}}
  \;\subseteq\;
  \underbrace{\bigcup_{S \subseteq [d]}\prod_{k=1}^d [0,\,e^{*(k)}_S]}_{\text{staircase}}
  \;\subseteq\;
  \underbrace{\prod_{k=1}^d \bigl[0,\,\hatlam_{\mathrm{ub}}\,U_k\bigr]}_{\text{outer rectangle}}.
\end{equation}
In the edge case $\hatlam_{\mathrm{ub}} < 1$, the staircase need not be
contained in the outer rectangle, but the inner sandwich
\begin{equation}\label{eq:sandwich}
  \prod_{k=1}^d \bigl[0,\,\hatlam_{\mathrm{lb}}\,L_k\bigr]
  \;\subseteq\;
  \calR_\alpha
  \;\subseteq\;
  \bigcup_{S \subseteq [d]}\prod_{k=1}^d [0,\,e^{*(k)}_S]
\end{equation}
still holds, and \propositionref{prop:outer-bound} provides
$\calR_\alpha \subseteq \prod_k [0,\,\hatlam_{\mathrm{ub}}\,U_k]$ as a
separate over-approximation.
\end{proposition}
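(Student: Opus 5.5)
The plan is to prove the three inclusions of \eqref{eq:sandwich-full} separately. The middle one, $\calR_\alpha \subseteq \bigcup_S \prod_k[0,e^{*(k)}_S]$, is exactly \propositionref{prop:extremal}, so I will invoke it directly. The outer inclusion of $\calR_\alpha$ in $\prod_k[0,\hatlam_{\mathrm{ub}}U_k]$ is \propositionref{prop:outer-bound}. Neither of these uses $\hatlam_{\mathrm{ub}}\ge 1$. So the new work is the inner inclusion, which must also hold unconditionally to give \eqref{eq:sandwich}, and the staircase-in-outer-rectangle inclusion, which is the only place the hypothesis $\hatlam_{\mathrm{ub}}\ge 1$ enters.

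\textbf{Inner inclusion.} Take $e_{n+1}$ with $e_{n+1}^{(k)}\le \hatlam_{\mathrm{lb}}L_k$ for all $k$.
\begin{itemize}
\item Upper bound on the base. By the clamping formula \eqref{eq:bk-formula}, $b_{n+1}\preceq U$. Hence for every $i\le n$,
\[
\sigma_i(e_{n+1})=\max_k e_i^{(k)}/b_{n+1}^{(k)}\ \ge\ \max_k e_i^{(k)}/U_k=\sigma_i^{\mathrm{lb}}.
\]
\item Monotonicity of the quantile. Order statistics are monotone under coordinatewise domination of the score vectors, and the appended $+\infty$ is common to both lists. So $\hatlam(e_{n+1})\ge\hatlam_{\mathrm{lb}}$, including the case $q=n+1$, where both equal $+\infty$.
\item Lower bound on the base. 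Also $b_{n+1}\succeq L$, so
\[
\sigma_{n+1}(e_{n+1})\le\max_k e_{n+1}^{(k)}/L_k\le\hatlam_{\mathrm{lb}}\le\hatlam(e_{n+1}).
\]
This gives $e_{n+1}\in\calR_\alpha$.
\end{itemize}
If some $L_k=0$, the box forces $e_{n+1}^{(k)}=0$ in that coordinate. That term then contributes $0$ under the convention $0/0:=0$, or the coordinate can be dropped from the maximum, so the argument goes through.

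\textbf{Staircase inside outer rectangle.} It suffices to show $e^*_S\preceq \hatlam_{\mathrm{ub}}U$ for every $S\subseteq[d]$.
\begin{itemize}
\item For $k\notin S$ we have $e^{*(k)}_S=U_k\le\hatlam_{\mathrm{ub}}U_k$, precisely because $\hatlam_{\mathrm{ub}}\ge1$. This is why the inclusion can fail in the edge case $\hatlam_{\mathrm{ub}}<1$.
\item For $k\in S$, I need $\hatlam_{\bar S,S}\le\hatlam_{\mathrm{ub}}$. The corner-cell score is $\sigma_i^{\bar S,S}=\max(\max_{k\in\bar S}e_i^{(k)}/L_k,\ \max_{k\in S}e_i^{(k)}/U_k)$, or just the $S$-part as in \algorithmref{alg:staircase}. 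In either form, since $L_k\le U_k$, it is at most $\max_{k\in[d]}e_i^{(k)}/L_k=\sigma_i^{\mathrm{ub}}$.
\item Taking the $q$-th order statistic then preserves this inequality.
\end{itemize}

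Chaining the four inclusions gives \eqref{eq:sandwich-full}. Dropping the last step gives \eqref{eq:sandwich} in the edge case, with \propositionref{prop:outer-bound} still supplying the separate outer bound.

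I expect the main obstacle to be bookkeeping rather than conceptual. First, the corner-cell scores must be shown to be dominated by $\sigma^{\mathrm{ub}}$ regardless of which convention is used for the low terms. Second, the degenerate cases need care: $L_k=0$, where $\sigma^{\mathrm{ub}}$ may be $+\infty$ and the outer box is trivially everything, and $q>n$, where all quantiles are $+\infty$. I would handle these first so that the monotone order-statistic comparisons above are legitimate throughout.
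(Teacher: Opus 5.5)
Your proposal is correct and follows the paper's proof step for step:
\begin{itemize}
\item the inner inclusion uses $L \preceq b_{n+1} \preceq U$ together with monotonicity of the $q$-th order statistic;
\item the middle inclusion is \propositionref{prop:extremal};
\item staircase $\subseteq$ outer rectangle uses $\sigma_i^{\bar S,S} \le \sigma_i^{\mathrm{ub}}$ for $k\in S$ and $\hatlam_{\mathrm{ub}}\ge 1$ for $k\notin S$.
\end{itemize}
The paper does not treat the degenerate cases $q=n+1$ and $L_k=0$; your handling of them is harmless and makes the order-statistic comparisons fully rigorous.
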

\begin{proof}
See the \hyperref[proof:sandwich]{proof} in \appendixref{app:proofs}.
\end{proof}

The inner rectangle is the most compact but offers no coverage guarantee; it
may under-cover since it is strictly contained in $\calR_\alpha$.  Both the
staircase and the outer rectangle are valid prediction regions (they contain
$\calR_\alpha$).  The full chain~\eqref{eq:sandwich-full} holds in the typical
regime $\hatlam_{\mathrm{ub}} \geq 1$.  When $\hatlam_{\mathrm{ub}} < 1$, which
can occur at large $\gamma$, the outer rectangle shrinks below $[0, U]$,
while the staircase still contains $[0, U]$ via its empty-subset corner
$e^*_\varnothing = U$, so the two regions become incomparable.
These four objects are illustrated in \figureref{fig:regions}.

\subsection{Choice of \texorpdfstring{$\gamma$}{gamma}}
\label{sec:gamma-choice}

The hyperparameter $\gamma \in (0,1)$ determines the marginal quantile level at
which the thresholds $U_k = e_{(m')}^{(k)}$ are set.  Choosing $\gamma$ too
small or too large both degrade performance, though for different reasons.

When $\gamma$ is small, the thresholds $U_k = e_{(m')}^{(k)}$ sit at low
marginal quantiles of the calibration errors.  The base rectangle then
under-represents the scale of large errors on each target: the outer
quantile $\hatlam$ must compensate by expanding the region uniformly,
which dilutes the per-target scale adaptation that is the key advantage of
the method.

When $\gamma$ is large, the thresholds $U_k$ are set by the most extreme
calibration errors, making them sensitive to outliers.  Moreover, once
$\gamma > \gamma_{\mathrm{rect}}$ the base rectangle already covers more than
$1-\alpha$ of calibration points, so $\hatlam < 1$ and the prediction region
shrinks to a rectangle strictly inside the base rectangle, losing the
non-rectangular staircase structure.

We define $\gamma_{\mathrm{rect}}$ as the value of $\gamma$ for which the base
rectangle covers exactly a fraction $1-\alpha$ of calibration errors.  The
base rectangle $\prod_k [0, U_k]$ has the equal-marginal-quantile property:
each side sits at ECDF level $\gamma$.  Because joint domination is strictly
harder than marginal domination when targets are not perfectly co-monotone,
$\gamma_{\mathrm{rect}} \geq 1-\alpha$ in general, with equality only under
perfect positive dependence.  When $\gamma = \gamma_{\mathrm{rect}}$,
$\hatlam_{\mathrm{lb}} \approx 1$ and the inner rectangle approximates the
empirical copula rectangle (with equality in the limit $L_k \to U_k$,
i.e.\ when the $m'$-th and $(m'-1)$-th calibration order statistics
coincide), while the outer rectangle provides a finitely valid conformal
guarantee that the empirical copula lacks.

We recommend targeting $\gamma < \gamma_{\mathrm{rect}}$, so that the staircase
retains its non-rectangular structure and $\hatlam > 1$ provides a meaningful
scale signal.  The choice $\gamma = 1-\alpha$ is a natural default: it is
tied to the coverage level, always satisfies $\gamma \leq \gamma_{\mathrm{rect}}$
in practice, and our experiments confirm that it consistently matches or
outperforms other values of $\gamma$ across datasets and dimensions
(see \sectionref{sec:exp-results}).

%% ==========================================================================
\section{Experiments}
\label{sec:experiments}
%% ==========================================================================

\subsection{Protocol}
\label{sec:exp-protocol}

We use a random forest base model (100 trees) throughout.  Each dataset is
split into 60\% train / 15\% calibration / 25\% test.  Results are averaged
over 20 random seeds.  We evaluate at $\alpha \in \{0.05, 0.10\}$.

\paragraph{Baselines.}
\begin{itemize}
  \item \textbf{Max}: max-aggregation score producing an axis-aligned hypercube.
  \item \textbf{CHR} \citep{sampson_conformal_2024}: valid rectangular regions
        with per-target width adaptation, at the cost of splitting the
        calibration set in two.
  \item \textbf{TSCP} \citep{fan_interpretable_2025}: Transductively
        Standardized Conformal Prediction, valid and model-agnostic with a
        single calibration set.
\end{itemize}

\paragraph{Our method.}
We evaluate the $\mathrm{SC}^2$ prediction set under three choices of the
hyperparameter $\gamma$, and SCO with $\gamma = 1-\alpha$:
\begin{itemize}
  \item $\mathrm{SC}^2$, $\gamma = 1-\alpha$: a natural default tied to the coverage level.
  \item $\mathrm{SC}^2$, $\gamma = 0.8$: a fixed, coverage-level-agnostic choice, and a lower bound on all $1-\alpha$ levels considered in our experiments ($0.80 \leq 0.90, 0.95, 0.99$).
  \item $\mathrm{SC}^2$, $\gamma_{\mathrm{rect}}$: chosen on the calibration set so that exactly
        a fraction $1-\alpha$ of calibration points are \emph{dominated} by the
        base rectangle (i.e.\ all their residuals fall within the base
        half-widths); see \sectionref{sec:gamma-choice}.
  \item SCO, $\gamma = 1-\alpha$: the closed-form valid outer rectangle.
\end{itemize}

\paragraph{Datasets.}
We evaluate on 29 real-world datasets spanning $d \in \{2, 3, 4, 6, 8, 12, 14,
16\}$ targets, covering a wide range of sample sizes and output dimensions.
See \appendixref{app:datasets} for the full list with sample sizes and sources.

\paragraph{Metrics.}
We evaluate each method on two metrics.
\emph{Joint coverage}: the fraction of test points whose full $d$-dimensional
response falls inside the prediction region (target: at least $1-\alpha$).
\emph{Volume}: the $\log_{10}$ volume of the prediction region, lower being
better for a fixed coverage level.

\subsection{Results}
\label{sec:exp-results}

\paragraph{Coverage.}
\tableref{tab:real_world_rf_cov} reports joint coverage at $\alpha=0.10$.  All
methods achieve valid coverage close to the nominal $1-\alpha = 0.90$ level
across all datasets, confirming the finite-sample guarantee.  The one
exception is $\mathrm{SC}^2$ with $\gamma=0.8$ on Births~2 ($d=4$): the random
forest predicts one target near-perfectly, so more than $80\%$ of calibration
residuals on that target are exactly zero, yielding $U_k = 0$ and violating
the assumption of \propositionref{prop:extremal,prop:outer-bound}.  The same degenerate
case arises for Solar~Flare~1 at $\alpha=0.05$ (see \appendixref{app:full-results}).
Choosing $\gamma \geq 1-\alpha$ (all other $\mathrm{SC}^2$ and SCO variants)
avoids this failure: if $\gamma \geq 1-\alpha$ and $U_k = 0$, then more than
$1-\alpha$ of calibration residuals are zero, the transductive quantile remains
finite, and coverage is maintained.

\begin{table}[htbp]
  \centering
  \small
  \setlength{\tabcolsep}{4pt}
  \begin{tabular}{l r r r r r r r r}
    \toprule
    Dataset & $d$ & \multicolumn{1}{c}{\multirow{2}{*}{Max}} & \multicolumn{1}{c}{\multirow{2}{*}{CHR}} & \multicolumn{1}{c}{\multirow{2}{*}{TSCP}} & \multicolumn{1}{c}{SCO} & \multicolumn{3}{c}{$\mathrm{SC}^2$} \\
    \cmidrule(lr){6-6} \cmidrule(lr){7-9}
    & &  &  &  & \multicolumn{1}{c}{$\gamma{=}1{-}\alpha$} & \multicolumn{1}{c}{$\gamma{=}0.8$} & \multicolumn{1}{c}{$\gamma{=}1{-}\alpha$} & \multicolumn{1}{c}{$\gamma_{\mathrm{rect}}$} \\
    \midrule
    ANSUR-II & 2 & $0.903$ & $0.897$ & $0.904$ & $0.908$ & $0.906$ & $0.906$ & $0.907$ \\
    Bio & 2 & $0.899$ & $0.900$ & $0.900$ & $0.900$ & $0.900$ & $0.900$ & $0.900$ \\
    Births-1 & 2 & $0.899$ & $0.897$ & $0.897$ & $0.896$ & $0.895$ & $0.895$ & $0.896$ \\
    Blog & 2 & $0.901$ & $0.901$ & $0.902$ & $0.901$ & $0.901$ & $0.901$ & $0.902$ \\
    CalCOFI & 2 & $0.899$ & $0.900$ & $0.900$ & $0.901$ & $0.900$ & $0.901$ & $0.901$ \\
    EDM & 2 & $0.905$ & $0.929$ & $0.929$ & $0.936$ & $0.927$ & $0.922$ & $0.906$ \\
    ENB & 2 & $0.915$ & $0.926$ & $0.908$ & $0.917$ & $0.910$ & $0.911$ & $0.922$ \\
    House & 2 & $0.900$ & $0.902$ & $0.900$ & $0.900$ & $0.899$ & $0.900$ & $0.901$ \\
    Taxi & 2 & $0.901$ & $0.902$ & $0.901$ & $0.901$ & $0.901$ & $0.901$ & $0.902$ \\
    Wage & 2 & $0.899$ & $0.902$ & $0.903$ & $0.902$ & $0.899$ & $0.901$ & $0.902$ \\
    \midrule
    Jura & 3 & $0.918$ & $0.937$ & $0.912$ & $0.914$ & $0.911$ & $0.910$ & $0.945$ \\
    SCPF & 3 & $0.902$ & $0.912$ & $0.909$ & $0.913$ & $0.909$ & $0.908$ & $0.911$ \\
    SF1 & 3 & $0.932$ & $0.946$ & $0.932$ & $0.935$ & $0.854$ & $0.925$ & $0.962$ \\
    SF2 & 3 & $0.905$ & $0.889$ & $0.906$ & $0.872$ & $0.868$ & $0.870$ & $0.907$ \\
    Student & 3 & $0.909$ & $0.904$ & $0.916$ & $0.917$ & $0.911$ & $0.910$ & $0.930$ \\
    \midrule
    Births-2 & 4 & $0.896$ & $0.906$ & $0.902$ & $0.905$ & $0.789$ & $0.905$ & $0.903$ \\
    Households & 4 & $0.899$ & $0.899$ & $0.901$ & $0.902$ & $0.901$ & $0.901$ & $0.902$ \\
    Stock & 4 & $0.900$ & $0.902$ & $0.911$ & $0.912$ & $0.908$ & $0.910$ & $0.922$ \\
    \midrule
    Air & 6 & $0.899$ & $0.899$ & $0.900$ & $0.898$ & $0.899$ & $0.898$ & $0.902$ \\
    ATP1d & 6 & $0.902$ & $0.908$ & $0.890$ & $0.908$ & $0.896$ & $0.899$ & $0.916$ \\
    ATP7d & 6 & $0.915$ & $0.929$ & $0.910$ & $0.921$ & $0.915$ & $0.910$ & $0.923$ \\
    \midrule
    RF1 & 8 & $0.899$ & $0.902$ & $0.899$ & $0.900$ & $0.900$ & $0.900$ & $0.899$ \\
    RF2 & 8 & $0.903$ & $0.903$ & $0.904$ & $0.905$ & $0.904$ & $0.905$ & $0.906$ \\
    \midrule
    OSales & 12 & $0.902$ & $0.907$ & $0.891$ & $0.901$ & $0.905$ & $0.899$ & $0.918$ \\
    \midrule
    WQ & 14 & $0.887$ & $0.898$ & $0.899$ & $0.902$ & $0.895$ & $0.897$ & $0.895$ \\
    \midrule
    OES10 & 16 & $0.914$ & $0.922$ & $0.897$ & $0.904$ & $0.900$ & $0.900$ & $0.910$ \\
    OES97 & 16 & $0.914$ & $0.923$ & $0.916$ & $0.914$ & $0.907$ & $0.907$ & $0.920$ \\
    SCM1d & 16 & $0.902$ & $0.902$ & $0.900$ & $0.900$ & $0.901$ & $0.900$ & $0.903$ \\
    SCM20d & 16 & $0.903$ & $0.905$ & $0.905$ & $0.905$ & $0.904$ & $0.905$ & $0.907$ \\
    \bottomrule
  \end{tabular}
  \caption{Joint coverage on real-world datasets (random-forest base model, $\alpha=0.10$). Mean over 20 seeds. Target: $\geq 1-\alpha = 0.90$.}
  \label{tab:real_world_rf_cov}
\end{table}

\paragraph{Volume.}
\tableref{tab:real_world_rf_vol} reports $\log_{10}$ volume.  $\mathrm{SC}^2$ with
$\gamma=1-\alpha$ achieves the best mean rank across the seven evaluated
methods (mean rank $2.03$) and ranks first on 12 of 29 datasets; the
volume advantage over Max grows consistently with $d$.
Among hyperrectangular methods, our SCO variant achieves a mean
rank strictly better than all baselines (Max, CHR, TSCP), making it the
strongest fully rectangular option. TSCP is competitive at low $d$ but
falls behind at high $d$.  The $\mathrm{SC}^2$ $\gamma_{\mathrm{rect}}$ variant, which corresponds to the valid version
of the copula-based construction (see \sectionref{sec:gamma-choice}), does not improve
consistently over the default $\gamma=1-\alpha$, suggesting it offers no
systematic advantage.

\begin{table}[htbp]
  \centering
  \small
  \setlength{\tabcolsep}{4pt}
  \begin{tabular}{l r r r r r r r r}
    \toprule
    Dataset & $d$ & \multicolumn{1}{c}{\multirow{2}{*}{Max}} & \multicolumn{1}{c}{\multirow{2}{*}{CHR}} & \multicolumn{1}{c}{\multirow{2}{*}{TSCP}} & \multicolumn{1}{c}{SCO} & \multicolumn{3}{c}{$\mathrm{SC}^2$} \\
    \cmidrule(lr){6-6} \cmidrule(lr){7-9}
    & &  &  &  & \multicolumn{1}{c}{$\gamma{=}1{-}\alpha$} & \multicolumn{1}{c}{$\gamma{=}0.8$} & \multicolumn{1}{c}{$\gamma{=}1{-}\alpha$} & \multicolumn{1}{c}{$\gamma_{\mathrm{rect}}$} \\
    \midrule
    ANSUR-II & 2 & $3.40{\scriptstyle \pm 0.01}$ & $\mathbf{3.33}{\scriptstyle \pm 0.01}$ & $3.34{\scriptstyle \pm 0.01}$ & $3.35{\scriptstyle \pm 0.01}$ & $3.34{\scriptstyle \pm 0.01}$ & $3.34{\scriptstyle \pm 0.01}$ & $3.35{\scriptstyle \pm 0.01}$ \\
    Bio & 2 & $6.08{\scriptstyle \pm 0.00}$ & $\mathbf{4.01}{\scriptstyle \pm 0.00}$ & $4.24{\scriptstyle \pm 0.01}$ & $4.01{\scriptstyle \pm 0.00}$ & $4.01{\scriptstyle \pm 0.00}$ & $4.01{\scriptstyle \pm 0.00}$ & $4.01{\scriptstyle \pm 0.00}$ \\
    Births-1 & 2 & $6.42{\scriptstyle \pm 0.00}$ & $4.05{\scriptstyle \pm 0.01}$ & $4.05{\scriptstyle \pm 0.01}$ & $4.04{\scriptstyle \pm 0.00}$ & $4.04{\scriptstyle \pm 0.00}$ & $\mathbf{4.04}{\scriptstyle \pm 0.00}$ & $4.04{\scriptstyle \pm 0.00}$ \\
    Blog & 2 & $3.33{\scriptstyle \pm 0.01}$ & $3.34{\scriptstyle \pm 0.01}$ & $3.57{\scriptstyle \pm 0.01}$ & $3.33{\scriptstyle \pm 0.01}$ & $3.34{\scriptstyle \pm 0.01}$ & $\mathbf{3.33}{\scriptstyle \pm 0.01}$ & $3.42{\scriptstyle \pm 0.01}$ \\
    CalCOFI & 2 & $1.59{\scriptstyle \pm 0.00}$ & $\mathbf{0.94}{\scriptstyle \pm 0.00}$ & $0.94{\scriptstyle \pm 0.00}$ & $0.94{\scriptstyle \pm 0.00}$ & $0.94{\scriptstyle \pm 0.00}$ & $0.94{\scriptstyle \pm 0.00}$ & $0.94{\scriptstyle \pm 0.00}$ \\
    EDM & 2 & $0.47{\scriptstyle \pm 0.03}$ & $0.56{\scriptstyle \pm 0.07}$ & $0.41{\scriptstyle \pm 0.05}$ & $0.56{\scriptstyle \pm 0.07}$ & $0.51{\scriptstyle \pm 0.06}$ & $0.45{\scriptstyle \pm 0.06}$ & $\mathbf{0.38}{\scriptstyle \pm 0.04}$ \\
    ENB & 2 & $1.74{\scriptstyle \pm 0.02}$ & $1.38{\scriptstyle \pm 0.04}$ & $1.32{\scriptstyle \pm 0.02}$ & $1.34{\scriptstyle \pm 0.02}$ & $1.33{\scriptstyle \pm 0.02}$ & $\mathbf{1.32}{\scriptstyle \pm 0.02}$ & $1.36{\scriptstyle \pm 0.02}$ \\
    House & 2 & $11.17{\scriptstyle \pm 0.00}$ & $5.23{\scriptstyle \pm 0.01}$ & $5.27{\scriptstyle \pm 0.01}$ & $5.22{\scriptstyle \pm 0.01}$ & $\mathbf{5.22}{\scriptstyle \pm 0.01}$ & $5.22{\scriptstyle \pm 0.01}$ & $5.24{\scriptstyle \pm 0.01}$ \\
    Taxi & 2 & $-1.64{\scriptstyle \pm 0.00}$ & $-1.63{\scriptstyle \pm 0.01}$ & $-1.63{\scriptstyle \pm 0.00}$ & $-1.64{\scriptstyle \pm 0.00}$ & $\mathbf{-1.64}{\scriptstyle \pm 0.00}$ & $-1.64{\scriptstyle \pm 0.00}$ & $-1.64{\scriptstyle \pm 0.00}$ \\
    Wage & 2 & $\mathbf{0.54}{\scriptstyle \pm 0.00}$ & $0.55{\scriptstyle \pm 0.01}$ & $0.56{\scriptstyle \pm 0.00}$ & $0.55{\scriptstyle \pm 0.00}$ & $0.54{\scriptstyle \pm 0.00}$ & $0.55{\scriptstyle \pm 0.00}$ & $0.56{\scriptstyle \pm 0.00}$ \\
    \midrule
    Jura & 3 & $4.50{\scriptstyle \pm 0.10}$ & $3.67{\scriptstyle \pm 0.11}$ & $3.38{\scriptstyle \pm 0.06}$ & $3.35{\scriptstyle \pm 0.07}$ & $3.33{\scriptstyle \pm 0.08}$ & $\mathbf{3.31}{\scriptstyle \pm 0.07}$ & $3.66{\scriptstyle \pm 0.07}$ \\
    SCPF & 3 & $5.10{\scriptstyle \pm 0.07}$ & $2.88{\scriptstyle \pm 0.08}$ & $2.78{\scriptstyle \pm 0.05}$ & $2.83{\scriptstyle \pm 0.06}$ & $2.94{\scriptstyle \pm 0.07}$ & $\mathbf{2.76}{\scriptstyle \pm 0.06}$ & $2.79{\scriptstyle \pm 0.05}$ \\
    SF1 & 3 & $1.07{\scriptstyle \pm 0.07}$ & $1.90{\scriptstyle \pm 0.27}$ & $\mathbf{1.04}{\scriptstyle \pm 0.08}$ & $1.44{\scriptstyle \pm 0.13}$ & $1.28{\scriptstyle \pm 0.19}$ & $1.11{\scriptstyle \pm 0.12}$ & $1.41{\scriptstyle \pm 0.09}$ \\
    SF2 & 3 & $1.25{\scriptstyle \pm 0.05}$ & $\mathbf{-0.72}{\scriptstyle \pm 0.43}$ & $0.11{\scriptstyle \pm 0.09}$ & $0.51{\scriptstyle \pm 0.04}$ & $0.45{\scriptstyle \pm 0.05}$ & $0.47{\scriptstyle \pm 0.04}$ & $0.03{\scriptstyle \pm 0.15}$ \\
    Student & 3 & $\mathbf{3.46}{\scriptstyle \pm 0.03}$ & $3.48{\scriptstyle \pm 0.05}$ & $3.53{\scriptstyle \pm 0.03}$ & $3.52{\scriptstyle \pm 0.03}$ & $3.49{\scriptstyle \pm 0.03}$ & $3.49{\scriptstyle \pm 0.03}$ & $3.61{\scriptstyle \pm 0.04}$ \\
    \midrule
    Births-2 & 4 & $12.45{\scriptstyle \pm 0.01}$ & $3.14{\scriptstyle \pm 0.04}$ & $3.51{\scriptstyle \pm 0.03}$ & $3.07{\scriptstyle \pm 0.04}$ & $3.99{\scriptstyle \pm 0.05}$ & $3.07{\scriptstyle \pm 0.04}$ & $\mathbf{2.77}{\scriptstyle \pm 0.02}$ \\
    Households & 4 & $19.32{\scriptstyle \pm 0.02}$ & $\mathbf{17.69}{\scriptstyle \pm 0.02}$ & $17.72{\scriptstyle \pm 0.01}$ & $17.71{\scriptstyle \pm 0.01}$ & $17.71{\scriptstyle \pm 0.01}$ & $17.70{\scriptstyle \pm 0.01}$ & $17.72{\scriptstyle \pm 0.01}$ \\
    Stock & 4 & $2.53{\scriptstyle \pm 0.03}$ & $2.42{\scriptstyle \pm 0.04}$ & $2.44{\scriptstyle \pm 0.03}$ & $2.42{\scriptstyle \pm 0.03}$ & $2.41{\scriptstyle \pm 0.03}$ & $\mathbf{2.40}{\scriptstyle \pm 0.03}$ & $2.52{\scriptstyle \pm 0.02}$ \\
    \midrule
    Air & 6 & $9.47{\scriptstyle \pm 0.02}$ & $4.59{\scriptstyle \pm 0.03}$ & $4.65{\scriptstyle \pm 0.01}$ & $4.57{\scriptstyle \pm 0.01}$ & $4.64{\scriptstyle \pm 0.02}$ & $\mathbf{4.57}{\scriptstyle \pm 0.01}$ & $4.65{\scriptstyle \pm 0.02}$ \\
    ATP1d & 6 & $15.88{\scriptstyle \pm 0.17}$ & $15.97{\scriptstyle \pm 0.18}$ & $\mathbf{15.52}{\scriptstyle \pm 0.15}$ & $15.79{\scriptstyle \pm 0.17}$ & $15.65{\scriptstyle \pm 0.15}$ & $15.61{\scriptstyle \pm 0.16}$ & $15.97{\scriptstyle \pm 0.12}$ \\
    ATP7d & 6 & $15.24{\scriptstyle \pm 0.19}$ & $15.85{\scriptstyle \pm 0.25}$ & $15.34{\scriptstyle \pm 0.23}$ & $15.27{\scriptstyle \pm 0.28}$ & $15.14{\scriptstyle \pm 0.26}$ & $\mathbf{15.10}{\scriptstyle \pm 0.27}$ & $15.69{\scriptstyle \pm 0.17}$ \\
    \midrule
    RF1 & 8 & $5.56{\scriptstyle \pm 0.04}$ & $3.54{\scriptstyle \pm 0.07}$ & $4.51{\scriptstyle \pm 0.14}$ & $3.47{\scriptstyle \pm 0.05}$ & $3.62{\scriptstyle \pm 0.04}$ & $\mathbf{3.46}{\scriptstyle \pm 0.05}$ & $3.54{\scriptstyle \pm 0.05}$ \\
    RF2 & 8 & $5.86{\scriptstyle \pm 0.05}$ & $\mathbf{3.94}{\scriptstyle \pm 0.08}$ & $4.87{\scriptstyle \pm 0.16}$ & $3.96{\scriptstyle \pm 0.06}$ & $4.08{\scriptstyle \pm 0.06}$ & $3.95{\scriptstyle \pm 0.06}$ & $4.03{\scriptstyle \pm 0.05}$ \\
    \midrule
    OSales & 12 & $58.14{\scriptstyle \pm 0.40}$ & $52.73{\scriptstyle \pm 0.50}$ & $51.94{\scriptstyle \pm 0.35}$ & $51.90{\scriptstyle \pm 0.37}$ & $52.08{\scriptstyle \pm 0.36}$ & $\mathbf{51.69}{\scriptstyle \pm 0.35}$ & $53.36{\scriptstyle \pm 0.28}$ \\
    \midrule
    WQ & 14 & $12.76{\scriptstyle \pm 0.04}$ & $13.61{\scriptstyle \pm 0.14}$ & $\mathbf{12.47}{\scriptstyle \pm 0.04}$ & $13.34{\scriptstyle \pm 0.07}$ & $13.28{\scriptstyle \pm 0.06}$ & $13.21{\scriptstyle \pm 0.07}$ & $12.59{\scriptstyle \pm 0.04}$ \\
    \midrule
    OES10 & 16 & $56.80{\scriptstyle \pm 0.50}$ & $55.90{\scriptstyle \pm 0.85}$ & $54.25{\scriptstyle \pm 0.42}$ & $53.72{\scriptstyle \pm 0.47}$ & $54.26{\scriptstyle \pm 0.47}$ & $\mathbf{53.41}{\scriptstyle \pm 0.45}$ & $56.44{\scriptstyle \pm 0.63}$ \\
    OES97 & 16 & $59.82{\scriptstyle \pm 0.26}$ & $58.50{\scriptstyle \pm 0.62}$ & $57.48{\scriptstyle \pm 0.42}$ & $56.89{\scriptstyle \pm 0.39}$ & $56.89{\scriptstyle \pm 0.31}$ & $\mathbf{56.51}{\scriptstyle \pm 0.38}$ & $58.66{\scriptstyle \pm 0.51}$ \\
    SCM1d & 16 & $44.28{\scriptstyle \pm 0.06}$ & $43.93{\scriptstyle \pm 0.06}$ & $43.89{\scriptstyle \pm 0.05}$ & $43.88{\scriptstyle \pm 0.05}$ & $43.88{\scriptstyle \pm 0.06}$ & $\mathbf{43.87}{\scriptstyle \pm 0.05}$ & $44.01{\scriptstyle \pm 0.05}$ \\
    SCM20d & 16 & $45.22{\scriptstyle \pm 0.06}$ & $44.93{\scriptstyle \pm 0.07}$ & $44.91{\scriptstyle \pm 0.06}$ & $44.89{\scriptstyle \pm 0.05}$ & $\mathbf{44.86}{\scriptstyle \pm 0.06}$ & $44.88{\scriptstyle \pm 0.05}$ & $45.01{\scriptstyle \pm 0.06}$ \\
    \midrule
    \textbf{Mean rank} & & $5.66$ & $4.24$ & $4.24$ & $3.83$ & $3.31$ & $\mathbf{2.03}$ & $4.69$ \\
    \bottomrule
  \end{tabular}
  \caption{$\log_{10}$ volume on real-world datasets (random-forest base model, $\alpha=0.10$). Mean over 20 seeds. \textbf{Bold}: lowest value per row.}
  \label{tab:real_world_rf_vol}
\end{table}

\paragraph{Staircase compactness.}
\tableref{tab:real_world_frac_retained} reports the fraction of the $2^d$
staircase corner rectangles that survive Pareto pruning
(\algorithmref{alg:staircase}), i.e.\ the proportion of the $\mathrm{SC}^2$
candidate set that is not dominated and therefore stays part of the region.
For the default $\gamma=1-\alpha$, this fraction falls from a mean of $0.57$
at $d=2$ to $0.28$ at $d\in\{3,4\}$, $0.05$ at $d\in\{6,8\}$, and below
$0.001$ at $d\geq 12$: although the number of candidate rectangles grows as
$2^d$, pruning keeps the active set small in practice, so $\mathrm{SC}^2$
remains cheap to evaluate at the dimensions considered here.

The $\gamma_{\mathrm{rect}}$ column illustrates the boundary case discussed in
\sectionref{sec:gamma-choice}: when the base rectangle covers a fraction
$1-\alpha$ of calibration points, $\hatlam_{\bar{S},S}\leq 1$ for every
$S\neq\varnothing$ and the staircase collapses to its single empty-subset
corner, $1/2^d$ of the candidate set. This holds on 25 of the 29 datasets.
It fails on ATP1d, ATP7d, OES10, and OES97, most severely on OES10 and
OES97, where $d=16$ but the calibration set has only $n=50$ to $60$
points, too few for any rectangle, even the loosest one, to jointly cover
$1-\alpha$ of calibration points across all 16 targets. Up to several
thousand of the $65{,}536$ candidate rectangles then survive pruning.

\begin{table}[htbp]
  \centering
  \small
  \setlength{\tabcolsep}{4pt}
  \begin{tabular}{l r r r r}
    \toprule
    Dataset & $d$ & \multicolumn{3}{c}{$\mathrm{SC}^2$} \\
    \cmidrule(lr){3-5}
    & & \multicolumn{1}{c}{$\gamma{=}0.8$} & \multicolumn{1}{c}{$\gamma{=}1{-}\alpha$} & \multicolumn{1}{c}{$\gamma_{\mathrm{rect}}$} \\
    \midrule
    ANSUR-II & 2 & $0.537$ & $0.613$ & $0.250$ \\
    Bio & 2 & $0.562$ & $0.662$ & $0.250$ \\
    Births-1 & 2 & $0.450$ & $0.512$ & $0.250$ \\
    Blog & 2 & $0.537$ & $0.525$ & $0.250$ \\
    CalCOFI & 2 & $0.600$ & $0.613$ & $0.250$ \\
    EDM & 2 & $0.500$ & $0.550$ & $0.250$ \\
    ENB & 2 & $0.613$ & $0.613$ & $0.250$ \\
    House & 2 & $0.550$ & $0.588$ & $0.250$ \\
    Taxi & 2 & $0.575$ & $0.650$ & $0.250$ \\
    Wage & 2 & $0.400$ & $0.362$ & $0.250$ \\
    \midrule
    Jura & 3 & $0.338$ & $0.381$ & $0.125$ \\
    SCPF & 3 & $0.287$ & $0.381$ & $0.125$ \\
    SF1 & 3 & $0.250$ & $0.344$ & $0.125$ \\
    SF2 & 3 & $0.231$ & $0.256$ & $0.125$ \\
    Student & 3 & $0.350$ & $0.338$ & $0.125$ \\
    \midrule
    Births-2 & 4 & $0.078$ & $0.081$ & $0.062$ \\
    Households & 4 & $0.228$ & $0.231$ & $0.062$ \\
    Stock & 4 & $0.212$ & $0.169$ & $0.062$ \\
    \midrule
    Air & 6 & $0.048$ & $0.070$ & $0.016$ \\
    ATP1d & 6 & $0.059$ & $0.080$ & $0.017$ \\
    ATP7d & 6 & $0.052$ & $0.055$ & $0.054$ \\
    \midrule
    RF1 & 8 & $0.011$ & $0.015$ & $0.004$ \\
    RF2 & 8 & $0.011$ & $0.019$ & $0.004$ \\
    \midrule
    OSales & 12 & $0.001$ & $0.001$ & $0.001$ \\
    \midrule
    WQ & 14 & $0.000$ & $0.000$ & $0.000$ \\
    \midrule
    OES10 & 16 & $0.000$ & $0.000$ & $0.004$ \\
    OES97 & 16 & $0.000$ & $0.000$ & $0.007$ \\
    SCM1d & 16 & $0.000$ & $0.000$ & $0.000$ \\
    SCM20d & 16 & $0.000$ & $0.000$ & $0.000$ \\
    \bottomrule
  \end{tabular}
  \caption{Fraction of the $2^d$ staircase corner rectangles that survive Pareto pruning (random-forest base model, $\alpha=0.10$). Mean over 20 seeds. Lower values mean the staircase collapses to fewer active rectangles.}
  \label{tab:real_world_frac_retained}
\end{table}

\paragraph{Runtime.}
\tableref{tab:runtime} reports mean calibration time and formal complexity,
for the methods compared in \tableref{tab:real_world_rf_vol}, on synthetic
calibration errors at $d=8$, $n_\mathrm{calib}=300$ (prediction is negligible
for every method, well under one microsecond per point, and is omitted).
Max and our SCO variant reduce to a handful of order-statistic computations
and take a fraction of a millisecond; CHR has the same $O(nd\log n)$
complexity but a larger constant from its two-stage calibration split. The
staircase variants with fixed $\gamma$ ($\mathrm{SC}^2$, $\gamma \in
\{1-\alpha, 0.8\}$) enumerate up to $2^d$ candidate corner rectangles and
prune the dominated ones with a pairwise comparison, costing under $3$~ms at
$d=8$; the calibration-adaptive $\gamma_{\mathrm{rect}}$ variant additionally
scans $O(n)$ candidate base-rectangle levels, each requiring an $O(nd)$ pass
over the calibration set, which raises its calibration time to about
$7$~ms. TSCP enumerates all partition cells in principle ($O(n^d)$), but
falls back to an $O(dn\log n)$ coordinate-wise search once $d$ exceeds a
small cutoff, which is the regime at $d=8$ and explains why its time is
comparable to the staircase variants despite the worst-case exponent. All
calibration costs remain on the order of milliseconds or less at this
scale, well within the cost of fitting the base regressor.

\begin{table}[htbp]
  \centering
  \small
  \begin{tabular}{l r l}
    \toprule
    Method & Calib.\ time (ms) & Complexity \\
    \midrule
    Max & $0.034$ & $O(nd + n\log n)$ \\
    CHR & $0.096$ & $O(nd\log n)$ \\
    TSCP & $4.654$ & $O(\min(n^d,\, dn\log n))$ \\
    SCO ($\gamma$=1-$\alpha$) & $0.059$ & $O(nd\log n)$ \\
    $\mathrm{SC}^2$ ($\gamma$=0.8) & $2.756$ & $O(2^d n\log n + 4^d)$ \\
    $\mathrm{SC}^2$ ($\gamma$=1-$\alpha$) & $2.770$ & $O(2^d n\log n + 4^d)$ \\
    $\mathrm{SC}^2$ ($\gamma_{\mathrm{rect}}$) & $7.291$ & $O(n^2 d + 2^d n\log n + 4^d)$ \\
    \bottomrule
  \end{tabular}
  \caption{Mean calibration time and formal complexity ($n$ = $n_\mathrm{calib}$, $d$ = number of targets) for the methods compared in \tableref{tab:real_world_rf_vol}. Timings measured at $d=8$, $n_\mathrm{calib}=300$, $\alpha=0.1$, synthetic errors, 10 seeds.}
  \label{tab:runtime}
\end{table}

%% ==========================================================================
\section{Conclusion}
\label{sec:conclusion}
%% ==========================================================================

We introduced the scaling-score method for conformal multi-target regression.
The method wraps any point predictor, requires only component-wise absolute
residuals, and uses a single calibration set to produce four nested prediction
regions: an inner rectangle (SCI, tight but without formal coverage guarantee), the
exact set $\calR_\alpha$ (valid and tightest, characterised cell by cell), the
staircase ($\mathrm{SC}^2$, a valid union of at most $2^d$ hyperrectangles, computable at
calibration time), and the outer rectangle (SCO, a single closed-form valid
hyperrectangle).  All output types share a single calibration run and a single
hyperparameter $\gamma$, for which $\gamma = 1-\alpha$ is a natural and
empirically reliable default.  On the theoretical side, we gave an exact
cell-by-cell characterisation of $\calR_\alpha$, proved downward-closedness,
and derived closed-form sandwich bounds.

Experiments on 29 real-world datasets spanning $d \in \{2,\ldots,16\}$ targets
confirm valid joint coverage throughout.  $\mathrm{SC}^2$ consistently matches
or outperforms all baselines in volume, with the advantage over max-aggregation
growing systematically with $d$, and the closed-form SCO outperforms
all hyperrectangular baselines in mean rank.  Future directions include extending
the method to conditional coverage guarantees, to non-exchangeable settings such
as time series, and to data-driven selection of $\gamma$.

\bibliography{refs}

\begin{thebibliography}{19}
\providecommand{\natexlab}[1]{#1}
\providecommand{\url}[1]{\texttt{#1}}
\expandafter\ifx\csname urlstyle\endcsname\relax
  \providecommand{\doi}[1]{doi: #1}\else
  \providecommand{\doi}{doi: \begingroup \urlstyle{rm}\Url}\fi

\bibitem[Camehl et~al.(2025)Camehl, Fok, and Gruber]{camehl_superlevel_2025}
Annika Camehl, Dennis Fok, and Kathrin Gruber.
\newblock On superlevel sets of conditional densities and multivariate quantile
  regression.
\newblock \emph{Journal of Econometrics}, 249:\penalty0 105807, May 2025.
\newblock ISSN 0304-4076.
\newblock \doi{10.1016/j.jeconom.2024.105807}.

\bibitem[Cevid et~al.(2022)Cevid, Michel, N{\"a}f, B{\"u}hlmann, and
  Meinshausen]{cevid_distributional_2022}
Domagoj Cevid, Loris Michel, Jeffrey N{\"a}f, Peter B{\"u}hlmann, and Nicolai
  Meinshausen.
\newblock Distributional random forests: {{Heterogeneity}} adjustment and
  multivariate distributional regression.
\newblock \emph{Journal of Machine Learning Research}, 23\penalty0
  (333):\penalty0 1--79, 2022.

\bibitem[{del Barrio} et~al.(2022){del Barrio}, Sanz, and
  Hallin]{delbarrio_nonparametric_2022}
Eustasio {del Barrio}, Alberto~Gonzalez Sanz, and Marc Hallin.
\newblock Nonparametric {{Multiple-Output Center-Outward Quantile Regression}},
  April 2022.

\bibitem[Dheur et~al.(2025)Dheur, Fontana, Estievenart, Desobry, and
  Taieb]{dheur_multioutput_2025}
Victor Dheur, Matteo Fontana, Yorick Estievenart, Naomi Desobry, and
  Souhaib~Ben Taieb.
\newblock Multi-{{Output Conformal Regression}}: {{A Unified Comparative
  Study}} with {{New Conformity Scores}}, January 2025.

\bibitem[D{\v z}eroski et~al.(2000)D{\v z}eroski, Dem{\v s}ar, and
  Grbovi{\'c}]{dzeroski_predicting_2000}
Sa{\v s}o D{\v z}eroski, Damjan Dem{\v s}ar, and Jasna Grbovi{\'c}.
\newblock Predicting chemical parameters of river water quality from
  bioindicator data.
\newblock \emph{Applied Intelligence}, 13\penalty0 (1):\penalty0 7--17, 2000.

\bibitem[Fan and Sesia(2025)]{fan_interpretable_2025}
Yunjie Fan and Matteo Sesia.
\newblock Interpretable {{Multivariate Conformal Prediction}} with {{Fast
  Transductive Standardization}}, December 2025.

\bibitem[Feldman et~al.(2023)Feldman, Bates, and
  Romano]{feldman_calibrated_2023}
Shai Feldman, Stephen Bates, and Yaniv Romano.
\newblock Calibrated multiple-output quantile regression with representation
  learning.
\newblock \emph{Journal of Machine Learning Research}, 24\penalty0
  (24):\penalty0 1--48, 2023.

\bibitem[Goovaerts(1997)]{goovaerts_geostatistics_1997}
Pierre Goovaerts.
\newblock \emph{Geostatistics for Natural Resources Evaluation}.
\newblock Oxford university press, 1997.

\bibitem[Izbicki et~al.(2020)Izbicki, Shimizu, and Stern]{izbicki_cdsplit_2020}
Rafael Izbicki, Gilson~T. Shimizu, and R.~Stern.
\newblock {{CD-split}} and {{HPD-split}}: {{Efficient}} conformal regions in
  high dimensions.
\newblock \emph{Journal of Machine Learning Research}, 23:\penalty0
  87:1--87:32, 2020.

\bibitem[Karali{\v c} and Bratko(1997)]{karalic_first_1997}
Aram Karali{\v c} and Ivan Bratko.
\newblock First {{Order Regression}}.
\newblock \emph{Machine Learning}, 26\penalty0 (2-3):\penalty0 147--176,
  February 1997.
\newblock ISSN 0885-6125, 1573-0565.
\newblock \doi{10.1023/A:1007365207130}.

\bibitem[Lei et~al.(2018)Lei, G'Sell, Rinaldo, Tibshirani, and
  Wasserman]{lei_distributionfree_2018}
Jing Lei, Max G'Sell, Alessandro Rinaldo, Ryan~J. Tibshirani, and Larry
  Wasserman.
\newblock Distribution-{{Free Predictive Inference}} for {{Regression}}.
\newblock \emph{Journal of the American Statistical Association}, 113\penalty0
  (523):\penalty0 1094--1111, July 2018.
\newblock ISSN 0162-1459.
\newblock \doi{10.1080/01621459.2017.1307116}.

\bibitem[Messoudi et~al.(2021)Messoudi, Destercke, and
  Rousseau]{messoudi_copulabased_2021}
Soundouss Messoudi, S{\'e}bastien Destercke, and Sylvain Rousseau.
\newblock Copula-based conformal prediction for {{Multi-Target Regression}}.
\newblock \emph{arXiv:2101.12002 [cs, stat]}, January 2021.

\bibitem[Papadopoulos et~al.(2002)Papadopoulos, Proedrou, Vovk, and
  Gammerman]{papadopoulos_inductive_2002}
Harris Papadopoulos, Kostas Proedrou, Volodya Vovk, and Alex Gammerman.
\newblock Inductive confidence machines for regression.
\newblock In \emph{European {{Conference}} on {{Machine Learning}}}, pages
  345--356. Springer, 2002.

\bibitem[Romano et~al.(2019)Romano, Patterson, and
  Cand{\`e}s]{romano_conformalized_2019}
Yaniv Romano, Evan Patterson, and Emmanuel~J. Cand{\`e}s.
\newblock Conformalized {{Quantile Regression}}.
\newblock In Hanna~M. Wallach, Hugo Larochelle, Alina Beygelzimer, Florence
  {d'Alch{\'e}-Buc}, Emily~B. Fox, and Roman Garnett, editors, \emph{Advances
  in {{Neural Information Processing Systems}} 32: {{Annual Conference}} on
  {{Neural Information Processing Systems}} 2019, {{NeurIPS}} 2019,
  {{December}} 8-14, 2019, {{Vancouver}}, {{BC}}, {{Canada}}}, pages
  3538--3548, 2019.

\bibitem[Sampson and Chan(2024)]{sampson_conformal_2024}
Max Sampson and Kung-Sik Chan.
\newblock Conformal {{Multi}}-{{Target Hyperrectangles}}.
\newblock \emph{Statistical Analysis and Data Mining: The ASA Data Science
  Journal}, 17\penalty0 (5):\penalty0 e11710, October 2024.
\newblock ISSN 1932-1864, 1932-1872.
\newblock \doi{10.1002/sam.11710}.

\bibitem[Tsanas and Xifara(2012)]{tsanas_accurate_2012}
Athanasios Tsanas and Angeliki Xifara.
\newblock Accurate quantitative estimation of energy performance of residential
  buildings using statistical machine learning tools.
\newblock \emph{Energy and buildings}, 49:\penalty0 560--567, 2012.

\bibitem[Tsoumakas et~al.(2011)Tsoumakas, {Spyromitros-Xioufis}, Vilcek, and
  Vlahavas]{tsoumakas_mulan_2011}
Grigorios Tsoumakas, Eleftherios {Spyromitros-Xioufis}, Jozef Vilcek, and
  Ioannis Vlahavas.
\newblock Mulan: {{A}} java library for multi-label learning.
\newblock \emph{The Journal of Machine Learning Research}, 12:\penalty0
  2411--2414, 2011.

\bibitem[Vovk et~al.(2005)Vovk, Gammerman, and Shafer]{vovk_algorithmic_2005}
Vladimir Vovk, Alex Gammerman, and Glenn Shafer.
\newblock \emph{Algorithmic Learning in a Random World}.
\newblock Springer Science \& Business Media, 2005.

\bibitem[Wang et~al.(2022)Wang, Gao, Yin, Zhou, and
  Blei]{wang_probabilistic_2022}
Zhendong Wang, Ruijiang Gao, Mingzhang Yin, Mingyuan Zhou, and David~M. Blei.
\newblock Probabilistic {{Conformal Prediction Using Conditional Random
  Samples}}, June 2022.

\end{thebibliography}

%% ==========================================================================
\clearpage
\appendix
%% ==========================================================================

\section{Proofs}
\label{app:proofs}

\phantomsection\label{proof:validity}%
\begin{proof}[Proof of \theoremref{prop:validity}]
Both $b_{n+1}$ and the scores $\sigma_i(e_{n+1})$ depend only on the multiset
$\{e_1,\ldots,e_n,e_{n+1}\}$ and are symmetric in the indices $\{1,\ldots,n+1\}$.
Under exchangeability, the rank of $\sigma_{n+1}$ among
$\{\sigma_1,\ldots,\sigma_{n+1}\}$ is uniformly distributed on $\{1,\ldots,n+1\}$.
Hence $\mathbb{P}(\sigma_{n+1}(e_{n+1}) \leq \hatlam(e_{n+1})) \geq 1-\alpha$,
which is exactly $\mathbb{P}(e_{n+1} \in \calR_\alpha) \geq 1-\alpha$.
\end{proof}

\phantomsection\label{proof:scores}%
\begin{proof}[Proof of \lemmaref{lem:scores}]
Substitute the cell formula for $b^{(k)}_{n+1}$ into~\eqref{eq:sigma}:
each coordinate contributes $e_{n+1}^{(k)}/L_k$, $e_{n+1}^{(k)}/e_{n+1}^{(k)}=1$,
or $e_{n+1}^{(k)}/U_k$ to $\sigma_{n+1}$, and
$e_i^{(k)}/L_k$, $e_i^{(k)}/e_{n+1}^{(k)}$, or $e_i^{(k)}/U_k$ to $\sigma_i$.
Non-increase of $e_i^{(k)}/e_{n+1}^{(k)}$ in $e_{n+1}^{(k)}$ is immediate.
\end{proof}

\phantomsection\label{proof:corner-exact}%
\begin{proof}[Proof of \propositionref{prop:corner-exact}]
The intersection requires, for each coordinate:
\begin{itemize}
\item $k \in \Ilow$: $e_{n+1}^{(k)} \leq L_k$ (cell) and
  $e_{n+1}^{(k)} \leq \hatlam L_k$ (rectangle), giving
  $e_{n+1}^{(k)} \leq \min(\hatlam,1) L_k$.
\item $k \in \Ihigh$: $e_{n+1}^{(k)} > U_k$ (cell) and
  $e_{n+1}^{(k)} \leq \hatlam U_k$ (rectangle), giving
  $e_{n+1}^{(k)} \in (U_k,\,\hatlam U_k]$, non-empty iff $\hatlam > 1$.
\end{itemize}
\end{proof}

\phantomsection\label{proof:mixed-monotone}%
\begin{proof}[Proof of \propositionref{prop:mixed-monotone}]
Each $\sigma_i(e_{n+1}) = \max(D_i,\,
  \max_{k \in \Imid} e_i^{(k)}/e_{n+1}^{(k)})$,
where $D_i = \max(\max_{k \in \Ilow} e_i^{(k)}/L_k,\,
  \max_{k \in \Ihigh} e_i^{(k)}/U_k)$ is constant on the cell.
Since $e_i^{(k)}/e_{n+1}^{(k)}$ is decreasing in $e_{n+1}^{(k)}$,
$\sigma_i$ is non-increasing in each mid coordinate,
and so is the $q$-th order statistic.
The boundary values follow by substituting $e_{n+1}^{(k)} = L_k$
(resp.\ $U_k$) in the mid terms, which gives
$e_i^{(k)}/L_k$ (resp.\ $e_i^{(k)}/U_k$):
the scores then match those of the corner cell
$(\Ilow \cup \Imid,\,\varnothing,\,\Ihigh)$
(resp.\ $(\Ilow,\,\varnothing,\,\Imid \cup \Ihigh)$).
Since $\hatlam$ is the quantile of these scores, it too matches the
adjacent corner-cell value at each face, so the boundary of $\calR_\alpha$
is continuous across cell boundaries.
\end{proof}

\phantomsection\label{proof:mixed-exact}%
\begin{proof}[Proof of \propositionref{prop:mixed-exact}]
The test score formula follows from \lemmaref{lem:scores}: on the cell, the low
terms $e_{n+1}^{(k)}/L_k \leq 1$ and the mid terms equal $1$, so
$\sigma_{n+1} = \max(1,\,\max_{k \in \Ihigh} e_{n+1}^{(k)}/U_k)$.

For item~1: $\sigma_{n+1} = 1$, so $\sigma_{n+1} \leq \hatlam$ iff
$\hatlam \geq 1$.  The low coordinates satisfy
$e_{n+1}^{(k)} \leq L_k$ automatically.

For item~2: $\sigma_{n+1} \leq \hatlam$ iff
$e_{n+1}^{(k)} \leq \hatlam\,U_k$ for all $k \in \Ihigh$.
Combined with $e_{n+1}^{(k)} > U_k$, this requires $\hatlam > 1$.
\end{proof}

\phantomsection\label{proof:downward-closed}%
\begin{proof}[Proof of \propositionref{prop:downward-closed}]
\emph{Base vector.}
From~\eqref{eq:bk-formula},
$b^{(k)}(e) = \operatorname{clamp}(e^{(k)},L_k,U_k)$
is non-decreasing in $e^{(k)}$, so
$e' \preceq e_{n+1} \Rightarrow b(e') \preceq b(e_{n+1})$.

\emph{Test score.}
Since $x/\operatorname{clamp}(x,L,U)$ is non-decreasing (each branch $x/L$, $1$,
$x/U$ is non-decreasing), for each $k$:
$e'^{(k)}/b^{(k)}(e') \leq e_{n+1}^{(k)}/b^{(k)}(e_{n+1})$.
Taking the max: $\sigma_{n+1}(e') \leq \sigma_{n+1}(e_{n+1})$.

\emph{Calibration quantile.}
Since $b(e') \preceq b(e_{n+1})$, each calibration score satisfies
$\sigma_i(e') \geq \sigma_i(e_{n+1})$ for all $i \leq n$,
so $\hatlam(b(e')) \geq \hatlam(b(e_{n+1}))$.

\emph{Conclusion.}
$\sigma_{n+1}(e') \leq \sigma_{n+1}(e_{n+1}) \leq \hatlam(b(e_{n+1}))
\leq \hatlam(b(e'))$,
where the middle inequality uses $e_{n+1} \in \calR_\alpha$.
Hence $e' \in \calR_\alpha$.
\end{proof}

\phantomsection\label{proof:outer-bound}%
\begin{proof}[Proof of \propositionref{prop:outer-bound}]
Since $b^{(k)}_{n+1} \geq L_k$, for each $i \leq n$:
$\sigma_i(e_{n+1}) = \max_k e_i^{(k)}/b^{(k)}_{n+1}
  \leq \max_k e_i^{(k)}/L_k = \sigma_i^{\mathrm{ub}}$.
Hence $\hatlam(b_{n+1}) \leq \hatlam_{\mathrm{ub}}$.
For any $e_{n+1} \in \calR_\alpha$, by definition $\sigma_{n+1}(e_{n+1}) \leq \hatlam(e_{n+1})$, so:
$\sigma_{n+1}(e_{n+1}) \leq \hatlam(b_{n+1}) \leq \hatlam_{\mathrm{ub}}$.
By equation~\eqref{eq:containment}: $e_{n+1}^{(k)} \leq \hatlam_{\mathrm{ub}} b^{(k)}_{n+1}
\leq \hatlam_{\mathrm{ub}} U_k$ for all $k$.
\end{proof}

\phantomsection\label{proof:sandwich}%
\begin{proof}[Proof of \propositionref{prop:sandwich}]
\emph{Inner $\subseteq$ exact.}
Let $e_{n+1}^{(k)} \leq \hatlam_{\mathrm{lb}} L_k$ for all $k$.
Since $b^{(k)}_{n+1} \geq L_k$:
\[
  \sigma_{n+1}(e_{n+1})
  = \max_k \frac{e_{n+1}^{(k)}}{b^{(k)}_{n+1}}
  \leq \max_k \frac{e_{n+1}^{(k)}}{L_k}
  \leq \hatlam_{\mathrm{lb}}.
\]
Since $b^{(k)}_{n+1} \leq U_k$, for each $i \leq n$:
$\sigma_i(e_{n+1}) \geq \max_k e_i^{(k)}/U_k = \sigma_i^{\mathrm{lb}}$.
Taking the $q$-th order statistic:
$\hatlam(b_{n+1}) \geq \hatlam_{\mathrm{lb}}$.
Combining: $\sigma_{n+1}(e_{n+1}) \leq \hatlam_{\mathrm{lb}} \leq \hatlam(b_{n+1})$,
i.e.\ $e_{n+1} \in \calR_\alpha$.

\emph{Exact $\subseteq$ staircase.}
This is \propositionref{prop:extremal}.

This proves~\eqref{eq:sandwich}.

\emph{Staircase $\subseteq$ outer rectangle (under $\hatlam_{\mathrm{ub}} \geq 1$).}
Each corner rectangle satisfies $e^{*(k)}_S \leq \hatlam_{\mathrm{ub}} U_k$:
for $k \in S$, $\hatlam_{\bar{S},S} \leq \hatlam_{\mathrm{ub}}$ since
$\sigma_i^{\mathrm{ub}} \geq \sigma_i^{\bar{S},S}$; for $k \notin S$,
$U_k \leq \hatlam_{\mathrm{ub}} U_k$ because $\hatlam_{\mathrm{ub}} \geq 1$.
Hence the union is contained in $\prod_k [0, \hatlam_{\mathrm{ub}} U_k]$,
yielding~\eqref{eq:sandwich-full} together with \propositionref{prop:outer-bound}.
The hypothesis $\hatlam_{\mathrm{ub}} \geq 1$ is necessary: if
$\hatlam_{\mathrm{ub}} < 1$, then $\hatlam_{\mathrm{ub}} U_k < U_k = e^{*(k)}_\varnothing$
in every coordinate, so the empty-subset corner $e^*_\varnothing = U$ of the
staircase already lies strictly outside the outer rectangle.
\end{proof}

\phantomsection\label{proof:extremal}%
\begin{proof}[Proof of \propositionref{prop:extremal}]
Let $e \in \calR_\alpha$ and set $S = \{k : e^{(k)} > U_k\}$.
For $k \notin S$: $e^{(k)} \leq U_k = e^{*(k)}_S$ by definition.
For $k \in S$: $e$ belongs to a cell with $\Ihigh = S$
and $\Ilow \cup \Imid = \bar{S}$.
When $\Imid = \varnothing$, \propositionref{prop:corner-exact} gives
$e^{(k)} \leq \hatlam_{\bar{S},S} U_k$.
When $\Imid \neq \varnothing$, \propositionref{prop:mixed-monotone} gives
$\hatlam(e^{(\Imid)}) \leq \hatlam_{\bar{S},S}$;
the exact-region formula for mixed cells then gives
$e^{(k)} \leq \hatlam(e^{(\Imid)}) U_k \leq \hatlam_{\bar{S},S} U_k$.
In either case $e^{(k)} \leq \hatlam_{\bar{S},S} U_k = e^{*(k)}_S$.
Hence $e \preceq e^*_S$ and $e$ lies in the box indexed by $S$.
\end{proof}

%% ==========================================================================
\section{Dataset Details}
\label{app:datasets}
%% ==========================================================================

\begin{longtable}{l r r r r r l}
  \caption{Real-world datasets used in the experiments.  Split: 60\,\% train / 15\,\% calibration / 25\,\% test  averaged over 20 random seeds.}  \label{tab:datasets}\\
  \toprule
  Dataset & $d$ & $n$ & $n_{\text{train}}$ & $n_{\text{calib}}$ & $n_{\text{test}}$ & Source \\
  \midrule
  \endfirsthead
  \toprule
  Dataset & $d$ & $n$ & $n_{\text{train}}$ & $n_{\text{calib}}$ & $n_{\text{test}}$ & Source \\
  \midrule
  \endhead
  \midrule
  \multicolumn{7}{r}{\small\itshape (continued on next page)}\\
  \endfoot
  \bottomrule
  \endlastfoot
  EDM & 2 & 154 & 92 & 23 & 39 & \citet{karalic_first_1997} \\
  ENB & 2 & 768 & 460 & 115 & 193 & \citet{tsanas_accurate_2012} \\
  ANSUR~II & 2 & 1,986 & 1,191 & 297 & 498 & \citet{delbarrio_nonparametric_2022} \\
  Births~1 & 2 & 10,000 & 6,000 & 1,500 & 2,500 & \citet{cevid_distributional_2022} \\
  Wage & 2 & 10,000 & 6,000 & 1,500 & 2,500 & \citet{cevid_distributional_2022} \\
  Blog Feedback & 2 & 20,000 & 12,000 & 3,000 & 5,000 & \citet{feldman_calibrated_2023} \\
  House Prices & 2 & 21,613 & 12,967 & 3,241 & 5,405 & \citet{feldman_calibrated_2023} \\
  NYC Taxi & 2 & 30,000 & 18,000 & 4,500 & 7,500 & \citet{wang_probabilistic_2022} \\
  Bio & 2 & 45,730 & 27,438 & 6,859 & 11,433 & \citet{feldman_calibrated_2023} \\
  CalCOFI & 2 & 50,000 & 30,000 & 7,500 & 12,500 & \citet{delbarrio_nonparametric_2022} \\
  \midrule
  Solar Flare~1 & 3 & 323 & 193 & 48 & 82 & UCI ML Repository \\
  Jura & 3 & 359 & 215 & 53 & 91 & \citet{goovaerts_geostatistics_1997} \\
  Student Perf. & 3 & 395 & 237 & 59 & 99 & UCI ML Repository \\
  Solar Flare~2 & 3 & 1,066 & 639 & 159 & 268 & UCI ML Repository \\
  SCPF & 3 & 1,137 & 682 & 170 & 285 & Kaggle \\
  \midrule
  Stock & 4 & 950 & 570 & 142 & 238 & \citet{wang_probabilistic_2022} \\
  Households & 4 & 7,207 & 4,324 & 1,081 & 1,802 & \citet{camehl_superlevel_2025} \\
  Births~2 & 4 & 10,000 & 6,000 & 1,500 & 2,500 & \citet{cevid_distributional_2022} \\
  \midrule
  ATP~7D & 6 & 296 & 177 & 44 & 75 & \citet{tsoumakas_mulan_2011} \\
  ATP~1D & 6 & 337 & 202 & 50 & 85 & \citet{tsoumakas_mulan_2011} \\
  Air Quality & 6 & 10,000 & 6,000 & 1,500 & 2,500 & \citet{cevid_distributional_2022} \\
  \midrule
  River Flow~2 & 8 & 7,679 & 4,607 & 1,151 & 1,921 & \citet{tsoumakas_mulan_2011} \\
  River Flow~1 & 8 & 9,005 & 5,403 & 1,350 & 2,252 & \citet{tsoumakas_mulan_2011} \\
  \midrule
  Online Sales & 12 & 556 & 333 & 83 & 140 & Kaggle \\
  \midrule
  Water Quality & 14 & 1,060 & 636 & 159 & 265 & \citet{dzeroski_predicting_2000} \\
  \midrule
  OES~1997 & 16 & 334 & 200 & 50 & 84 & \citet{tsoumakas_mulan_2011} \\
  OES~2010 & 16 & 403 & 241 & 60 & 102 & \citet{tsoumakas_mulan_2011} \\
  SCM 20-day & 16 & 8,966 & 5,379 & 1,344 & 2,243 & \citet{tsoumakas_mulan_2011} \\
  SCM 1-day & 16 & 9,803 & 5,881 & 1,470 & 2,452 & \citet{tsoumakas_mulan_2011} \\
\end{longtable}

%% ==========================================================================
\section{Full Experimental Results}
\label{app:full-results}
%% ==========================================================================

Each figure row below corresponds to a group of datasets sharing the same number
of targets~$d$.  The title of each panel follows the format \emph{Dataset name
($d$ targets, $n_{\mathrm{train}}$ train, $n_{\mathrm{calib}}$ calib, $n_{\mathrm{test}}$ test)}.  The left
plot shows joint coverage (dashed line: nominal $1-\alpha$) and the right plot
shows $\log_{10}$ volume, both as a function of the method.  Four groups of
figures are included: random forest with $\alpha=0.10$, random forest with
$\alpha=0.05$, linear model with $\alpha=0.10$, and linear model with
$\alpha=0.05$.  Within each panel, methods are listed in decreasing order of
performance, so the method at the top is the best performer for that particular
dataset.

\subsection*{Random forest, $\alpha=0.10$}
\noindent
\includegraphics[width=\textwidth]{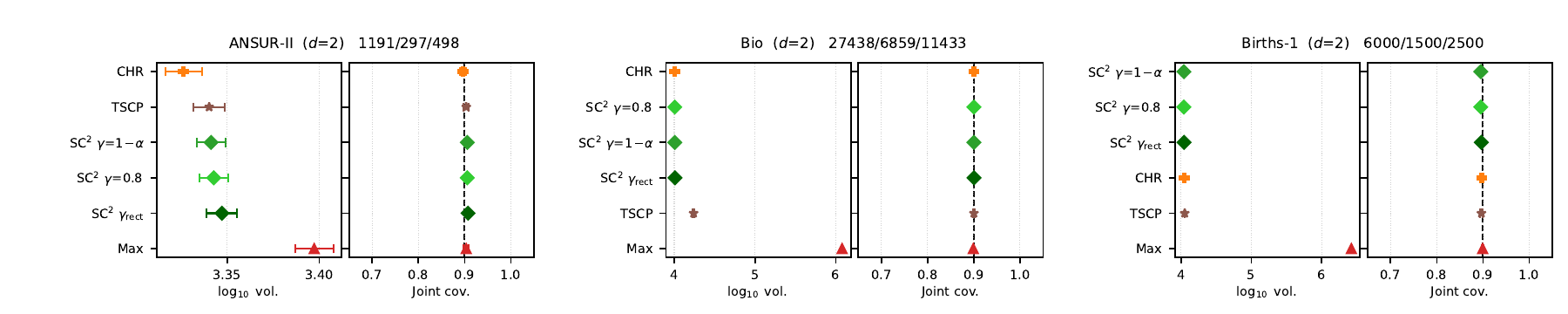}
\includegraphics[width=\textwidth]{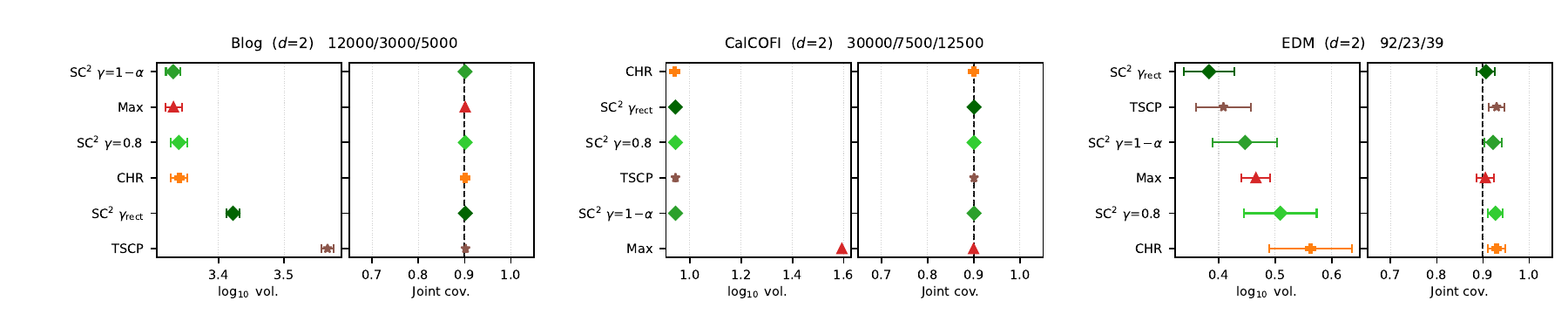}
\includegraphics[width=\textwidth]{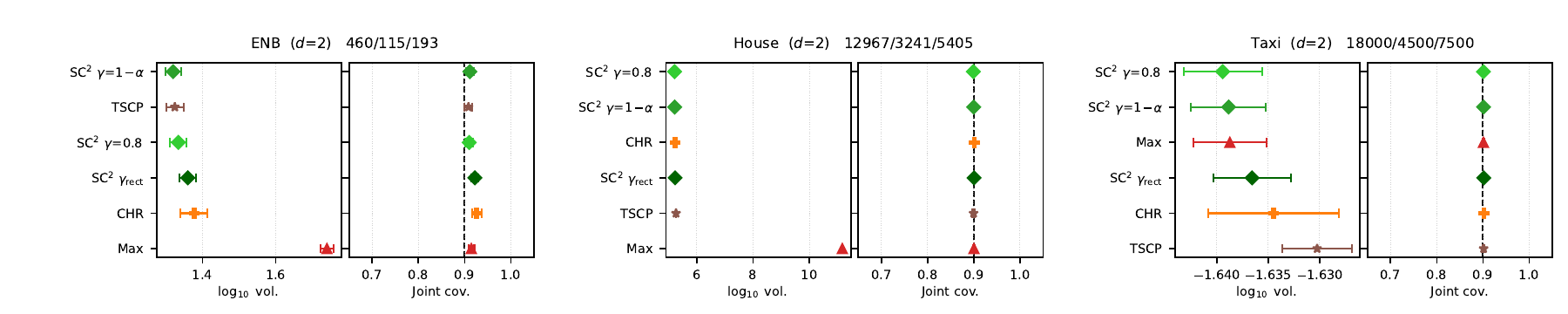}
\includegraphics[width=\textwidth]{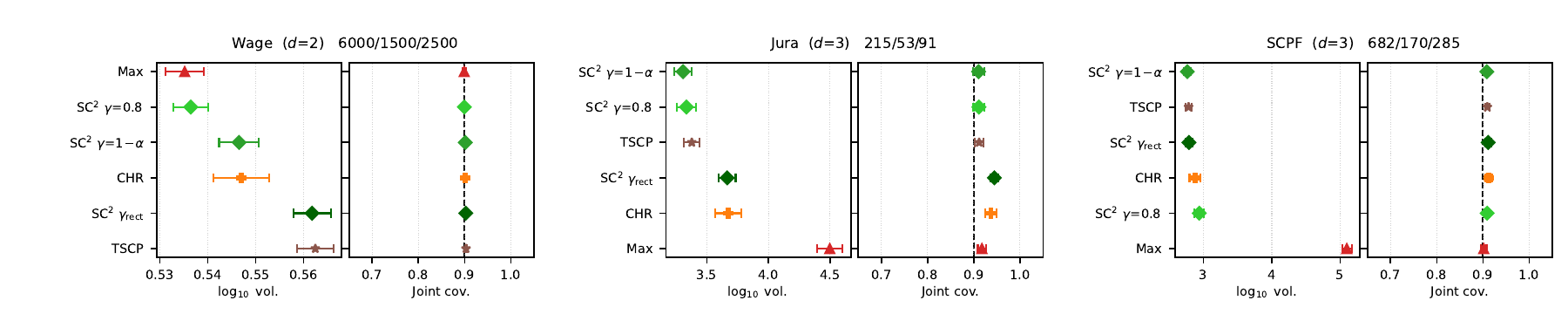}
\includegraphics[width=\textwidth]{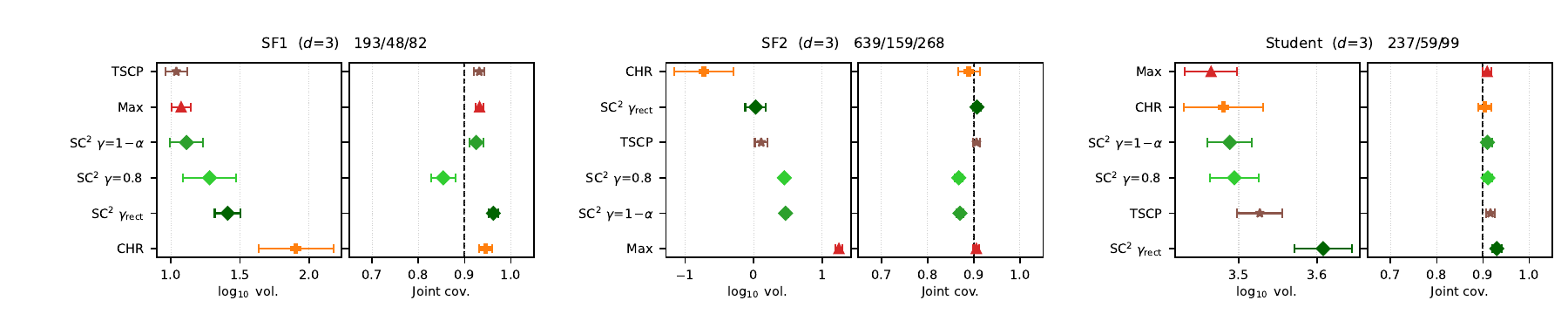}
\includegraphics[width=\textwidth]{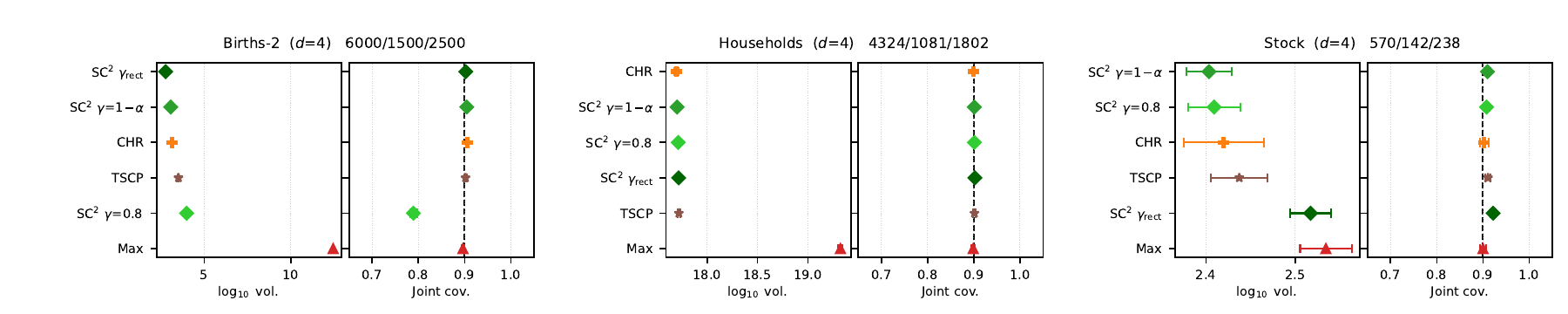}
\includegraphics[width=\textwidth]{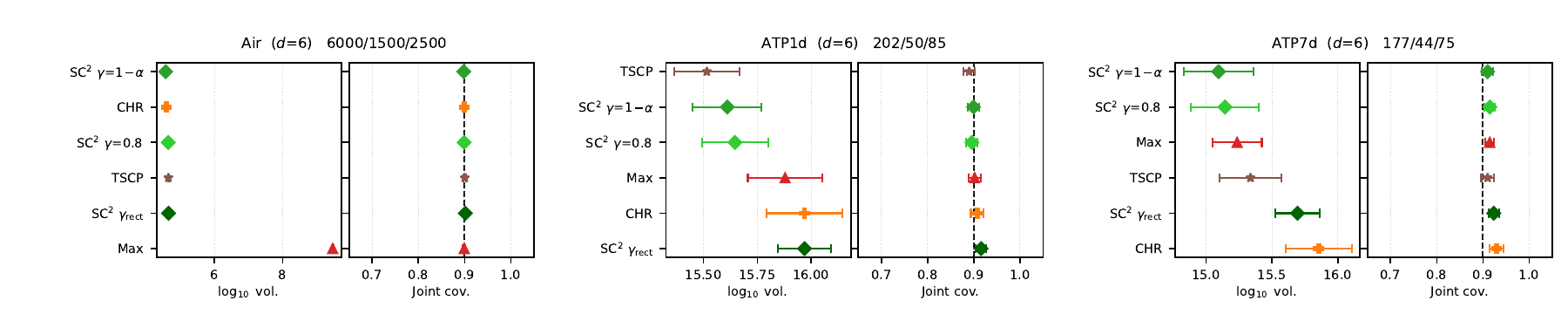}
\includegraphics[width=\textwidth]{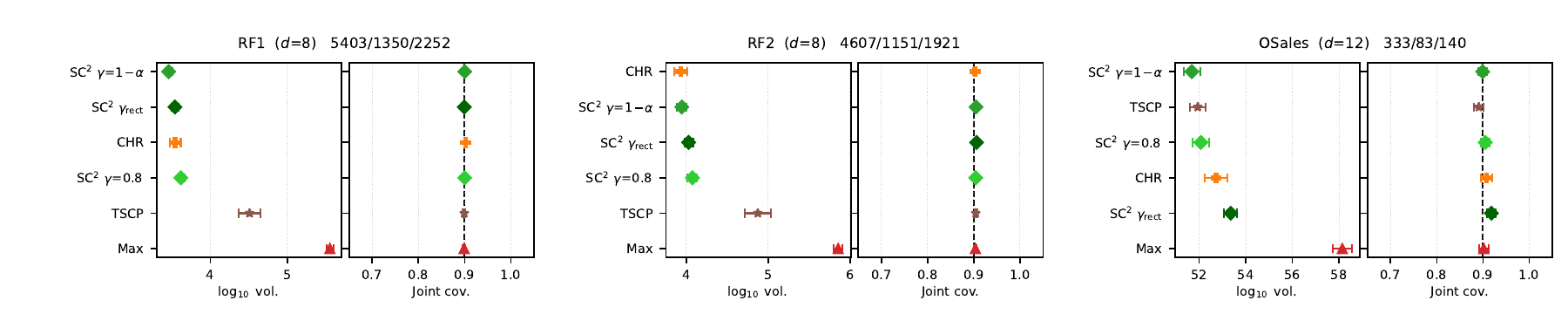}
\includegraphics[width=\textwidth]{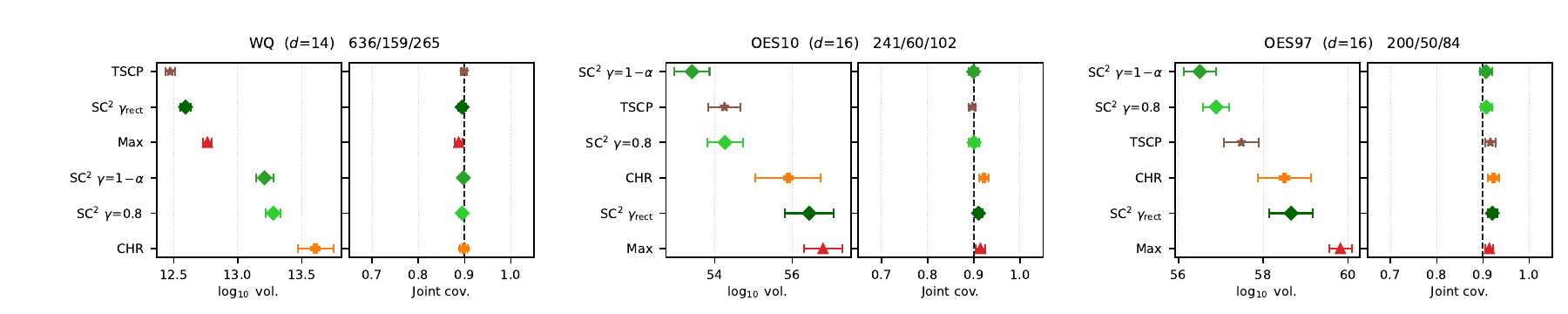}
\includegraphics[width=\textwidth]{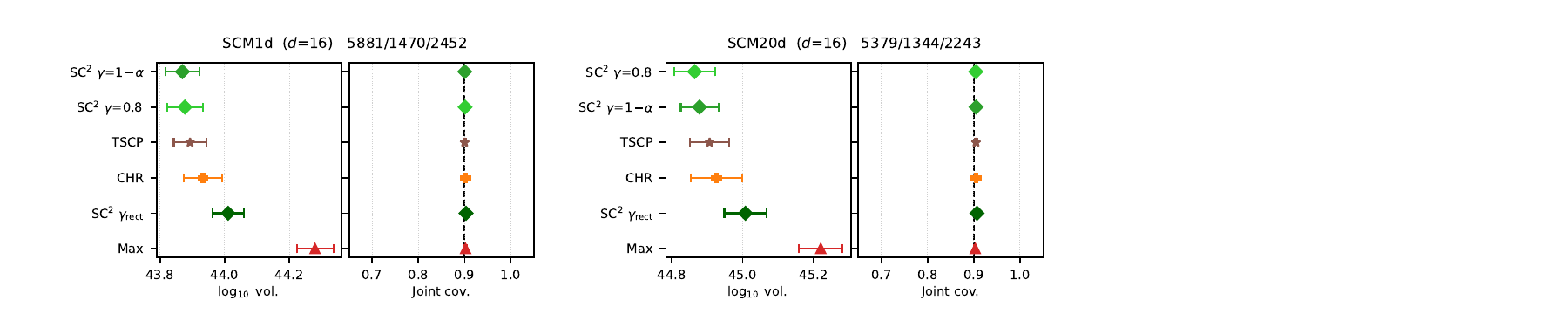}

\subsection*{Random forest, $\alpha=0.05$}
\noindent
\includegraphics[width=\textwidth]{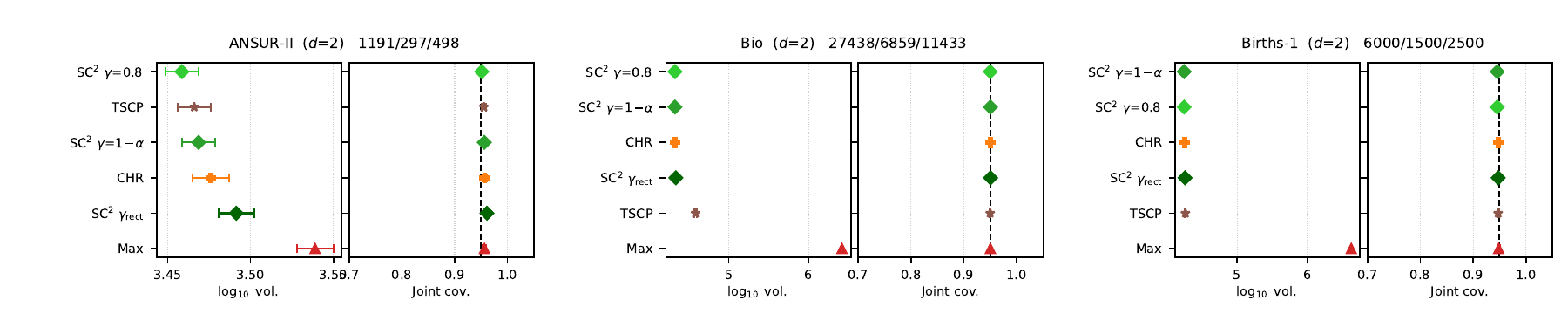}
\includegraphics[width=\textwidth]{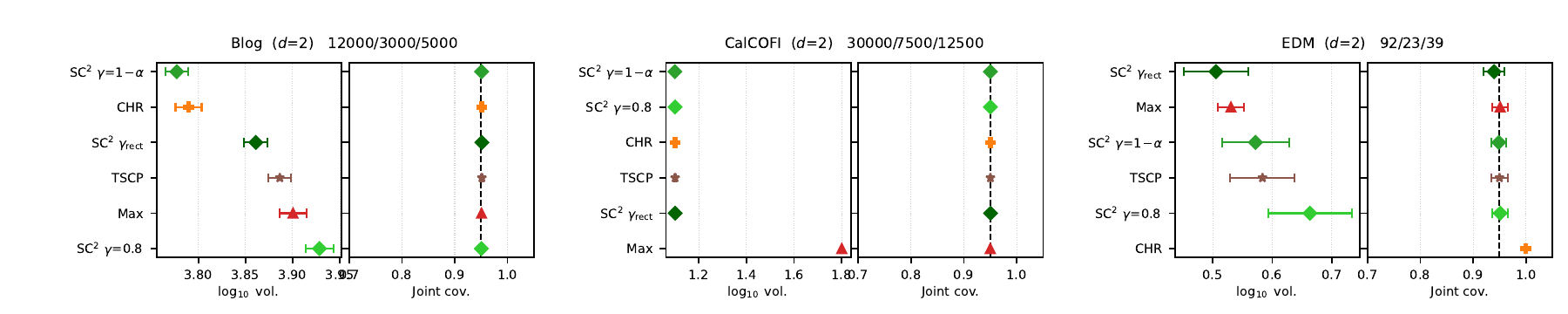}
\includegraphics[width=\textwidth]{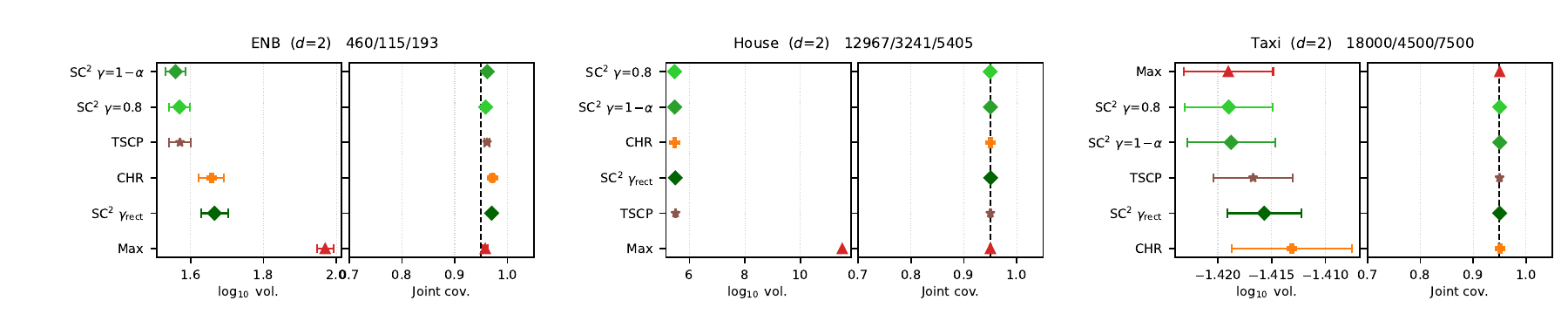}
\includegraphics[width=\textwidth]{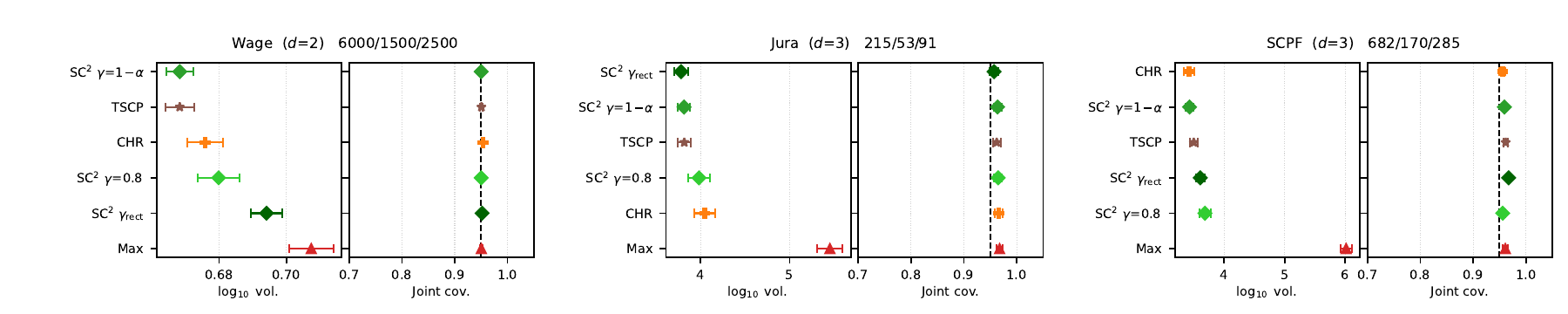}
\includegraphics[width=\textwidth]{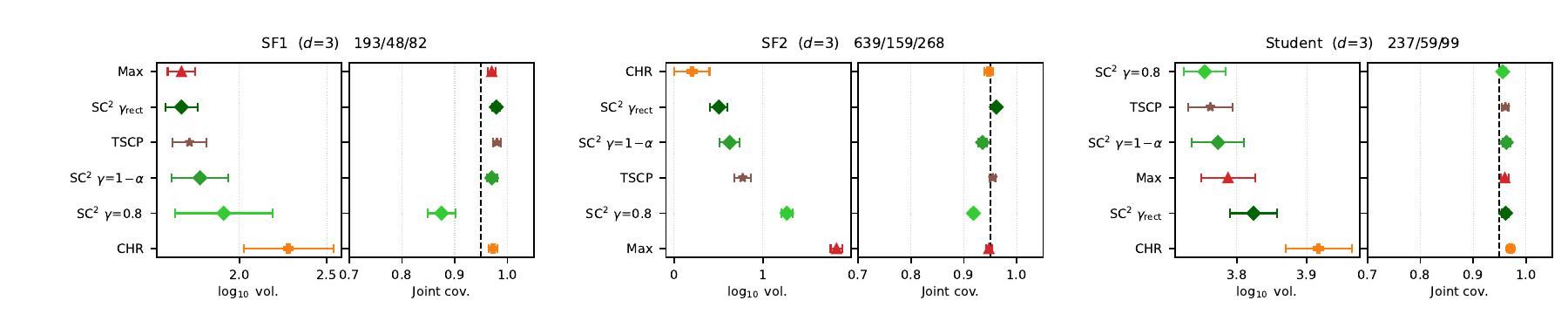}
\includegraphics[width=\textwidth]{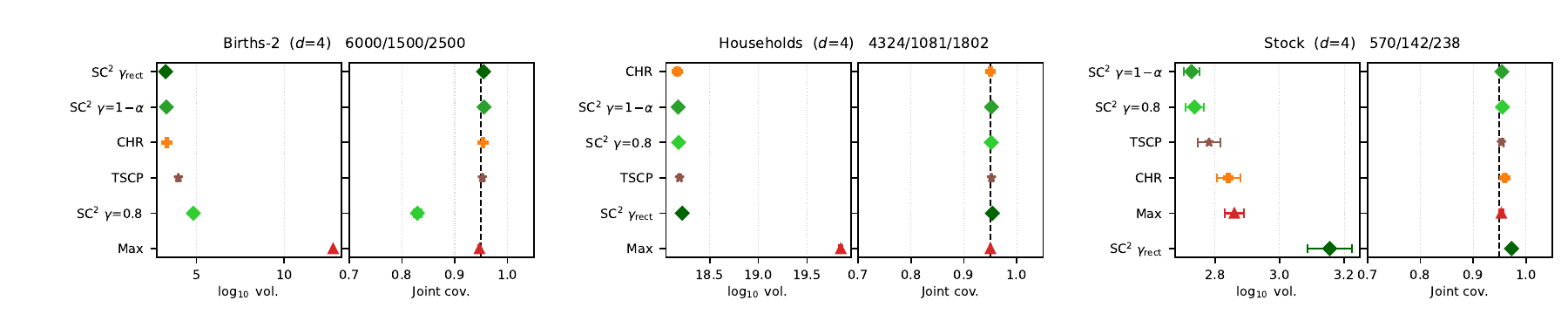}
\includegraphics[width=\textwidth]{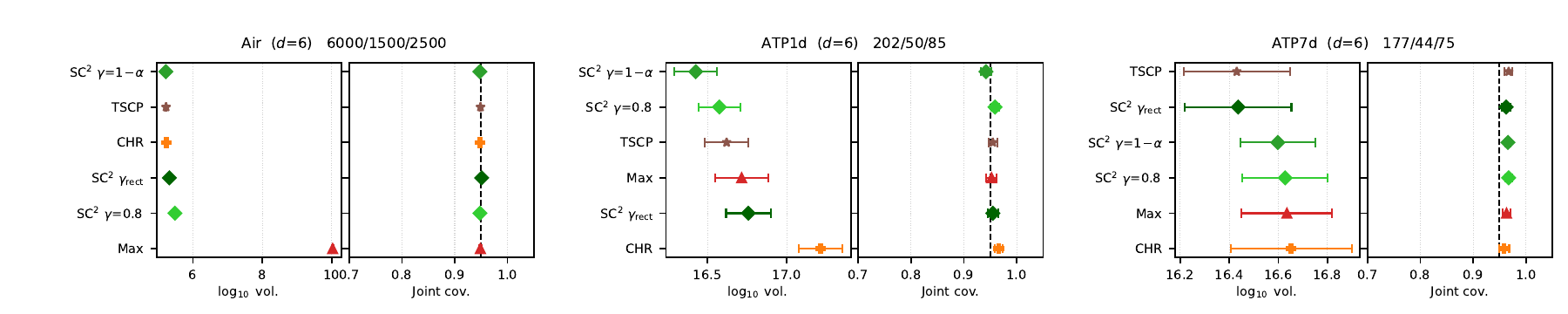}
\includegraphics[width=\textwidth]{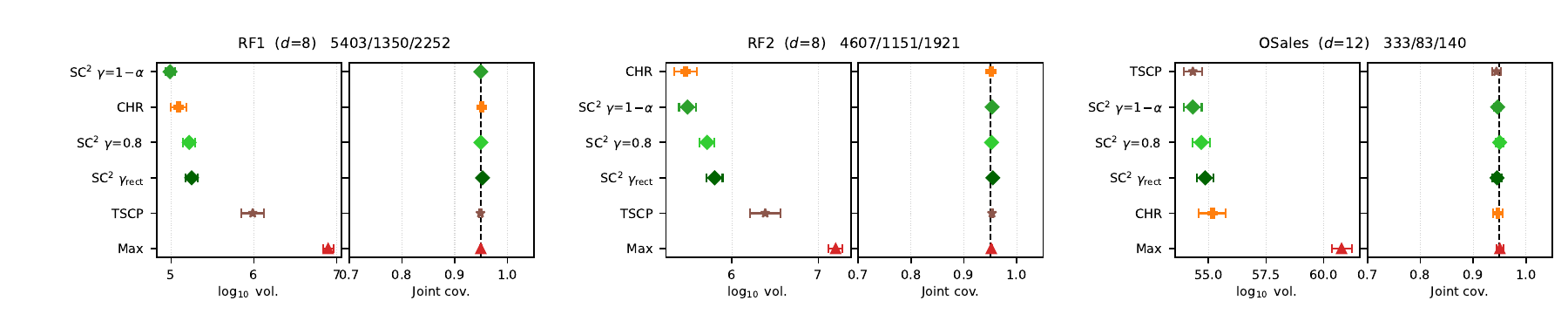}
\includegraphics[width=\textwidth]{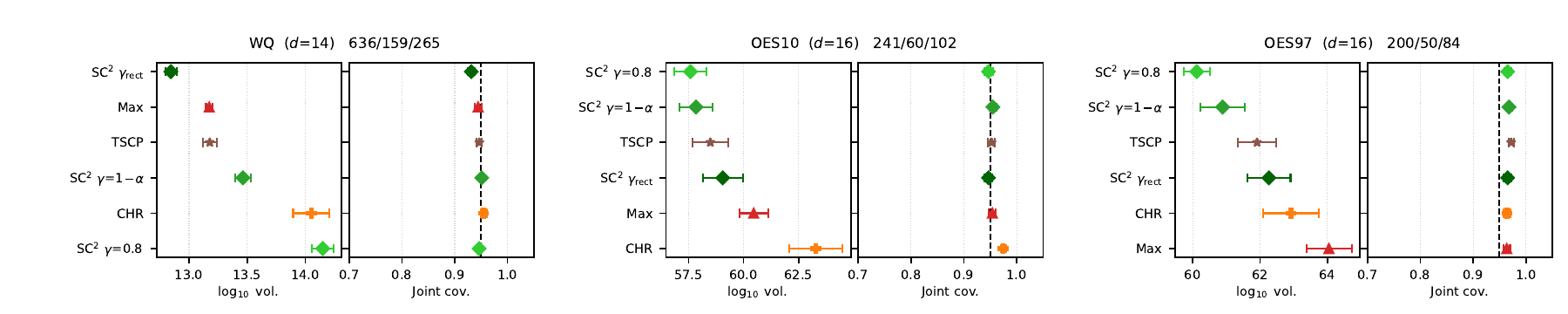}
\includegraphics[width=\textwidth]{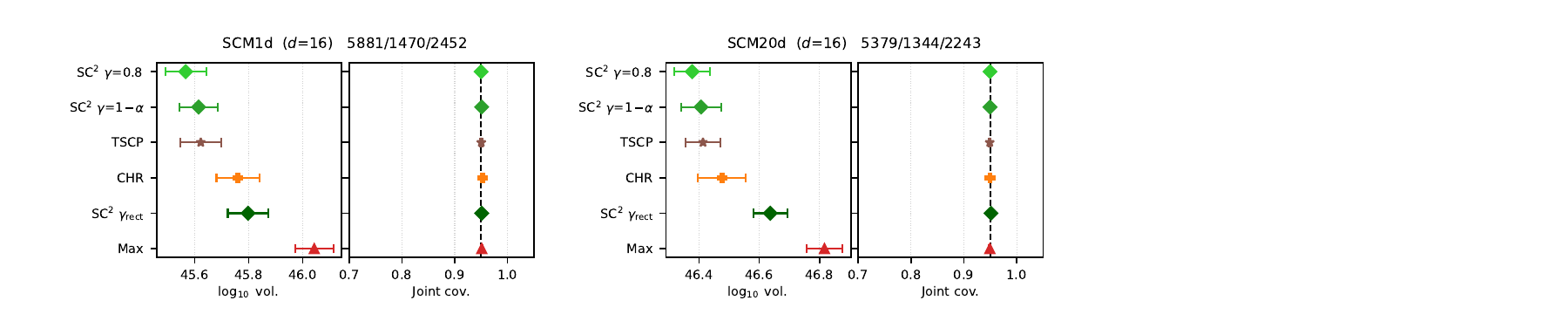}

\subsection*{Linear model, $\alpha=0.10$}
\noindent
\includegraphics[width=\textwidth]{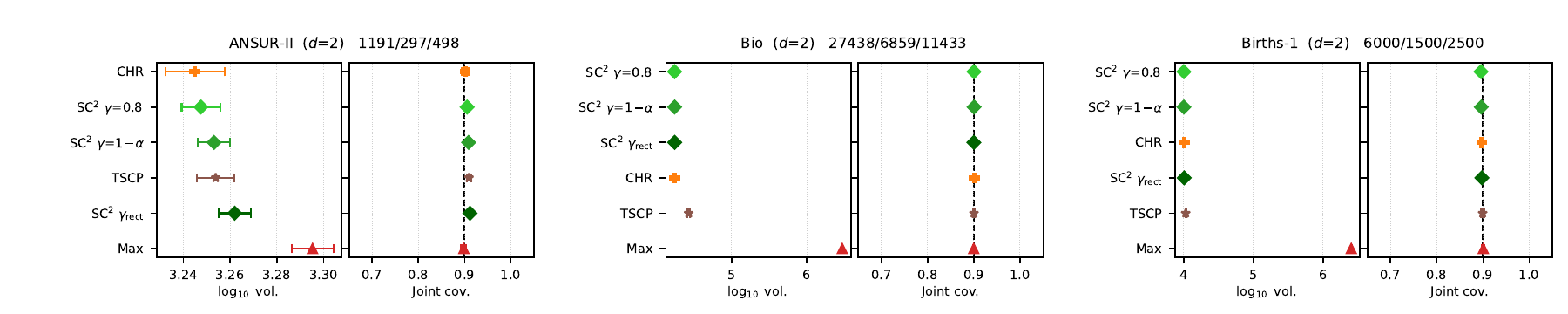}
\includegraphics[width=\textwidth]{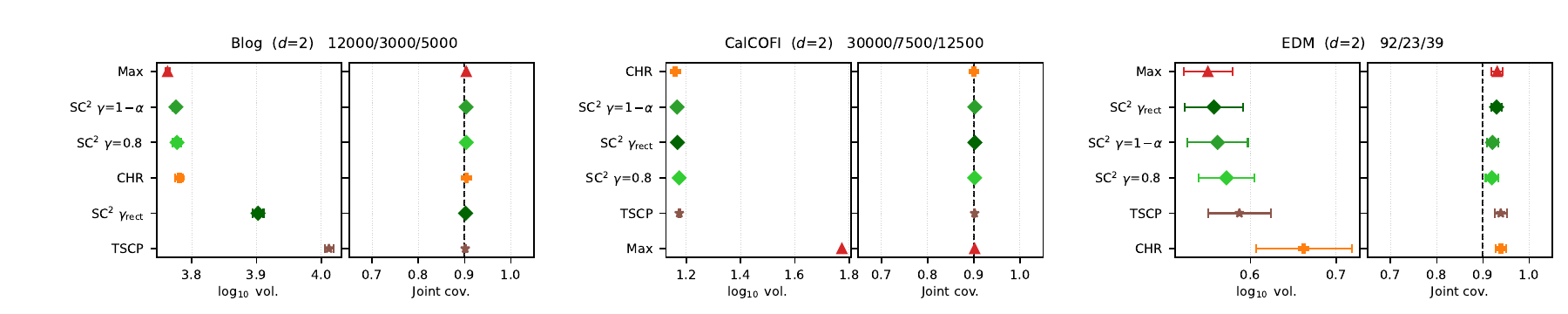}
\includegraphics[width=\textwidth]{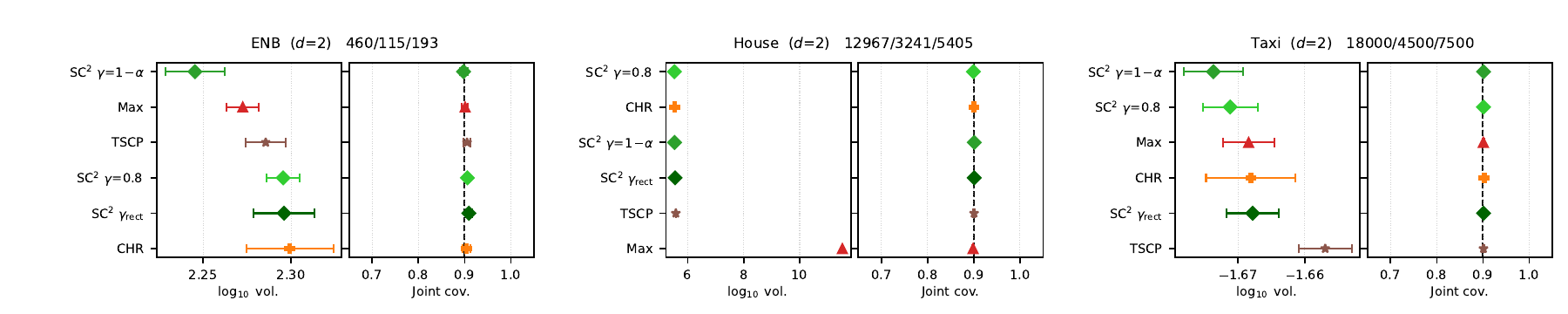}
\includegraphics[width=\textwidth]{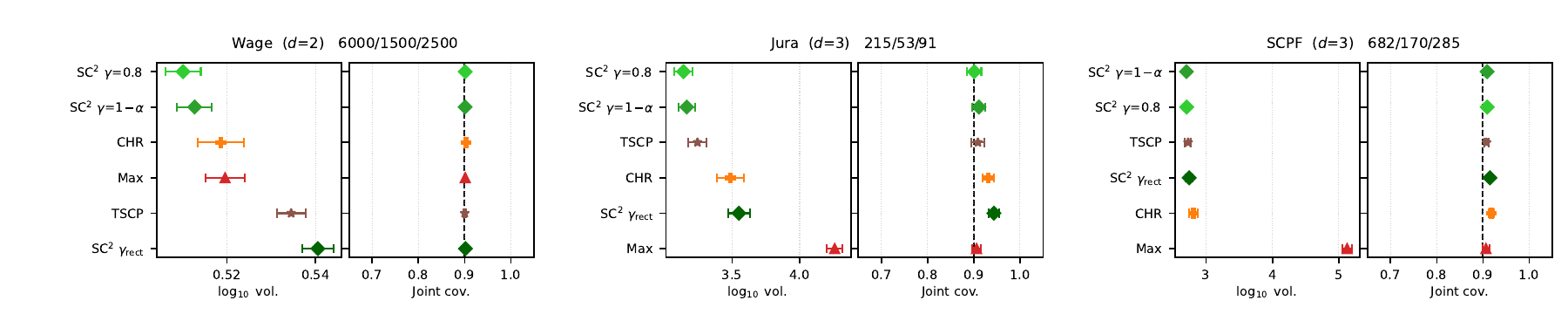}
\includegraphics[width=\textwidth]{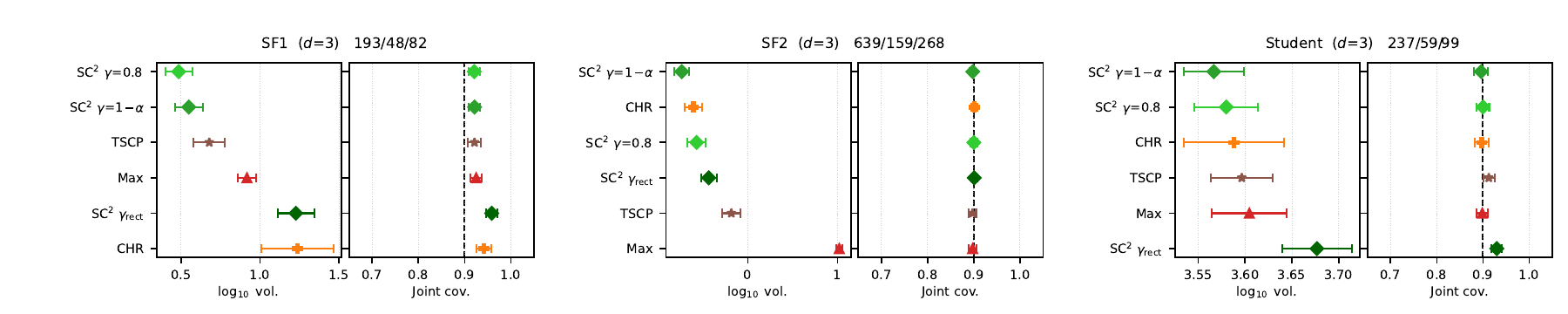}
\includegraphics[width=\textwidth]{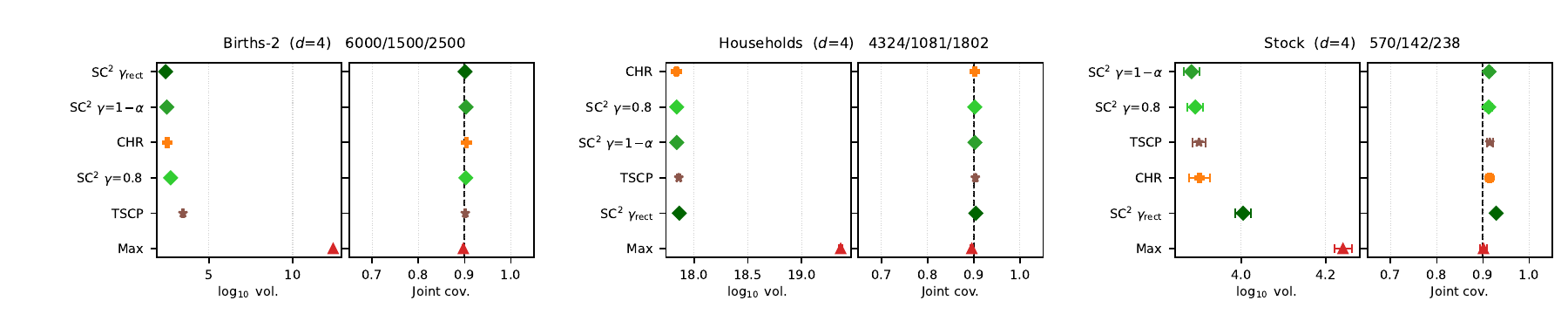}
\includegraphics[width=\textwidth]{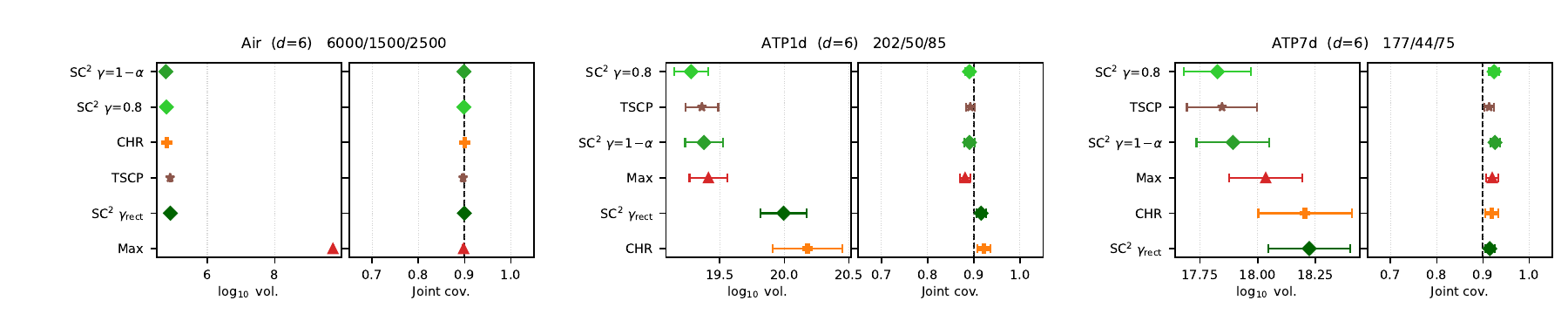}
\includegraphics[width=\textwidth]{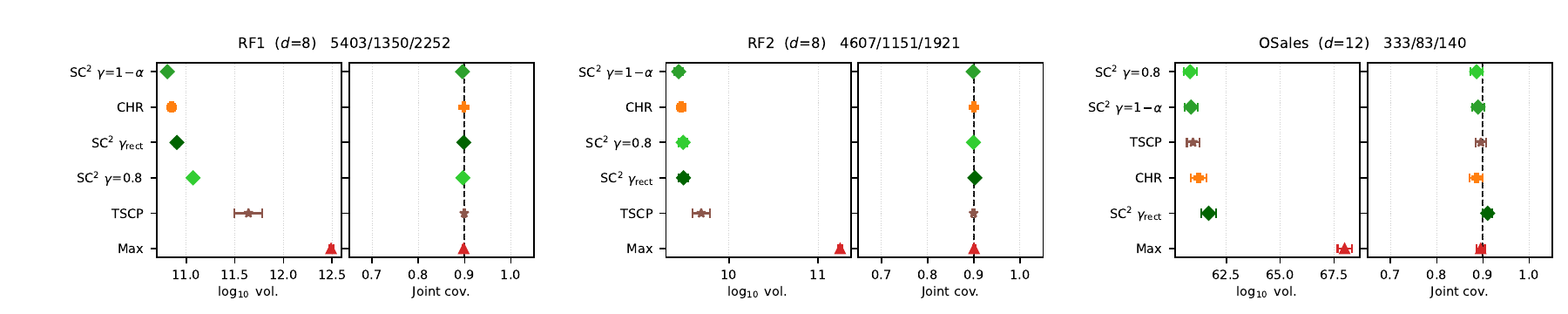}
\includegraphics[width=\textwidth]{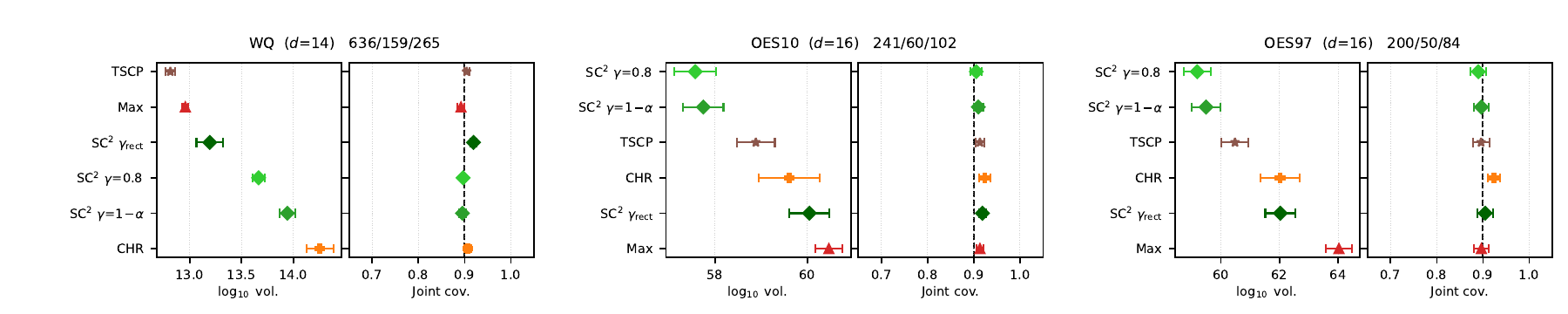}
\includegraphics[width=\textwidth]{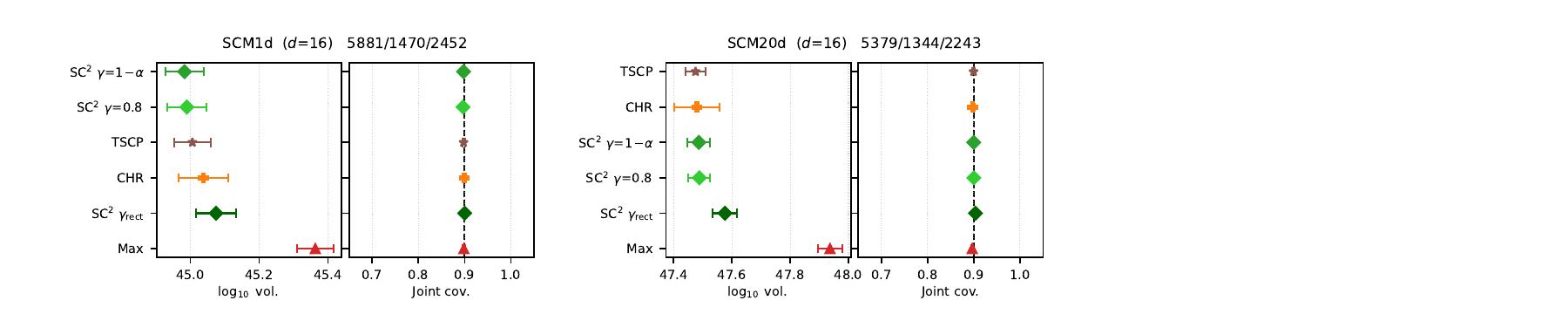}

\subsection*{Linear model, $\alpha=0.05$}
\noindent
\includegraphics[width=\textwidth]{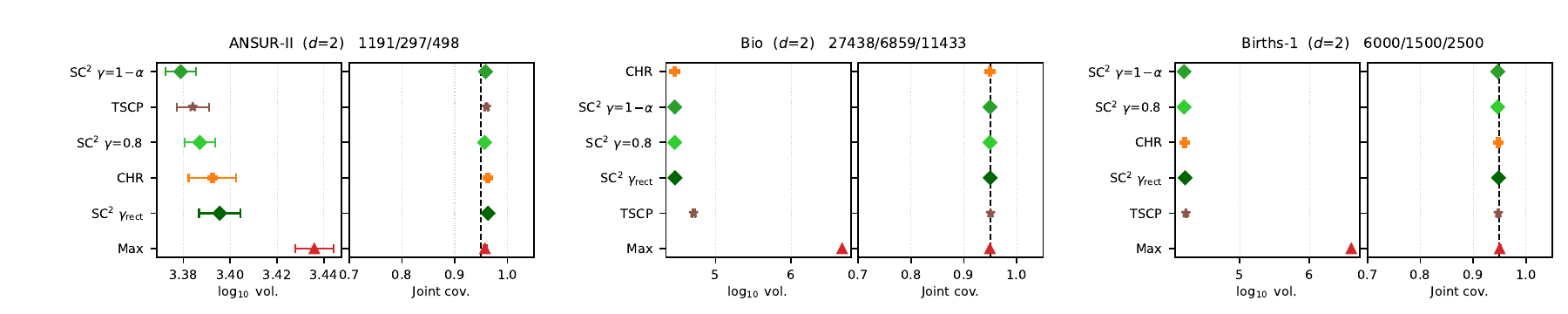}
\includegraphics[width=\textwidth]{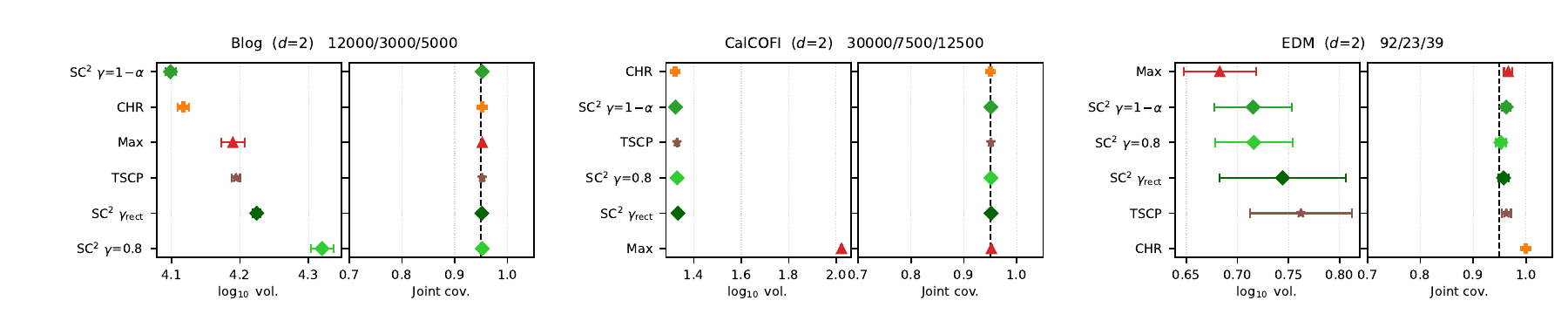}
\includegraphics[width=\textwidth]{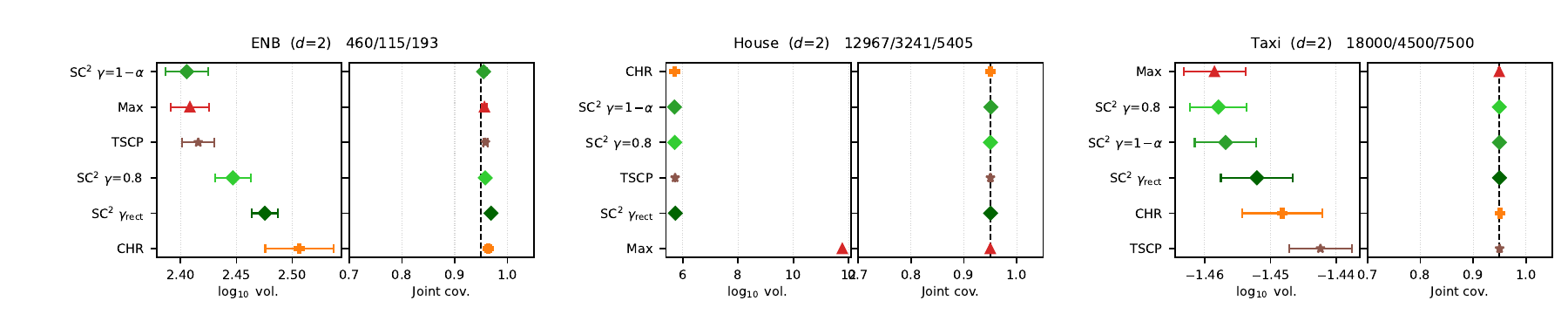}
\includegraphics[width=\textwidth]{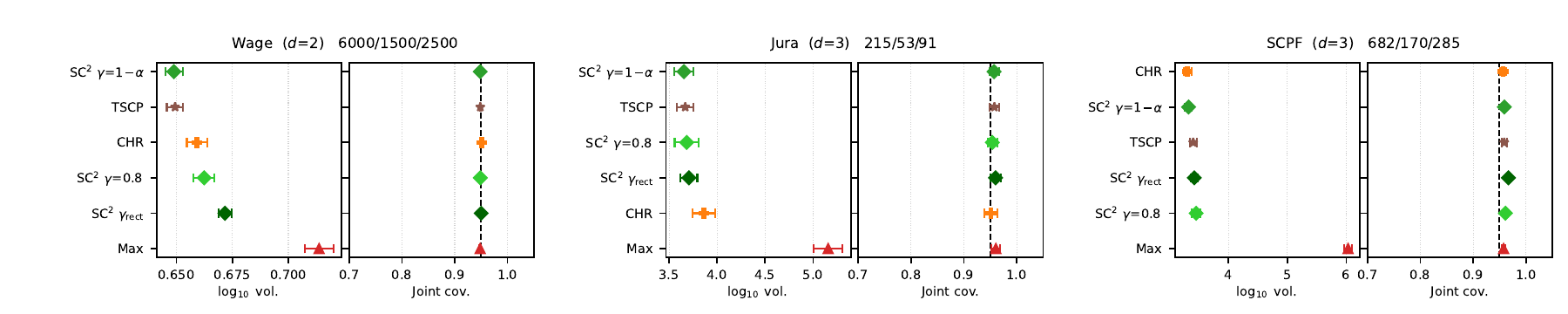}
\includegraphics[width=\textwidth]{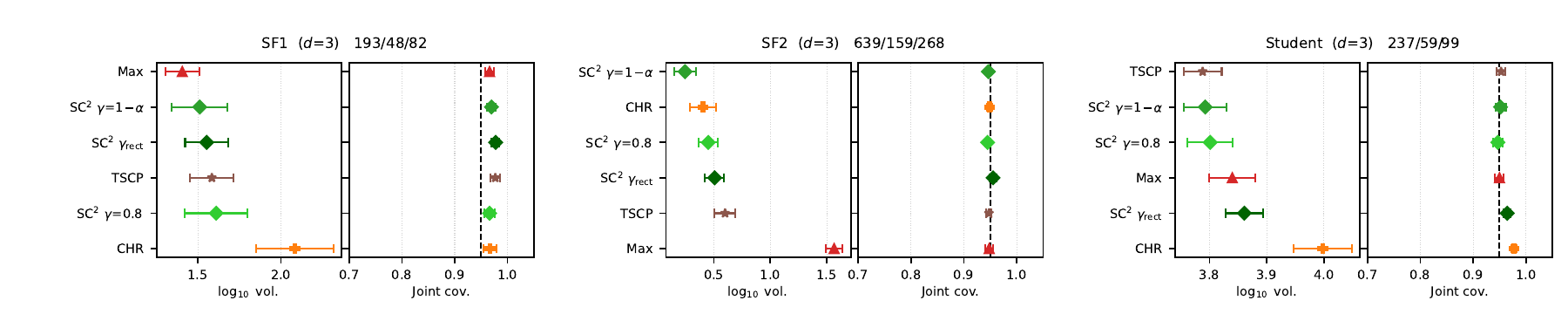}
\includegraphics[width=\textwidth]{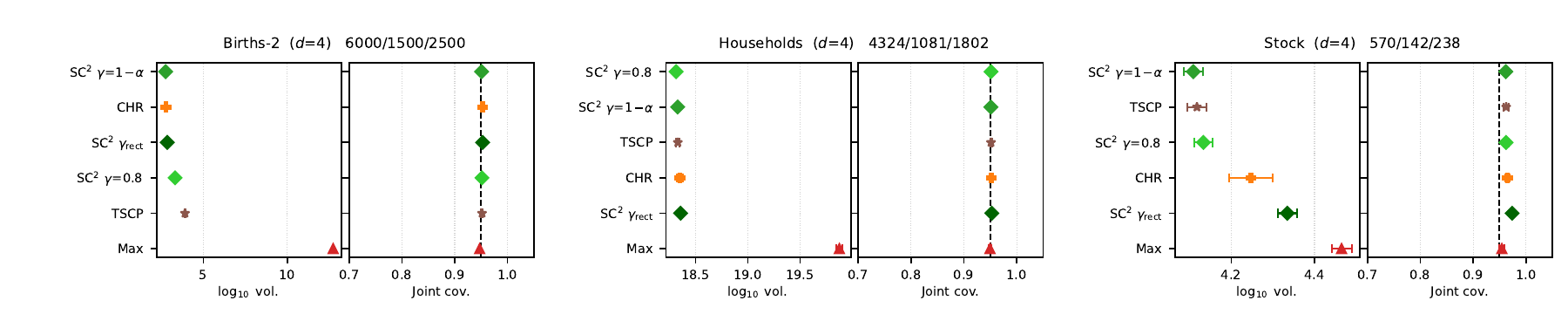}
\includegraphics[width=\textwidth]{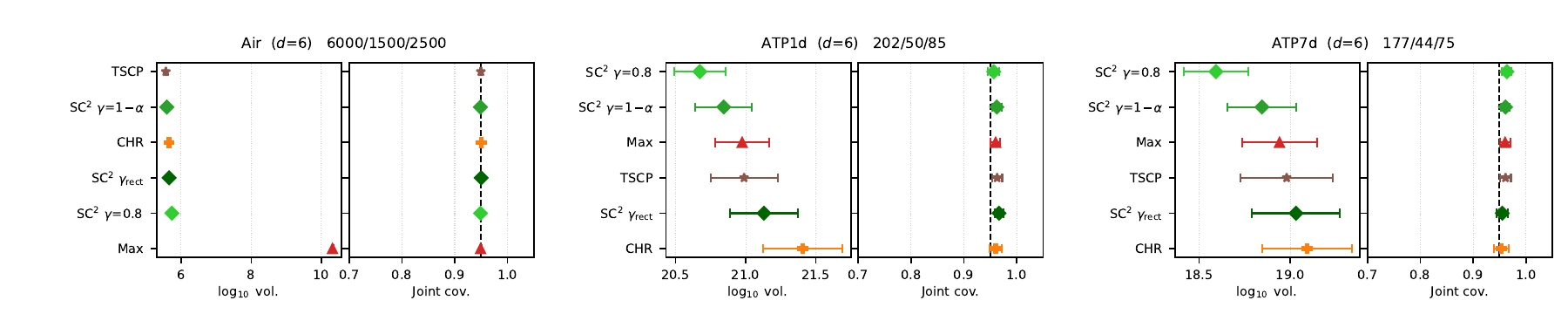}
\includegraphics[width=\textwidth]{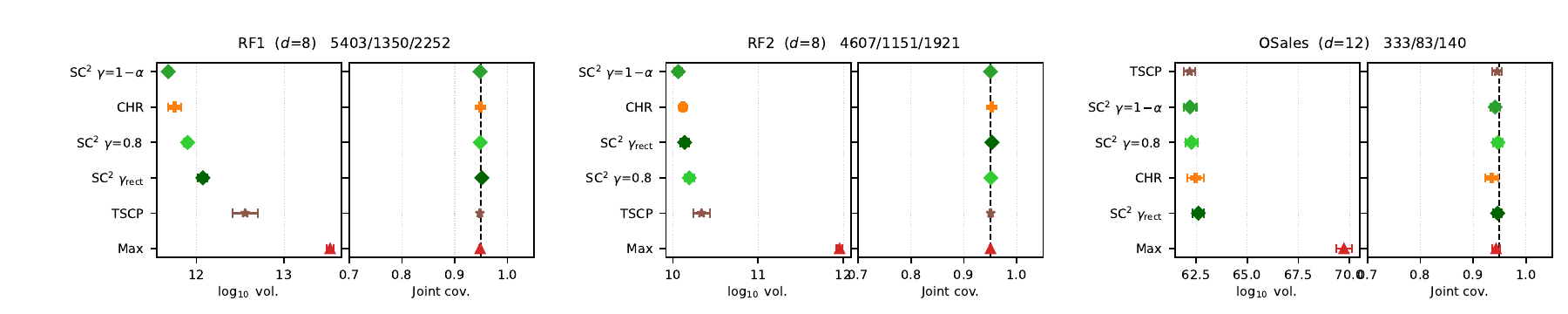}
\includegraphics[width=\textwidth]{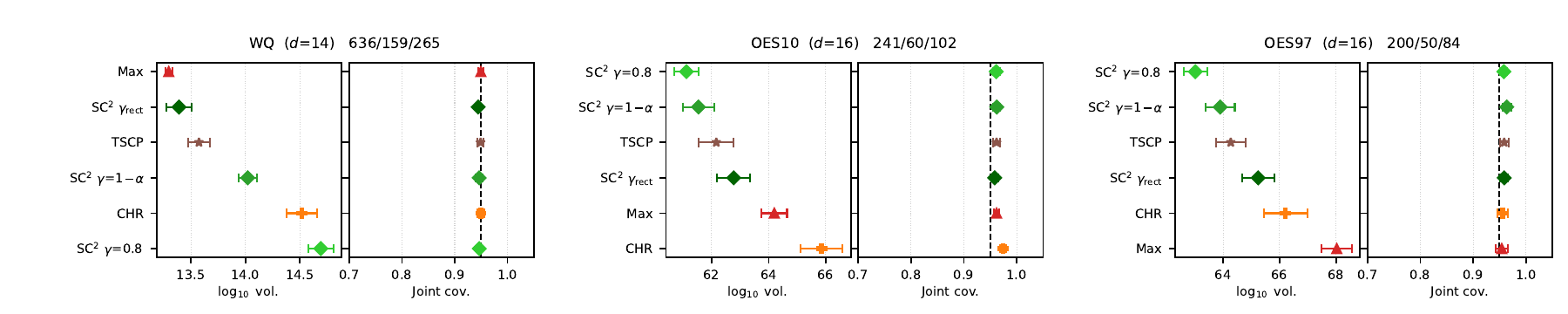}
\includegraphics[width=\textwidth]{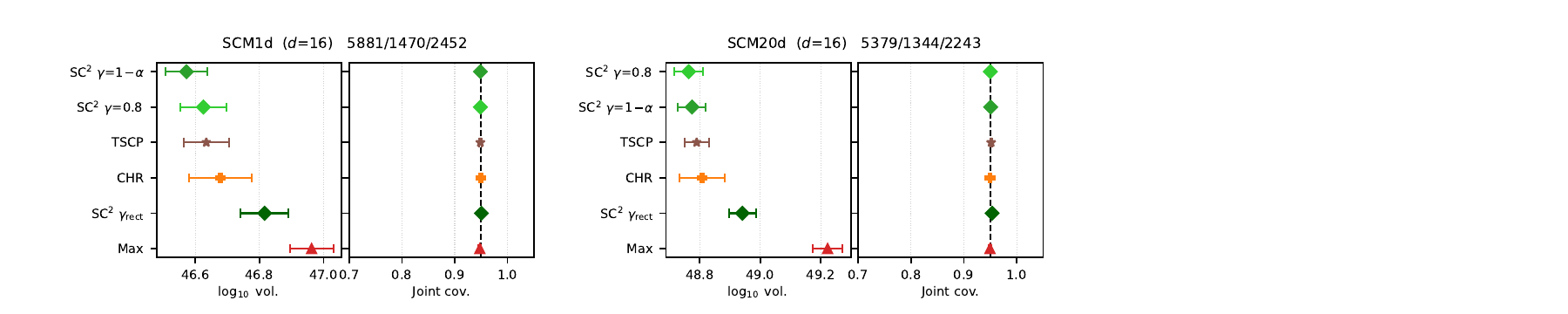}

\end{document}